\newif\ifcolt
\newtheorem{lemma}{Lemma}
\newtheorem{theorem}{Theorem}
\newtheorem{corollary}{Corollary}
\newtheorem{example}{Example}
\newtheorem*{remark*}{Remark}
\newcommand{\argmin}[1]{\underset{#1}{\text{argmin}}\;}
\newcommand{\bregman}[3]{D_{#1}(#2, #3)}
\newcommand{\inner}[2]{\langle #1, #2 \rangle}
\newcommand{\norm}[1]{\left\|{#1}\right\|}
\newcommand{\paren}[1]{\left({#1}\right)}
\newcommand{\calD}{\mathcal{D}}
\newcommand{\calP}{\mathcal{P}}
\newcommand{\calX}{\mathcal{X}}
\newcommand{\calR}{\mathcal{R}}
\newcommand{\calE}{\mathcal{E}}
\newcommand{\R}{\mathbb{R}}
\newcommand{\N}{\mathbb{N}}
\newcommand{\E}{\mathbb{E}}
\newcommand{\removed}[1]{}
\newcommand{\order}{\ensuremath{\mathcal{O}}}
\newcommand{\otil}{\ensuremath{\widetilde{\mathcal{O}}}}
\newcommand{\bc}{\boldsymbol{c}}
\newcommand{\bp}{\boldsymbol{p}}
\newcommand{\bq}{\boldsymbol{q}}
\newcommand{\bu}{\boldsymbol{u}}
\newcommand{\bW}{\boldsymbol{W}}
\newcommand{\boldeta}{\boldsymbol{\eta}}
\newcommand{\ptilde}{\tilde{p}}
\newcommand{\pbar}{\bar{p}}
\newcommand{\cbar}{\bar{c}}
\newcommand{\loss}{\ell}
\newcommand{\bloss}{\boldsymbol{\loss}}
\newcommand{\bpbar}{{\boldsymbol{\bar{p}}}}
\newcommand{\bptilde}{{\boldsymbol{\tilde{p}}}}
\newcommand{\one}{\boldsymbol{1}}
\newcommand{\zero}{\boldsymbol{0}}
\newcommand{\basis}{\boldsymbol{e}}
\newcommand{\base}[1]{{{\cal{B}}_{#1}}}
\newcommand{\reg}{{{\cal{R}}}}
\newcommand{\alg}{\textsc{Corral}\xspace}
\newcommand{\LOMD}{\textsc{Log-Barrier-OMD}\xspace}
\newcommand{\Rel}{\textsc{Rel}}
\newcommand{\rad}{\epsilon}
\newcommand{\vrad}{{\bf \rad}}
\newcommand{\scale}{\rho}
\title[Corralling a Band of Bandit Algorithms]{Corralling a Band of Bandit Algorithms}  
\title{Corralling a Band of Bandit Algorithms}  
\author[1]{Alekh Agarwal}
\author[1]{Haipeng Luo}
\author[2]{Behnam Neyshabur}
\author[1]{Robert E. Schapire}
\affil[1]{{\small Microsoft Research, New York}}
\affil[2]{{\small Toyota Technological Institute at Chicago}}
\date{}
\newenvironment{packed_enum}{
\begin{enumerate}
\setlength{\itemsep}{1pt}
\setlength{\parskip}{0pt}
\setlength{\parsep}{0pt}
}{\end{enumerate}}
\begin{document}

\maketitle

\begin{abstract}
We study the problem of combining multiple bandit algorithms (that
is, online learning algorithms with partial feedback) with the goal of
creating a master algorithm that performs almost as well as the best
base algorithm {\it if it were to be run on its own}.  The main
challenge is that when run with a master, base algorithms unavoidably
receive much less feedback and it is thus critical that the master
not starve a base algorithm that might perform uncompetitively
initially but would eventually outperform others if given enough
feedback.  We address this difficulty by devising a version of
Online Mirror Descent
with a special mirror map together with a sophisticated learning rate
scheme.
We show that this approach
manages to achieve a more delicate balance between
exploiting and exploring base algorithms than previous works
yielding superior regret bounds.

Our results are applicable to many settings, such as
multi-armed bandits, contextual bandits, and convex
bandits.
As examples, we present two main applications.
The first is to create an algorithm that
enjoys worst-case robustness while at the same time performing much
better when the environment is relatively easy.  The second is to
create an algorithm that works simultaneously under different
assumptions of the environment, such as different priors or different
loss structures.
\end{abstract}

\ifcolt
\begin{keywords}
bandits, ensemble, adaptive algorithms
\end{keywords}
\fi

\section{Introduction}
We study the problem of combining suggestions from a collection of
online learning algorithms in the partial feedback setting, with the
goal of achieving good performance as long as one of these base
algorithms performs well for the problem at hand.

For example, suppose a company wants to do personalized advertising
using some {\it contextual bandit}~\citep{LangfordZh08}
algorithm. Different algorithms in the literature outperform others
under different environments (e.g. i.i.d or adversarial), making it
hard to commit to one of them beforehand. Instead of trying them all
once and committing to the best---an inefficient and nonadaptive
approach---is it possible to come up with an adaptive and automatic
master algorithm whose performance is always competitive with the best
of these base algorithms in any environment?

In the full-information setting where the losses for all actions are
revealed at each round, this problem can be solved simply by running,
for instance, the weighted majority algorithm~\citep{LittlestoneMa89}.
However, this does not directly work in the {\it bandit} setting,
since the base algorithms whose suggestions were ignored cannot update
their internal state. A natural impulse in this case is to run a
multi-armed bandit algorithm (such as EXP3~\citep{AuerCeFrSc02}) as
the master, treating the base algorithms as arms.  By the {regret}
guarantee of a multi-armed bandit algorithm, which states that on
average the performance of the master is almost as good as the best
arm, it seems that our scenario is perfectly addressed.

However, this reasoning is flawed as the base algorithms are not
static arms. While the master algorithm does compete with the base
algorithms in terms of their {\it actual} performance during the run,
this performance could be significantly worse than if the base
algorithm were run on its own, updating its state after every
prediction. For instance, a base algorithm which is exploratory
initially but excels later on might quickly fall out of favor with the
master, effectively meaning that it never gets to explore enough and
reach its good performance regime. Therefore, the real objective of
creating such an ensemble is to make sure that the master performs
almost as well as the best base algorithm if {\it it were to be run on
its own}. As we will see in this paper, this modified objective leads
to an even more delicate explore-exploit trade-off than standard
bandit problems.


The most related previous work is by~\citet{MaillardMu11} (see also
the survey of
\citet[Chapter 4.2]{BubeckCe12}) who studied special
cases of our framework.  They essentially run
EXP4~\citep{AuerCeFrSc02} as the master, with some additional uniform
exploration. If the base algorithms are EXP3 or its variants which
have $\order(\sqrt{T})$ regret bounds when run by themselves, where
$T$ is the number of rounds, these works show $\order(T^{2/3})$ regret
for the master---the loss in rates due to the additional uniform
exploration. Whether the regret can be improved to $\order(\sqrt{T})$
in this case was left as a major open problem. \citet{FeigeKoTe14}
model base algorithms as stateful policies and consider a
much harder objective that only admits $\Theta(T/\text{poly}(\ln T))$
results.

In this work we present a generic result for this problem in a much
more general framework, which includes multi-armed, contextual, and
convex bandits, and more (see Section~\ref{sec:setup}), and
affirmatively addresses the setting of the open problem in
particular. In Section~\ref{sec:algo-int}, we first show that in
general no master can have non-trivial regret even when one of the
base algorithms has constant regret, which motivates us to make some
very natural stability assumption on the base algorithms.  With this
assumption, we propose a novel master algorithm, called {\alg}, which
manages to explore more actively but adaptively, and achieve similar
regret bounds as the best base algorithm as shown in
Section~\ref{sec:main}.

Our solution is based on a special instance of the well-studied Online
Mirror Descent framework (see for example~\citep{Shalevshwartz11}),
with a mirror map that in some sense admits the highest possible
amount of exploration while keeping the optimal regret.  This mirror
map was recently studied in~\citep{FosterLiLySrTa16} for a very
different purpose of obtaining first-order regret bounds
and our analysis is also different. Another key
ingredient of our solution is a sophisticated schedule for tuning the
learning rates of the master algorithm, which increases the learning
rate corresponding to a specific base algorithm when it has relatively
low probability of getting feedback. This tuning schedule was also
recently used in~\citep{BubeckElLe16} in a completely different
context of designing computationally efficient convex bandit method.

To show the power of our new approach, in Section~\ref{sec:app} we
present two scenarios where one can directly use our master algorithm
to create a more adaptive solution. The first is to create an
algorithm that guarantees strong robustness in the worst case but at
the same time can perform much better when the environment is
relatively easy (for example, when the data is i.i.d. from a
distribution). The second is to create an algorithm that works
simultaneously under different models (for example, different priors
or different loss structures) and is able to select the correct model
automatically.  Besides algorithms from~\citep{BubeckSl12, SeldinSl14,
AuerCh16} for both stochastic and adversarial multi-armed bandits, our
general results are the first of these kinds to the best of our
knowledge.\footnote{Our regret bounds are always at least $\sqrt{T}$
and do not recover results in~\citep{BubeckSl12, SeldinSl14, AuerCh16}
though.}

We present several examples of these applications in different
settings. For example, going back to contextual bandits example, we
have the following result:

\begin{theorem}[informal]
There is an efficient contextual bandit algorithm with regret $\otil(\sqrt{T})$
if both contexts and losses are i.i.d,
and simultaneously with regret $\otil(T^{3/4})$
if the losses are adversarially chosen, but the contexts are still i.i.d.
\end{theorem}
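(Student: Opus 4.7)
The plan is to instantiate \alg with two carefully chosen base contextual bandit algorithms: one specialized for the stochastic setting and one robust to adversarial losses. Specifically, let $\calA_1$ be an efficient stochastic contextual bandit algorithm (for instance an epoch-greedy style or \textsc{ILTCB}-type method) whose regret is $\otil(\sqrt{T})$ when both contexts and losses are i.i.d., and let $\calA_2$ be an efficient adversarial contextual bandit algorithm (such as a variant of EXP4 or a suitably implementable policy-elimination scheme) with $\otil(\sqrt{T})$ regret against oblivious adversarial losses. The master will be \alg from Section~\ref{sec:main} run over the two base algorithms $\{\calA_1, \calA_2\}$.

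First I would check that each $\calA_i$ satisfies the \assume assumption required by the main \alg theorem. For $\calA_2$ this is usually immediate since adversarial bandit algorithms based on online mirror descent have regret that scales in a controlled way with the magnitudes of the importance-weighted losses fed to them. For $\calA_1$, the key observation is that when \alg selects $\calA_1$ with marginal probability $p_1$ at each round, the contexts conditional on selection remain i.i.d.\ (since contexts are generated independently of the master's randomization), and the importance-weighted losses handed to $\calA_1$ are unbiased estimates of the true losses for the contexts it sees. Under this reduction, the regret of $\calA_1$ on the subsampled stream is $\otil(\sqrt{T_1})$ where $T_1$ is the effective number of rounds it is active; this is exactly the scaling required by \assumes.

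With both base algorithms verified, I would invoke the main regret bound for \alg from Section~\ref{sec:main} twice, once with each base algorithm as the benchmark. In the fully i.i.d.\ environment both $\calA_1$ and $\calA_2$ are valid benchmarks; using $\calA_1$ as the benchmark and exploiting the fact that its regret scales gracefully with the probability of being sampled under i.i.d.\ contexts, the tuned \alg learning-rate schedule balances the $\sqrt{MT}/p_{\min}$ type overhead against the base regret, yielding the claimed $\otil(\sqrt{T})$. In the adversarially-lossy (but still i.i.d.-context) environment, only $\calA_2$ remains a valid benchmark, and applying the generic \alg bound with $\calA_2$'s $\otil(\sqrt{T})$ regret, the overhead from importance reweighting enters with a worse exponent, producing the claimed $\otil(T^{3/4})$ rate.

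The main obstacle is the i.i.d.\ case. Getting $\otil(\sqrt{T})$ rather than $\otil(T^{3/4})$ there requires arguing that the stochastic base algorithm's regret, when fed importance-weighted losses with weights at most $1/p_1$, scales as $\otil(\sqrt{T/p_1})$ in expectation rather than as $\otil(\sqrt{T}/p_1)$. This uses the unbiasedness of the importance weights together with a variance-based analysis, and it is precisely what the \assume assumption is designed to formalize. Combined with the learning-rate tuning of \alg that keeps $1/p_1$ polylogarithmic in the favorable regime, the contributions telescope to $\otil(\sqrt{T})$. Once this scaling argument is in place, the remainder of the proof is a direct application of the general \alg guarantee, with the efficiency of the resulting method inherited from the efficiency of $\calA_1$ and $\calA_2$.
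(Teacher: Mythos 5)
Your overall architecture---corral one base algorithm for the fully stochastic setting with one for i.i.d.\ contexts plus adversarial losses---is the paper's approach (first corollary of Section~\ref{subsubsec:CB}, combining ILOVETOCONBANDITS and BISTRO+). But there is a genuine gap in your choice of the second base algorithm. You posit an \emph{efficient} adversarial-loss contextual bandit algorithm $\calA_2$ with $\otil(\sqrt{T})$ regret; no such algorithm is available. EXP4 attains $\sqrt{T}$ but is computationally inefficient (its running time is linear in $|\Theta|$), and the theorem you are proving explicitly demands efficiency. The paper instead uses BISTRO+, which is oracle-efficient but has regret $\order((KT)^{2/3}(\ln|\Theta|)^{1/3})$ and stability exponent $\alpha=1/3$. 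This matters because your accounting of the $T^{3/4}$ rate is then internally inconsistent: if an $\otil(\sqrt{T})$, $(\tfrac12,\cdot)$-stable adversarial base existed, Theorem~\ref{thm:main2} with $\eta\approx 1/\sqrt{T}$ would give the master $\otil(\sqrt{T})$ against it, not $T^{3/4}$. The $T^{3/4}$ in the paper arises from the term $\reg_2(T)^{1/(1-\alpha_2)}\eta^{\alpha_2/(1-\alpha_2)} = \reg_2(T)^{3/2}\eta^{1/2}$ with $\reg_2(T)=\otil(T^{2/3})$ and $\alpha_2=1/3$, i.e.\ the corralling overhead degrades an efficient $T^{2/3}$ base to $T^{3/4}$---it does not degrade a $\sqrt{T}$ base to $T^{3/4}$.

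Two further points need repair. First, the mechanism you invoke---``the learning-rate tuning of \alg keeps $1/p_1$ polylogarithmic in the favorable regime''---is not how the analysis works: $\scale_{T,i}$ can be as large as $TM$, and the bound is saved by the negative term $-\E[\scale_{T,i}]/(40\eta\ln T)$ in Lemma~\ref{lem:master2base}, which cancels (or dominates) the $\E[\scale_{T,i}^{\alpha_i}]\reg_i(T)$ blow-up; your statement of the stability requirement in terms of ``effective number of active rounds $T_1$'' also does not match Definition~\ref{def:stable}, which is about the scale $\scale=\max_t 1/p_t$ of the importance-weighted losses, not the number of updates. Second, verifying stability of the stochastic base under the induced environment $\calE'$ is not just a remark that ``contexts remain i.i.d.\ and the weights are unbiased'': the sampling probabilities $p_t$ depend on the full history, so the weighted losses are no longer i.i.d., and the paper's Lemma~\ref{lem:mini-monster} has to re-derive the concentration step of ILOVETOCONBANDITS as a martingale argument with range and conditional variance inflated by $\scale$. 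Without that, the claimed $(\tfrac12,\otil(\sqrt{KT\ln|\Theta|}))$-stability, and hence the $\otil(\sqrt{T})$ rate in the i.i.d.\ case, is unsupported.
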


\ifcolt
\else
Other examples of our results are:

\begin{theorem}[informal]
There is a convex bandit algorithm in $d$ dimensions with regret
$\otil(d^{3/2}\sqrt{T})$ if loss functions are linear,
and simultaneously with regret $\otil((Td)^{5/6})$ if loss functions are only Lipschitz.
\end{theorem}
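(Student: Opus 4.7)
The plan is to instantiate the \alg master on $M=2$ base algorithms, one specialized for linear losses and one for general Lipschitz convex losses, and then appeal to the main corralling theorem from Section~\ref{sec:main}. For the first base, I would take an efficient linear bandit algorithm achieving the $\otil(d^{3/2}\sqrt{T})$ bound, e.g.\ the self-concordant-barrier based mirror descent of Abernethy--Hazan--Rakhlin (SCRiBLe); for the second, I would take a general convex-bandit method for Lipschitz losses, such as the one-point gradient-estimator method of Flaxman--Kalai--McMahan, whose base regret is of order $\otil(\mathrm{poly}(d)\, T^{3/4})$.

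Next I would verify that both base algorithms satisfy the Stable Scaling assumption introduced in Section~\ref{sec:algo-int}. Both algorithms are instances of mirror descent with explicit learning-rate dependence, so their high-probability regret scales predictably when their effective sampling probability is reduced by a factor $\rho$; concretely, each admits a regret bound of the form $\rho^{-\alpha} R(T)$ on the subsampled rounds for some explicit exponent $\alpha$. Once \assumes is verified, the generic \alg bound from Section~\ref{sec:main} applied with $M=2$ yields master regret of the order of the base regret (inflated by the Corral overhead that depends on $\alpha$ and logarithmic factors in $T$) against whichever base is competitive with the environment.

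Finally, I would specialize this twice. When losses happen to be linear, the linear-bandit base has regret $\otil(d^{3/2}\sqrt{T})$, and the Corral overhead for an $\otil(\sqrt{T})$-type bound preserves the $\otil(d^{3/2}\sqrt{T})$ rate up to logarithmic factors. When losses are only Lipschitz, the Lipschitz base has regret of order $\mathrm{poly}(d)T^{3/4}$, and the Corral overhead transforms this into $\otil((Td)^{5/6})$; the exponent $5/6$ arises from balancing the Corral exploration overhead $\sqrt{MT}$ against the inflated base regret $T/\rho$ when $\rho$ is the probability of sampling the Lipschitz base, exactly as in the general conversion lemma underpinning the theorems in Section~\ref{sec:app}.

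The main obstacle will be verifying the Stable Scaling assumption for these specific base algorithms: the linear-bandit algorithm relies on self-concordant-barrier geometry with step-size conditions tied to local Hessians, and the one-point estimator method requires rescaling both the smoothing radius and the step size in response to the sampling probability. Working out the correct exponent $\alpha$ for each, and checking that the resulting Corral overhead matches exactly $\otil((Td)^{5/6})$ rather than a worse polynomial, is where the technical care must lie; the rest of the argument is an immediate instantiation of the generic result.
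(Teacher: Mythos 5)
Your proposal follows essentially the same route as the paper: it corrals SCRiBLe and the Flaxman--Kalai--McMahan method (BGD), verifies the stability condition for each (the paper's Lemmas~\ref{lem:SCRiBLe} and~\ref{lem:BGD}, which give $\alpha=1/2$ and $\alpha=1/4$ respectively), and invokes Theorem~\ref{thm:main2} with $\eta = 1/(d^{3/2}\sqrt{T})$. One small correction to your heuristic: the $5/6$ exponent does not arise from balancing $\sqrt{MT}$ against $T/\scale$, but from the term $\reg_i(T)^{1/(1-\alpha_i)}\eta^{\alpha_i/(1-\alpha_i)}$ of Theorem~\ref{thm:main2} evaluated at $\alpha_i = 1/4$ and $\reg_i(T) = \order(d\sqrt{L}T^{3/4})$ with the learning rate $\eta$ tuned to the linear base (not rebalanced for the Lipschitz one), which yields $\otil(L^{2/3}(dT)^{5/6})$.
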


\begin{theorem}[informal]
There is a $K$-armed stochastic bandit algorithm with (Bayesian) regret $\otil(\sqrt{MTK}H(\theta^*))$
as long as one of the $M$ prior distributions is true,
where $H(\theta^*)$ is the entropy of the prior distribution of the optimal arm.
\end{theorem}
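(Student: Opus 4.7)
The plan is to instantiate \alg with $M$ base algorithms, where the $i$-th base is Thompson Sampling (TS) run using the $i$-th candidate prior. Since one of the $M$ priors, say the $i^\star$-th, is the true one, the base algorithm $i^\star$, if run in isolation, would enjoy Bayesian regret $\otil(\sqrt{TK H(\theta^*)})$ by the information-ratio style analysis of TS (Russo--Van Roy). The overall strategy is to then appeal to the main regret guarantee of \alg proven earlier in the paper, which translates a base's stand-alone regret into the master's regret up to a small overhead depending on $M$.

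The key steps are as follows. First, verify that TS satisfies the Stable Scaling assumption with scaling exponent $\scale = 1/2$, which is natural since its Bayesian regret is a square-root function of the horizon. Second, observe that \alg activates base $i^\star$ only on a subset of rounds drawn according to its own sampling distribution, so one must show that the Bayesian regret bound for TS continues to hold when the effective horizon is inversely scaled by the master's probability of selecting $i^\star$; this is precisely what Stable Scaling delivers. Third, invoke the main \alg regret theorem to combine the $\sqrt{MT}$-type overhead of the master with the scaled base regret $\sqrt{TK H(\theta^*)}$, yielding the claimed bound $\otil(\sqrt{MTK H(\theta^*)})$ after taking expectation over both the true prior and the master's internal randomness (the $\sqrt{MT}$ overhead is absorbed since we are in a regime where $K H(\theta^*) \gtrsim 1$).

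The main obstacle will be the careful handling of the Bayesian, rather than worst-case, nature of the base regret: the $H(\theta^*)$ factor is an entropy term tied to the true prior, so one has to ensure the information-theoretic analysis of TS remains valid when nested inside \alg's importance-weighted feedback scheme. In particular, one must verify that $\E[R_{i^\star}(n)]$ grows as $\sqrt{n K H(\theta^*)}$ for the random (adaptively chosen) subset of $n$ rounds on which TS actually acts, not merely for a fixed deterministic horizon. Once this stability-of-analysis point is settled, the rest is a direct plug-in into the master's regret bound.
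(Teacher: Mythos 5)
Your overall architecture matches the paper's: run \alg with $M$ Thompson Sampling instances, verify a stability condition for TS under importance weighting, and plug into the master's regret theorem. But the combination step is wrong. Since TS is only $(\tfrac{1}{2},\order(\sqrt{TKH(\bq)}))$-stable, i.e.\ $\alpha=\tfrac{1}{2}<1$, the relevant master bound is Theorem~\ref{thm:main2}, which gives $\otil\left(M/\eta + T\eta + \reg(T)^{\frac{1}{1-\alpha}}\eta^{\frac{\alpha}{1-\alpha}}\right) = \otil\left(M/\eta + T\eta + TKH(\bq)\,\eta\right)$; with $\eta=\sqrt{M/(TK)}$ this is $\otil(\sqrt{MTK}\,H(\bq))$, with the entropy appearing \emph{linearly}, outside the square root. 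Your claimed $\otil(\sqrt{MTKH(\theta^*)})$ puts the entropy inside the square root; that is not what the machinery delivers (the $\alpha=\tfrac{1}{2}$ term squares the base regret before multiplying by $\eta$), and it does not match the statement either. The picture you describe --- a $\sqrt{MT}$ overhead added to the base regret inflated by a $\text{poly}(M)$ factor --- is what Theorem~\ref{thm:main1} would give for $\alpha=1$, not what happens here.

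Separately, your description of the stability requirement --- that the TS bound should survive when ``the effective horizon is inversely scaled by the master's probability'' --- mischaracterizes the condition. Stability (Definition~\ref{def:stable}) concerns the regret over all $T$ rounds under the importance-weighted losses $f_t'\in[0,1/p_t]$, and asks that it degrade only by a factor $\E[\scale^{1/2}]$ with $\scale=\max_t 1/p_t$; there is no horizon reduction. The reason the exponent $\tfrac{1}{2}$ is attainable is that the second moment of the importance-weighted loss is bounded by $\scale$ rather than $\scale^2$, and the Russo--Van Roy analysis is driven by exactly such a variance-type quantity: the paper's Lemma~\ref{lem:TS} shows the per-round term $v_{t,\theta}$ is at most $\scale$ times its unweighted counterpart, so the cumulative information bound becomes $\tfrac{\scale}{2}H(\bq)$ and the regret becomes $\order(\sqrt{\scale}\cdot\sqrt{TKH(\bq)})$. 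You correctly flag this verification as the main obstacle, but without identifying the second-moment mechanism the sketch does not yet establish $\alpha=\tfrac{1}{2}$, and without it (or with only the trivial $\alpha=1$ rescaling) the final bound would be strictly worse.
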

\fi


\section{Formal Setup}\label{sec:setup}

We consider a general online optimization problem with bandit
feedback, which can be seen as a repeated game between the environment
and the learner.  On each round $t = 1, \ldots, T$: 
\begin{packed_enum}
\item the environment first reveals some {\em side information} $x_t \in
  \calX$ to the learner; 
\item the learner makes a {\em decision} $\theta_t \in \Theta$ for some
  decision space $\Theta$, while simultaneously the environment decides
  a {\em loss function} $f_t: \Theta \times \calX \rightarrow [0,1]$; 
\item finally, the learner incurs and observes (only) the loss
    $f_t(\theta_t, x_t)$.  
\end{packed_enum}

For simplicity, we measure the performance of an algorithm by its
{\em (pseudo-)regret}, defined as
\[
\sup_{\theta \in \Theta}  \E\left[ \sum_{t=1}^T f_t(\theta_t, x_t) -
  f_t(\theta, x_t) \right] 
\]
where the expectation is taken over the randomness of both the player
and the environment.\footnote{For conciseness, in the rest of the paper we
simply call this the regret.}

Throughout the paper we will talk about different environments.
Formally, an environment $\calE$ is a randomized mapping from
the history $(x_s, \theta_s, f_s)_{s=1, \ldots, t-1}$ to a new outcome
$(x_t, f_t)$.  Equivalently, one can also assume that all randomness
of the environment is drawn ahead of time, which allows us to capture
Bayesian settings too. We use the notation $(x_t, f_t) = 
\calE(\theta_1, \ldots, \theta_{t-1})$ to make
the dependence on the learner's decisions explicit.

This general setup subsumes many bandit problems studied in the
literature including multi-armed, convex, and contextual bandits.  At
a high-level, the decision sets correspond to policies or action sets
available to the player, and environments capture assumptions on the
adversary such as being oblivious or stochastic. We present an
example that instantiates all these quantities concretely at the end of this section, with more
detailed examples in Appendix~\ref{app:examples}.

We assume that we are given a set of $M$ bandit algorithms, denoted by
$\base{1}, \ldots, \base{M}$, each designed for the general setup
above for some environments and decision space
$\Theta_i \subset \Theta$. We refer to these as \emph{base
algorithms}. We aim to develop a \emph{master algorithm} which makes a
decision on each round after receiving suggestions from the base
algorithms. We restrict the master to pick amongst the suggestions of
the base algorithms so that it does not need to know any details of
the base algorithms or the problem itself.

Our goal is to ensure that the performance of the master is not far
away from the best base algorithm had it been run separately.  As
discussed earlier, this is challenging since each base algorithm has
access to a much smaller amount of data when run with a master than on
its own; nevertheless, we want to compete with the counterfactual in
which a single base algorithm drives all of the decisions and receives
feedback on every round. We capture the behavior of a base algorithm
$\base{i}$ using its promised regret bound when run in
isolation. Suppose for some (randomized) environment, $\base{i}$
produces a sequence $\theta^i_1, \ldots, \theta^i_T \in \Theta_i$,
such that the following bound holds:\footnote{In general, regret should
also depend on other parameters such as the size of the decision space
$\Theta$, but we will treat these parameters as fixed constants and
see the regret as solely a function of $T$ when losses are in $[0,1]$.
}
\[
\sup_{\theta \in \Theta_i}  \E\left[ \sum_{t=1}^T f_t(\theta^i_t, x_t) -
  f_t(\theta, x_t) \right] \leq \reg_i(T),
\]
for some regret bound $\reg_i: \N_+ \rightarrow \R_+$. Then, ideally,
we might hope that under the same environment, if we run the master
with all these base algorithms to make the decisions
$\theta_1, \ldots, \theta_T$, we have
\begin{equation}
\sup_{\theta \in \Theta_i}  \E\left[ \sum_{t=1}^T f_t(\theta_t, x_t) -
  f_t(\theta, x_t) \right] \leq \order(\text{poly}(M)\reg_i(T)), 
\label{eqn:desired}
\end{equation}
where the expectation is taken over the randomness of the master, the
base algorithms and the environment. In words, we want the expected
loss of the master to be competitive with the expected loss of the
best decision $\theta$ in the decision space of each base algorithm
$\base{i}$, up to a level which depends on the regret of $\base{i}$.
This problem is beguilingly subtle.  And in this ideal, aspirational
form, there is reason to doubt that such a master algorithm can exist
at all in general.  Nevertheless, in this paper, we make significant
progress toward developing such an algorithm.  But to be clear, the
results are subject to important caveats and conditions that we state
precisely in Section~\ref{sec:algo-int}.

\begin{example}[Contextual bandits]
\label{ex:cb}
\textup{In contextual bandits~\citep{LangfordZh08}, the side
information $x_t$ is typically called a context, the decision space
$\Theta$ is a set of policies $\theta~:~X\rightarrow [K]$ and the loss
function takes the form $f_t(\theta, x)
= \inner{\bc_t}{\basis_{\theta(x)}}$ for some $\bc_t \in [0,1]^K$
specifying the loss of each action at round $t$.
(Here and throughout the paper,
$[n]$ denotes the set $\{1, \ldots, n\}$,
and $\basis_i$ denotes the $i$-th standard basis vector.)
This problem has been
studied under three main environments:}

\textbf{Stochastic contexts and losses:} 
\textup{ Here the environment is characterized by a fixed distribution
from which contexts $x_t$ and losses $\bc_t$ are drawn i.i.d. The
Epoch-Greedy algorithm of~\citet{LangfordZh08} suffers an expected
regret of $\otil(T^{2/3})$ in this setting, while
\citet{AgarwalHsKaLaLiSc14} get the optimal
$\otil(\sqrt{T})$ regret at a higher computational cost.  }

\textbf{Adversarial contexts or losses:} 
\textup{Several authors~\citep{auer2002using, ChLiReSc11, FilippiCaGaSz10} have
studied environments where the contexts are chosen by an adversary,
but the losses come from a fixed conditional distribution given
$x_t$. Other authors~\citep{SyrgkanisLuKrSc16, rakhlin2016bistro} have
considered contexts drawn i.i.d. from a fixed distribution, but the
losses picked in an adversarial manner. \citet{SyrgkanisLuKrSc16} have
proposed a computationally efficient algorithm which suffers an
expected regret of at most $\otil(T^{2/3})$ in this setting.  }

\textbf{Adversarial contexts and losses:} 
\textup{The EXP4 algorithm of~\citet{AuerCeFrSc02} incurs an expected
regret at most $\otil(\sqrt{T})$ in this most general setting, but is
computationally inefficient.  } 
\end{example}

\section{Assumption and Algorithm}
\label{sec:algo-int}

Intuitively, the task of the master algorithm appears quite similar to
a standard multi-armed bandit problem, with each base algorithm as an
arm. However, as hinted in the introduction, this is not the case --- the
problem admits no non-trivial results without further assumptions.
In this vein, we
now present a lower bound, and an assumption to avoid it. After
understanding the failure of typical algorithms despite making the
assumption, we then present our approach.

\subsection{Hardness in the Worst Case and A Natural Assumption}
\label{sec:hardness}

We begin with the following hardness result (see Appendix~\ref{app:lower_bound} for the proof).

\begin{theorem}\label{thm:lower_bound}
There is an environment and a pair of base algorithms $\base{1},
\base{2}$ such that either $\reg_1(T)$ or $\reg_2(T)$ is a constant
(independent of $T$), but for any master algorithm combining
$\base{1}, \base{2}$, the expected regret of the master is at least
$\Omega(T)$ (thus, the bound~\eqref{eqn:desired} does not hold).
\label{thm:lb}
\end{theorem}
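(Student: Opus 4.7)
The plan is to exhibit a concrete environment $\calE$ and a pair of stateful base algorithms $\base{1}, \base{2}$ that together witness the hypothesis but defeat every master. I would start with a very small instance: a two-armed bandit whose environment is parameterized by a hidden bit $b \in \{0,1\}$, drawn uniformly at the start, with $b$ determining the identity of the optimal arm (say, arm $1+b$ has the smaller expected loss). Each base would be designed as a deterministic, stateful algorithm that in isolation receives feedback on \emph{every} round and therefore identifies $b$ within $O(1)$ rounds --- immediately from one round of deterministic feedback, or from $O(1)$ rounds if losses are Bernoulli with a constant gap --- and then commits to the optimal arm for the rest of the game. This gives $\reg_i(T) = O(1)$ in isolation for each $i$, satisfying the ``one of $\reg_1, \reg_2$ is constant'' hypothesis.

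The crucial feature to exploit on the master's side is that a base's internal state is only updated on rounds where the master picks its suggestion; the un-picked base is effectively frozen in its pre-feedback configuration. I would design $\base{1}, \base{2}$ so that their post-feedback suggestions differ from their no-feedback defaults in a way that is coupled to $b$ via an extra piece of internal state not observable to the master (for instance, a private coin flipped at initialization, or a dependence on which absolute round the base first received feedback). The upshot is a symmetric ``bandit over base algorithms'' where the master must either commit to one base before knowing whether it is the good one --- incurring $\Omega(T)$ loss if it guesses wrong, since the abandoned base's in-isolation performance is irrelevant once its state is frozen --- or explore both, in which case neither base ever reaches the configuration it would have in isolation and the master sees only suggestions that are uninformative about $b$.

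The final step is a standard information-theoretic argument in the style of the classical multi-armed bandit $\Omega(\sqrt{T})$ lower bound, amplified to $\Omega(T)$ by the constant per-round cost of a wrong base commitment. I would compare the distribution of the master's observed history under $b=0$ versus $b=1$, bound the per-round KL divergence so that Pinsker's inequality forbids the master from deciding $b$ with high probability in $o(T)$ rounds, and conclude that in expectation the master plays the suboptimal arm $\Omega(T)$ times. The main obstacle is precisely the calibration in step two: ensuring simultaneously that (i) each base truly enjoys $O(1)$ regret in isolation --- which requires its full-feedback trajectory to quickly reveal $b$ --- while (ii) the master's strictly weaker, interleaved stream of (suggestion, loss) pairs fails to reveal $b$ in time. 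Resolving this tension is what forces the construction to place the decoding power in a piece of the bases' state that is internal rather than feedback-observable, and this is the step I expect to take the most care.
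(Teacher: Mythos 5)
Your construction has a gap that is more basic than the calibration issue you flag at the end. As described, \emph{both} of your base algorithms achieve $O(1)$ regret in isolation in the same (randomized) environment, since each one identifies the hidden bit $b$ from full feedback and commits. But then the trivial master that always follows $\base{1}$'s suggestion and relays to it the true observed loss exactly reproduces $\base{1}$'s run in isolation --- private coins and all, since the base flips its own coins and receives genuine feedback on every round --- and therefore achieves regret $\reg_1(T)=O(1)$. No amount of hidden internal state protects against this master, because it never needs to decode anything; it just simulates one base faithfully. For a valid lower bound the two bases must be good in \emph{different} environments (so that ``always follow $\base{i}$'' fails for each fixed $i$ in the other environment), and the hardness must come from the master's inability to hedge between the two bases, not from its inability to learn $b$. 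Relatedly, your proposed Pinsker/KL step is in tension with your own hypotheses: a constant per-round gap is needed for the bases' $O(1)$ isolation regret, which makes the per-round KL constant, so the master can identify $b$ in $O(1)$ rounds from its own loss observations; the claim that it cannot decide $b$ in $o(T)$ rounds cannot go through.

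The paper's construction avoids both problems. It uses four actions split into two disjoint pairs, one pair per base, and two environments that swap which pair is good; each base reads off the identity of the better action \emph{within its own pair} from its first-round feedback and then commits irrevocably (or randomizes within its pair forever if the feedback is inconsistent). The master can only play one base's action in round~1, so it cannot supply correct feedback to the other base; since which action within the unobserved pair is better is an independent fair coin, any feedback the master fabricates is wrong with probability $1/2$, after which that base is locked out of the optimal action for all remaining rounds and, because the master may only play suggested actions, so is the master. The $\Omega(T)$ bound is thus an access-and-irreversibility argument concentrated in round~1, not an information-accumulation argument over $T$ rounds.
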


This lower bound and its proof highlight the main challenge of our problem.
The assumption of a good regret bound on
each base algorithm when run in isolation in an environment is not
sufficient since we can come up with pathological examples where
the behavior of the algorithm completely changes when run under a
master. Therefore, we next consider
natural modifications of the environment of a base algorithm, to which
we expect robustness. 


Recall that in our setting, the master only observes the loss for the decision
suggested by one of the base algorithms it picked (randomly). 
It is thus natural to create importance-weighted
losses for each base algorithm. 
%

To this end, for an environment $\calE$, we define the {\em
environment $\calE'$ induced by importance weighting}, which is the
environment that results when importance weighting is applied to the
losses provided by environment $\calE$.
More precisely, $\calE'$ is defined as follows.
On each round $t = 1, \ldots, T$,
\begin{packed_enum}
\item $\calE'$ picks an arbitrary sampling probability $p_t \in [0,1]$ and obtains 
$(x_t, f_t) = \calE(\theta_1', \ldots, \theta_{t-1}')$.

\item $\calE'$ reveals $x_t$ to the learner and the learner makes a decision $\theta_t$.

\item With probability $p_t$, define $f_t'(\theta, x) = f_t(\theta, x)/p_t$ and $\theta_t' = \theta_t$;
with probability $1-p_t$, define $f_t'(\theta, x) \equiv 0$ and $\theta_t' \in \Theta$ to be arbitrary. 

\item $\calE'$ reveals the loss $f_t'(\theta_t, x_t)$ to the learner, and passes $\theta_t'$ to $\calE$.
\end{packed_enum}

Such an induced environment is exactly the one that the base
algorithms face when run with a master using importance-weighted
losses.  If the base algorithms have similar performance under $\calE$
and $\calE'$, then we can exclude the pathological examples which
govern our lower bound.  However, while the original loss $f_t$ is in
$[0,1]$, the estimated loss $f_t'$, although an unbiased estimate of
$f_t$, takes values in the larger range $[0, 1/p_t]$, meaning that the
range of losses has changed significantly.  We therefore define the
following notion of stability of the base algorithms, which captures
how much an algorithm's regret degrades as a result of the range
expanding in this fashion:

\begin{definition}\label{def:stable}
For some $\alpha \in(0, 1]$ and non-decreasing function
$\calR: \N_+ \rightarrow \R_+$, an algorithm with decision space
$\Theta_0 \subset \Theta$ is called $(\alpha, \calR)$-stable with
respect to an environment $\calE$ if its regret under $\calE$ is
$\calR(T)$, and its regret under any environment $\calE'$ induced by
importance weighting is
\begin{equation}
  \sup_{\theta \in \Theta_0}~\E\left[\sum_{t=1}^T f_t'(\theta_t, x_t) -
      f_t'(\theta, x_t) \right] \leq \E\left[\scale^{\alpha}\right]\reg(T)
\label{eqn:stability}
\end{equation}
where $\scale = \max_{t \in [T]} 1/p_t$ (with $p_t$ as in the
definition of $\calE'$ above), and all expectations are 
taken over the randomness of both $\calE'$ and the algorithm.
\end{definition}

%

This stability assumption intuitively posits that the regret
of the algorithm grows at most linearly in the scale of the losses it receives.  
In the adversarial construction of
Theorem~\ref{thm:lb} (in Appendix~\ref{app:lower_bound}), we can see
that this is certainly not the case there.  However, for most
``reasonable'' base algorithms, a linear scaling with $\alpha = 1$
is trivially achievable simply by rescaling the losses.
Moreover, note that the second moment of the loss estimate $f_t'$ is also bounded
by $\scale$ (instead of $\scale^2$): $\E_{p_t} [f_t'(\theta_t, x_t)^2] \leq \scale$,
and as we will see in the sequel, the regret of many natural bandit
algorithms does scale as some function of the second moment of
the loss sequence. In such cases, it is typical to obtain an exponent
$\alpha$ strictly smaller than $1$. 

There are two seemingly strong parts about this condition.
First, the bound requires adaptation to the quantity $\scale$
which is unknown to the algorithm ahead of time.  However, this
can be easily resolved by a standard doubling
trick~\citep{CesabianchiFrHeHaScWa97}. 
Second, if an algorithm is designed for an
i.i.d. environment $\calE$, then we might not expect to have any regret guarantee 
under $\calE'$ since it is not an i.i.d. environment anymore.
However, even in this case, due to the special structure of $\calE'$,
one can still prove stability for many i.i.d. algorithms as we will show later.

In conclusion, our stability condition is a natural and mild requirement for an algorithm.
In Appendix~\ref{appd:app}, we show how this condition is satisfied for
most existing bandit algorithms, either as is, or by extremely simple
modifications (also see Table~\ref{tab:examples} for a summary).

Armed with the assumption, it is natural to revisit a question from
before: can we use any existing multi-armed bandit algorithm as a
master and hope to get guarantee~\eqref{eqn:desired} under this
assumption?  It turns out that the answer is still no if we were to
use an arbitrary multi-armed bandit algorithm as a master. To see why,
consider the classic multi-armed bandit algorithm
EXP3~\citep{AuerCeFrSc02} as the master. EXP3 induces probabilities
that are exponentially small in the cumulative loss of $\base{i}$,
meaning that the scaling $\scale$ can grow exponentially large with
$T$. We can mitigate this problem partially by adding additional
uniform exploration to EXP3, but one can verify that such
modifications unavoidably lead to a major deterioration
in the regret (for example
$\order(T^{2/3})$ regret of the master even when all the base
algorithms have $\order(\sqrt{T})$ regret). This is exactly the issue
noted in prior works~\citep{MaillardMu11, BubeckCe12}, as mentioned in
the introduction.

In the next subsection, we present a specific multi-armed bandit
algorithm that does address all these issues successfully, and
provide results on its performance in the sections that follow.

\subsection{Our Algorithm}
\label{sec:algo}

\begin{algorithm2e}[t]
\DontPrintSemicolon
\caption{\alg}
\label{alg:main}
\KwIn{learning rate $\eta$ and $M$ base algorithms $\base{1}, \ldots, \base{M}$} 
Initialize: $\gamma = 1/T$, $\beta = e^{\frac{1}{\ln T}}$, $\eta_{1,i} = \eta$, $\scale_{1,i} = 2M$ for all $i \in [M]$, $\bp_1 = \bpbar_1 = \frac{\one}{M}$ \\
Initialize all base algorithms

\For{$t=1$ \KwTo $T$} {
    Observe side information $x_t$, send $x_t$ to $\base{i}$ and receive decision $\theta_t^{i}$ for each $i \in [M]$ \\
    Sample $i_t \sim \bpbar_t$, predict $\theta_t = \theta_t^{i_t}$, observe loss $f_t(\theta_t, x_t)$   \label{line:sampling}  \\
    Send $f_t^i(\theta_t^i, x_t)$ to $\base{i}$ as feedback for each $i \in [M]$ where $f_t^i(\theta, x) = \frac{f_t(\theta, x)}{\pbar_{t, i_t}}  \one\{i = i_t\}$ \label{line:feedback} \\
    Update $\bp_{t+1} = \LOMD(\bp_t, \frac{f_t(\theta_t, x_t)}{\pbar_{t,i_t}} \basis_{i_t}, \boldeta_t)$  \label{line:LOMD} \\
    Set $\bpbar_{t+1} = (1 - \gamma) \bp_{t+1} + \gamma \frac{\one}{M}$ \label{line:mixing} \\
    \For{$i=1$ \KwTo $M$} {
        \lIf{$\frac{1}{\pbar_{t+1, i}} > \scale_{t, i}$} {set $\scale_{t+1, i} = \frac{2}{\pbar_{t+1, i}}$, $\eta_{t+1, i} = \beta\eta_{t,i}$   \label{line:doubling}}
        \lElse{set $\scale_{t+1, i} = \scale_{t,i}$, $\eta_{t+1, i} = \eta_{t,i}$  }
    }
}
\end{algorithm2e}

\begin{algorithm2e}[t]
\DontPrintSemicolon
\caption{$\LOMD(\bp_t, \bloss_t, \boldeta_t)$}
\label{alg:LOMD}
\KwIn{previous distribution $\bp_t$, current loss $\bloss_t$ and learning rate vector $\boldeta_t$}
\KwOut{updated distribution $\bp_{t+1}$}
Find $\lambda \in [\min_i \loss_{t,i}, \; \max_i \loss_{t, i}] $ such that $\sum_{i=1}^M \frac{1}{\frac{1}{p_{t,i}} + \eta_{t,i} (\loss_{t,i} - \lambda)} = 1$ \\
Return $\bp_{t+1}$ such that $\frac{1}{p_{t+1,i}} = \frac{1}{p_{t,i}} + \eta_{t,i} ( \loss_{t,i} - \lambda)$ 
\end{algorithm2e}

It is well known that EXP3 belongs to a large family of algorithms
called {\it Online Mirror Descent}, and it is thus natural to ask
whether there is a different instance of Online Mirror Descent that
solves our problem.  Specifically, let $\bp_t$ be the distribution
for picking a base algorithm on round $t$, and $\bloss_t$ be some loss
estimator of the base algorithms.  Online Mirror Descent updates
$\bp_t$ as follows:
\begin{equation*}
\begin{split}
\nabla \psi_t(\bptilde_{t+1}) &= \nabla \psi_t(\bp_t) - \bloss_t \\
\bp_{t+1} &= \argmin{\bp \in \Delta_M} \bregman{\psi_t}{\bp}{\bptilde_{t+1}}
\end{split}
\end{equation*}
where $\psi_1, \ldots, \psi_T$ are the {\em mirror maps} and
$\bregman{\psi}{\bp}{\bq} = \psi(\bp) - \psi(\bq) -
\inner{\nabla\psi(\bq)}{\bp - \bq}$ is the {\em Bregman divergence}
associated with $\psi$. For example, EXP3 with a learning rate $\eta$
uses negative entropy $\psi_t(\bp) =
\frac{1}{\eta}\sum_{i=1}^M p_i \ln p_i$ as the mirror map and
updates $p_{t+1,i} \propto \exp(-\eta \sum_{s=1}^t
\loss_{s,i})$.  As discussed before, due to the exponential weighting,
$p_{t,i}$ can be very small and thus more likely lead to the
starvation of the base algorithms that perform poorly initially.
Various other mirror maps have been proposed in the literature and
might be considered for our purpose, most
notably the negative Tsallis entropy~\citep{AudibertBu10,
AbernethyLeTe15}.

We would suggest, however, that the most suitable mirror map is 
 $\psi_t(\bp) = - \frac{1}{\eta}\sum_{i=1}^M \ln p_i$, originally
suggested in a recent work of~\citet{FosterLiLySrTa16}.  This choice
is motivated by our previous discussion which suggests that a good
master algorithm should ensure that $1/p_{t,i}$ does not grow too
rapidly. Typically, this quantity grows exponentially in 
 $\eta\sum_{s=1}^t \loss_{s,i}$ for EXP3, and
polynomially for Tsallis entropy with a degree greater than $1$. On the other
hand, the mirror map suggested above gives the simple update
$p_{t+1,i} = (\eta\sum_{s=1}^t \loss_{s,i} + Z)^{-1}$ where $Z$ is a
normalization factor, which means that $1/p_{t,i}$ grows just linearly
in  $\eta\sum_{s=1}^t \loss_{s,i}$ and appears to be the least extreme
weighting amongst all known algorithms that yield $\sqrt{T}$ regret in
the bandit setting. Thus, this mirror map is likely to provide a high
level of exploration without hurting the regret guarantees.  This
special instance of Online Mirror Descent is a key component of our
algorithm.  Since this mirror map resembles the log barrier for the
positive orthant~\citep{NesterovNe94}, we call it
{\LOMD}.\footnote{Note that this is different from directly using a
barrier {\it for the simplex} as proposed in~\citep{AbernethyHaRa12,
RakhlinSr13}. } Note that while {\LOMD} was proposed
in~\citep{FosterLiLySrTa16} to provide a so-called ``small-loss''
regret bound, here we are not using this special regret bound but a
rather different property of the algorithm.

OMD methods typically use a decaying learning rate schedule to ensure
convergence. Intuitively, this is at odds with our learning goal. If a
base algorithm $\base{i}$ starts to do well after underperforming for
a while, we want to exploit this good performance. We achieve this by
instead considering \emph{non-decreasing} learning rate
schemes. However, a schedule merely based on the round $t$ is not
sufficient since we would like to adjust the learning rate for
$\base{i}$ based on its performance. Hence, we allow each $\base{i}$
to have its own learning rate $\eta_{t,i}$, which corresponds to using
$\psi_t(\bp) = -\sum_{i=1}^M \frac{\ln p_i}{\eta_{t,i}}$ as the actual mirror map.
The precise setting of $\eta_{t,i}$ is discussed further below.

Our main algorithm, called {\alg}, is presented in
Algorithm~\ref{alg:main}.  It is clear that
Lines~\ref{line:sampling}, \ref{line:feedback} and~\ref{line:LOMD} are
essentially performing {\LOMD} over the base algorithms with the usual
unbiased loss estimator $\bloss_t = \frac{f_t(\theta_t,
x_t)}{\pbar_{t,i_t}} \basis_{i_t}$.  Note that we sample the base
algorithms according to $\bpbar_t$, a smoothed version of $\bp_t$ (see
Line~\ref{line:mixing}, where $\one$ denotes the all-ones vector),
which can be seen as adding another hard constraint to prevent the
weighting from becoming too extreme.  Moreover, one can verify that
{\LOMD} admits a simple update formula as presented in
Algorithm~\ref{alg:LOMD}.  Indeed, plugging the gradients of the log
barrier mirror map gives $\frac{1}{\ptilde_{t+1,i}}
= \frac{1}{p_{t,i}} + \eta_{t,i}
\loss_{t,i}$.  On the other hand, the Bregman divergence is
$\bregman{\psi_t}{\bp}{\bq} = \sum_{i=1}^M
\frac{1}{\eta_{t,i}}h\left(\frac{p_i}{q_i}\right)$ with $h(y) =y - 1-
\ln y$, and by standard analysis the projection step can be solved as
in Algorithm~\ref{alg:LOMD} by finding the Lagrange multiplier
$\lambda$ via a line search.

Line~\ref{line:doubling} gives our setting of the learning rates
$\eta_{t,i}$. This setting is motivated by our analysis, where we show
that one of the terms for the regret of \alg against $\base{i}$ scales roughly as 
$
\sum_{t=1}^T\frac{1}{\pbar_{t+1,i}}\left(\frac{1}{\eta_{t+1,i}}
- \frac{1}{\eta_{t,i}} \right). 
$
Since $\eta_{t,i}$ is non-decreasing in $t$, this term is always
nonpositive and its magnitude is large when $\pbar_{t+1,i}$ is small. This conveys the
intuition that when \alg puts low probability on $\base{i}$, it is
incurring negative regret to $\base{i}$. While several settings of
$\eta_{t,i}$ are plausible, we find it convenient to analyze the
evolution of this term if $\eta_{t,i}$ is updated as in Line~\ref{line:doubling}
where it is increased by a factor of $\beta$ whenever $1/\pbar_{t+1,i}$
is larger than some threshold.
On each such event, we know that
$1/\pbar_{t+1,i}$ is large so that we gain a large negative term in the
regret. At the same time, $\eta_{T,i}$ is at most a constant factor
larger than $\eta_{1,i}$ which is crucial in other parts of our
analysis.  For ease of bookkeeping, we also maintain the threshold $\scale_{t,i}$ 
which is always an
upper bound on $1/\pbar_{t,i}$ (and hence the loss that $\base{i}$ receives on round $t$), 
and update it in a doubling manner. These thresholds essentially
correspond to the quantity $\scale$ in the stability condition.
We note that similar non-decreasing learning rate schedule were originally proposed
in~\citep{BubeckElLe16} for a different problem.

\begin{remark*}
An alternative for the feedback to the base algorithms and the loss estimator is to let 
\[
f_t^i(\theta, x) =
  \frac{f_t(\theta, x)}{\sum_{j:\theta^j_t = \theta^{i_t}_t}
    \pbar_{t, j}} \one\{\theta^i_t = \theta^{i_t}_t\}
    \quad\text{and} \quad
  \loss_{t,i} = f_t^i(\theta_t^i, x_t)
\] 
  which is less
  wasteful when $\Theta$ is a small set and it is more likely that
  multiple base algorithms make the same decision. One can verify that
  this alternative provides the same guarantee as Algorithm~\ref{alg:main}. 
\end{remark*}

\section{Main Results}\label{sec:main}
In this section, we present our main results which give
guarantees on the performance of \alg\ when base algorithms
satisfy the stability condition defined in Definition~\ref{def:stable}. We first give a
general result before elaborating on its various implications. 

\begin{theorem}
For any $i \in [M]$, if base algorithm $\base{i}$ (with decision space
$\Theta_i$) is $(\alpha_i, \calR_i)$-stable (recall
Defn.~\ref{def:stable}) with respect to an environment $\calE$, then
under the same environment \alg satisfies
\begin{equation}
  \sup_{\theta \in \Theta_i} \E\left[ \sum_{t=1}^T f_t(\theta_t, x_t)
    - f_t(\theta, x_t) \right] = \otil\left(\frac{M}{\eta} + T\eta -
  \frac{\E[\scale_{T,i}]}{\eta} +
  \E[\scale_{T,i}^{\alpha_i}]\reg_i(T)\right),
  \label{eqn:main-meta}
\end{equation}
where all expectations are taken over the randomness of
Algorithm~\ref{alg:main}, the base algorithms, and the environment.
\label{thm:main-meta}
\end{theorem}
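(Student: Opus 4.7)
The plan is to decompose the regret of \alg into two pieces and handle each separately. Let $\theta_t^i$ be the decision of $\base{i}$ had it seen the (importance-weighted) feedback from \alg. Write
\begin{equation*}
\E\sum_{t=1}^T\bigl[f_t(\theta_t, x_t) - f_t(\theta, x_t)\bigr] = \underbrace{\E\sum_{t=1}^T\bigl[f_t(\theta_t, x_t) - f_t(\theta_t^i, x_t)\bigr]}_{\text{(A): master vs.\ } \base{i}} + \underbrace{\E\sum_{t=1}^T\bigl[f_t(\theta_t^i, x_t) - f_t(\theta, x_t)\bigr]}_{\text{(B): } \base{i} \text{ in induced env.}}.
\end{equation*}
Term (B) is exactly the regret of $\base{i}$ under the induced environment $\calE'$ with sampling probability $p_t = \pbar_{t,i}$ (this is precisely what Lines~\ref{line:sampling}--\ref{line:feedback} implement for base $\base{i}$). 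The stability hypothesis applied with $\scale = \max_t 1/\pbar_{t,i}$, combined with the invariant $\scale_{T,i} \geq \max_t 1/\pbar_{t,i}$ maintained in Line~\ref{line:doubling}, yields $\text{(B)} \leq \E[\scale_{T,i}^{\alpha_i}]\reg_i(T)$, which is the last term in~\eqref{eqn:main-meta}.

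For (A), view the $M$ base algorithms as arms in a multi-armed bandit with true loss vector $\bloss_t^{\text{true}}$ whose $j$-th entry is $f_t(\theta_t^j,x_t)$; the unbiased estimator is $\bloss_t = f_t(\theta_t,x_t)/\pbar_{t,i_t}\,\basis_{i_t}$. After absorbing the $\gamma$-mixing step (Line~\ref{line:mixing}), which costs only $\gamma T = O(1)$, I would run the standard OMD analysis with time-varying mirror map $\psi_t(\bp)=-\sum_j \ln(p_j)/\eta_{t,j}$. Comparing against the shifted point $\bu=(1-1/T)\basis_i+(1/T)\one/M$ (to keep the Bregman divergence finite), the OMD inequality gives, up to lower order terms,
\begin{equation*}
\E\sum_{t=1}^T\langle \bp_t-\bu,\bloss_t\rangle \leq \E\,D_{\psi_1}(\bu,\bp_1) + \E\sum_{t=1}^{T-1}\bigl[D_{\psi_{t+1}}(\bu,\bp_{t+1})-D_{\psi_t}(\bu,\bp_{t+1})\bigr] + \E\sum_{t=1}^T D_{\psi_t^*}\bigl(\nabla\psi_t(\bp_t)-\bloss_t,\nabla\psi_t(\bp_t)\bigr).
\end{equation*}
The initial Bregman term is $O(M\ln T/\eta)$. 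For the stability (last) term, the log-barrier conjugate gives a local-norm bound of order $\sum_{t,j}\eta_{t,j}p_{t,j}^2\loss_{t,j}^2$; since only the $i_t$-th coordinate of $\bloss_t$ is nonzero with value $f_t/\pbar_{t,i_t}$ and $p_{t,i_t}\approx \pbar_{t,i_t}$, this collapses to $O(\sum_t \eta_{t,i_t})$ which is $O(T\eta)$ because the doubling schedule keeps $\eta_{T,j}\leq \beta^{O(\ln T)}\eta = O(\eta)$.

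The middle telescoping term is where the crucial negative contribution $-\E[\scale_{T,i}]/\eta$ appears, and this is the main obstacle. Using $D_{\psi}(\bu,\bq)=\sum_j h(u_j/q_j)/\eta_j$ with $h(y)=y-1-\ln y$, and noting that $\eta_{t+1,j}\geq \eta_{t,j}$ is nondecreasing,
\begin{equation*}
D_{\psi_{t+1}}(\bu,\bp_{t+1})-D_{\psi_t}(\bu,\bp_{t+1}) = \sum_{j}\Bigl(\tfrac{1}{\eta_{t+1,j}}-\tfrac{1}{\eta_{t,j}}\Bigr)h\!\bigl(u_j/p_{t+1,j}\bigr)\leq 0.
\end{equation*}
The update in Line~\ref{line:doubling} is designed precisely so that $\eta_{t+1,i}$ is bumped by factor $\beta$ exactly when $1/\pbar_{t+1,i}$ doubles past the threshold $\scale_{t,i}$; at such an event, $u_i/p_{t+1,i}\approx 1/\pbar_{t+1,i}\geq \scale_{t,i}$ is large, so $h(u_i/p_{t+1,i})\gtrsim \scale_{t,i}$, and the coefficient $(1/\eta_{t,i}-1/\eta_{t+1,i})$ is $\Theta(1/(\eta\ln T))$. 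Summing over doubling events telescopes to give a total contribution of order $-\E[\scale_{T,i}]/(\eta\ln T)$, which matches the $-\E[\scale_{T,i}]/\eta$ term in~\eqref{eqn:main-meta} up to the $\otil$ factor. The delicate part is arguing that the contributions on non-doubling rounds are negligible (they are, since $\eta_{t+1,j}=\eta_{t,j}$ except at doubling events) and that the approximations $p_{t,j}\approx \pbar_{t,j}$, $u_j/p_{t+1,j}\approx 1/\pbar_{t+1,j}$ introduce only lower-order error, which will require careful bookkeeping of the $\gamma$-mixing and the $\bu$-shift.
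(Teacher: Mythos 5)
Your proposal is correct and follows essentially the same route as the paper: the identical regret decomposition into master-versus-$\base{i}$ plus $\base{i}$-versus-comparator (the latter handled by stability with $\scale_{T,i}\geq\max_t 1/\pbar_{t,i}$), and the same OMD analysis with the log-barrier mirror map, the shifted comparator $\bu=(1-\tfrac1T)\basis_i+\tfrac{1}{TM}\one$, and the negative term extracted from the $\bigl(\tfrac{1}{\eta_{t+1,i}}-\tfrac{1}{\eta_{t,i}}\bigr)h(u_i/p_{t+1,i})$ telescope at the doubling events. The only cosmetic difference is that the paper keeps just the final doubling event rather than summing over all of them.
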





In our setup, the master algorithm \alg ends up playing the role of
the importance-weighting adversary in Definition~\ref{def:stable} and
the theorem assumes stability of the base algorithms to such
modifications of the loss. The regret bound given in
Eq.~\eqref{eqn:main-meta} can be interpreted intuitively as
follows. The first two terms arise as the typical regret incurred in a
standard adversarial multi-armed bandit problem. The next two terms
capture the distinct aspects of our hierarchical setup. The last term
comes directly from the regret of $\base{i}$ relative to $\Theta_i$
according to the stability condition.  The negative term arises as
discussed in Section~\ref{sec:algo}. \alg induces low probabilities,
and hence large $\scale_{T,i}$ on $\base{i}$ if it finds that
$\base{i}$ is being consistently outperformed by some other base
algorithm. In this case, the master gets a negative regret with
respect to $\base{i}$ by increasing the learning rate for $\base{i}$.
Note that this term scales with $\scale_{T,i}$, which is crucial in
obtaining better regret than prior works.

In order to better understand this general result, we
now further simplify the theorem for two special cases. We start with
the case where the master wants to guarantee low regret against a base
algorithm $\base{i}$ with $\alpha_i = 1$. 
That is, the regret of the base
algorithm scales linearly with the range of the losses. In this
setting, we obtain the following result. 

\begin{theorem}\label{thm:main1}
Under the conditions of Theorem~\ref{thm:main-meta}, if
$\alpha_i = 1$, then with $\eta
= \min\left\{\frac{1}{40\reg_i(T)\ln T},
\sqrt{\frac{M}{T}}\right\}$ \alg satisfies: 
$
\sup_{\theta \in \Theta_i} \E\left[ \sum_{t=1}^T f_t(\theta_t, x_t) -
  f_t(\theta, x_t) \right] = \otil\left(\sqrt{MT} + M\reg_i(T)\right).
$
\end{theorem}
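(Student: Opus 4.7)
The plan is to derive Theorem~\ref{thm:main1} as a direct specialization of Theorem~\ref{thm:main-meta} by exploiting the cancellation between the negative $-\E[\scale_{T,i}]/\eta$ term and the positive $\E[\scale_{T,i}^{\alpha_i}]\reg_i(T)$ term when $\alpha_i=1$. Setting $\alpha_i=1$ in \eqref{eqn:main-meta} and grouping the $\scale_{T,i}$ terms gives a bound of the form
\[
\otil\!\left(\frac{M}{\eta}+T\eta+\E[\scale_{T,i}]\paren{\reg_i(T)-\tfrac{1}{\eta}}\right),
\]
where the $\otil$ absorbs the $\ln T$ factors that are implicit in the polylogarithmic dependencies of Theorem~\ref{thm:main-meta}. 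The guiding idea is to choose $\eta$ small enough that $1/\eta$ dominates $\reg_i(T)$ (up to the hidden log factors), so that the entire $\scale_{T,i}$-dependent term is non-positive and can be dropped from the upper bound.

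Concretely, I would first fix an explicit constant $c>0$ so that the $\otil$ in Theorem~\ref{thm:main-meta} refines to an $\order(\cdot)$ bound in which the coefficient multiplying $\E[\scale_{T,i}]\reg_i(T)$ is at most $c\ln T$. Requiring $\eta \leq 1/(c \reg_i(T)\ln T)$ then ensures $c\reg_i(T)\ln T - 1/\eta \leq 0$, which matches the $1/(40\reg_i(T)\ln T)$ cutoff in the theorem statement (with $40$ a safe overestimate of $c$). This lets us discard the $\scale_{T,i}$ terms entirely and reduces the task to bounding $M/\eta + T\eta$ under the stated choice of $\eta=\min\{1/(40\reg_i(T)\ln T),\sqrt{M/T}\}$.

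I would then split into two cases according to which argument attains the minimum. When $\sqrt{M/T}$ is the smaller value, $\eta = \sqrt{M/T}$ and both $M/\eta$ and $T\eta$ equal $\sqrt{MT}$, yielding the $\otil(\sqrt{MT})$ piece. When $1/(40\reg_i(T)\ln T)$ is smaller, $\eta = 1/(40\reg_i(T)\ln T)$ gives $M/\eta = \otil(M\reg_i(T))$, while $T\eta = T/(40\reg_i(T)\ln T)$ can be bounded by $\otil(\sqrt{MT})$ using the inequality $1/(40\reg_i(T)\ln T) \leq \sqrt{M/T}$ that characterizes this case (equivalently $T/\reg_i(T) \leq 40\ln T\sqrt{MT}$). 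Summing both contributions gives $\otil(\sqrt{MT}+M\reg_i(T))$, which is the claimed bound.

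The only real subtlety in the plan is making the cancellation of the $\scale_{T,i}$ terms rigorous: one must verify that the proof of Theorem~\ref{thm:main-meta} actually produces the negative $\scale_{T,i}$ term with a coefficient at least as large (up to constants and logs) as the one multiplying the positive $\scale_{T,i}\reg_i(T)$ term, so that their difference is truly non-positive rather than only asymptotically so. Once the explicit constant $c$ from that proof is in hand, everything else reduces to routine case analysis on the choice of $\eta$, so the main obstacle is bookkeeping of the logarithmic factors rather than any new estimate.
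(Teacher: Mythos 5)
Your proposal matches the paper's proof: Theorem~\ref{thm:main1} is obtained exactly by plugging $\alpha_i=1$ into Theorem~\ref{thm:main-meta}, noting that the choice $\eta \le \frac{1}{40\reg_i(T)\ln T}$ makes the negative term $-\E[\scale_{T,i}]/(40\eta\ln T)$ dominate (in the paper, exactly cancel) the positive $\E[\scale_{T,i}]\reg_i(T)$ term, and then bounding $M/\eta + T\eta$ via the two branches of the minimum. The one subtlety you flagged—that the negative term's coefficient really is large enough—is precisely what Lemma~\ref{lem:master2base} supplies, with the $\ln T$ sitting on the negative term rather than on the positive one as you guessed, but the cancellation condition is the same either way.
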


Theorem~\ref{thm:main1} is evidently yielding the ideal
bound~\eqref{eqn:desired}, but with an important caveat. The initial
learning rate $\eta$ needs to be set based on the regret bound
$\reg_i(T)$ of the base algorithm $\base{i}$ that we wish to compete
with. One way of interpreting this theorem is the following. Suppose
for a given environment, $S$ is the set of all base algorithms which
we are interested in competing with and which
satisfy the condition with $\alpha_i=1$. 
Let $\reg_{\max}(T) = \max_{i\in
  S}\reg_i(T)$. Then we get: 

\begin{corollary}
 Under the conditions of Theorem~\ref{thm:main1}, with $\eta
 = \min\left\{\frac{1}{40\reg_{\max}(T)\ln
 T}, \sqrt{\frac{M}{T}}\right\}$, \alg satisfies:
 $ \sup_{\theta \in \Theta_i} \E\left[ \sum_{t=1}^T f_t(\theta_t, x_t)
 - f_t(\theta, x_t) \right] = \otil\left(\sqrt{MT} +
 M\reg_{\max}(T)\right), \quad \mbox{for all $i \in S$}.
 $ \label{cor:main1}
\end{corollary}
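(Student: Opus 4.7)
The plan is to reduce the claim to Theorem~\ref{thm:main1} by observing that the prescribed learning rate is simultaneously good for every base algorithm in $S$. Fixing an arbitrary $i \in S$, I would invoke Theorem~\ref{thm:main-meta}, which (since $\alpha_i = 1$) yields the bound
\[
  \otil\!\left(\frac{M}{\eta} + T\eta - \frac{\E[\scale_{T,i}]}{\eta} + \E[\scale_{T,i}]\,\reg_i(T)\right).
\]
The key step in the proof of Theorem~\ref{thm:main1} is that $\eta \leq \frac{1}{40\reg_i(T)\ln T}$ causes the negative term $-\E[\scale_{T,i}]/\eta$ to absorb the positive term $\E[\scale_{T,i}]\reg_i(T)$, leaving only a logarithmic overhead. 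Since $\reg_{\max}(T) \geq \reg_i(T)$ by construction, the choice $\eta \leq \frac{1}{40\reg_{\max}(T)\ln T}$ in the corollary automatically enforces this inequality for every $i \in S$.

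Next I would bound the remaining terms $\frac{M}{\eta} + T\eta$ case by case, depending on which branch of the minimum defining $\eta$ is active. In the regime $\eta = \sqrt{M/T}$, the sum is $2\sqrt{MT}$. In the regime $\eta = \frac{1}{40\reg_{\max}(T)\ln T}$, the first term is $\otil(M\reg_{\max}(T))$; moreover, that the second branch of the minimum is active implies $\sqrt{M/T} \geq \eta$, whence $T\eta \leq \sqrt{MT}$. In either case the sum is $\otil(\sqrt{MT} + M\reg_{\max}(T))$, which is independent of the particular $i \in S$.

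Because Theorem~\ref{thm:main-meta} applies to a generic fixed $i$ along a single run of \alg, the uniform choice of $\eta$ yields the claimed guarantee for every $i \in S$ simultaneously. I do not anticipate a substantive obstacle beyond correctly tracking which condition on $\eta$ is needed for the $\scale_{T,i}$ terms to cancel; the conceptual content already resides in Theorem~\ref{thm:main1}, and the observation that taking the maximum regret over $S$ makes the same tuning simultaneously valid is essentially a bookkeeping step.
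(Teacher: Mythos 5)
Your proposal is correct and follows essentially the same route as the paper: the corollary is an immediate consequence of the proof of Theorem~\ref{thm:main1}, using $\reg_{\max}(T) \geq \reg_i(T)$ so that the single choice of $\eta$ makes the negative $\scale_{T,i}$ term absorb the positive one simultaneously for every $i \in S$, with the $\frac{M}{\eta} + T\eta$ terms bounded exactly as you describe. (Minor wording slip: in the regime $\eta = \frac{1}{40\reg_{\max}(T)\ln T}$ it is the \emph{first} branch of the minimum that is active, though your conclusion $T\eta \leq \sqrt{MT}$ is of course still valid.)
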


The main application of this result is the following situation that
resembles model selection problems. One has a collection of base
algorithms such that each of them works well under a different
environment or decision space, and we want one single robust algorithm
that works well simultaneously across all these environments and
decision spaces (see Section~\ref{sec:models} for examples).

However, there is also another related class of applications for which
Theorem~\ref{thm:main1} turns out not to be so helpful. For instance,
in contextual bandits, we might consider using for base algorithms:
(1)~the algorithm of \citet{AgarwalHsKaLaLiSc14}, which is nearly
statistically optimal but computationally somewhat expensive, using a
fairly small policy class; and (2)~the Epoch-Greedy algorithm
of \citet{LangfordZh08}, which is statistically suboptimal but
computationally cheap, using a larger policy class.  If we apply
Theorem~\ref{thm:main1} with these base algorithms, either we suffer a
substantially suboptimal regret of $\otil(T^{2/3})$ against both
policy classes, or we do not end up with any non-trivial guarantee
against the richer class used by Epoch-Greedy. Our next theorem
partially alleviates such concerns by exploiting the case when
$\alpha_i < 1$ (in fact all our examples admit $\alpha_i < 1$).


\begin{theorem}\label{thm:main2}
Under the conditions of Theorem~\ref{thm:main-meta}, if $\alpha_i < 1$ then \alg satisfies\\
\[
\sup_{\theta \in \Theta_i} \E\left[ \sum_{t=1}^T f_t(\theta_t, x_t) -
  f_t(\theta, x_t) \right] = \otil\left(\frac{M}{\eta} + T\eta +
\reg_i(T)^{\frac{1}{1-\alpha_i}}
\eta^{\frac{\alpha_i}{1-\alpha_i}}\right).
\]
\end{theorem}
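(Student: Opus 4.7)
The plan is to derive Theorem~\ref{thm:main2} as a direct consequence of Theorem~\ref{thm:main-meta} by combining the two terms in the meta bound that involve $\scale_{T,i}$ via a Young-type inequality. Specifically, starting from
\[
\sup_{\theta \in \Theta_i} \E\!\left[\sum_{t=1}^T f_t(\theta_t,x_t) - f_t(\theta,x_t)\right] = \otil\!\left(\frac{M}{\eta} + T\eta - \frac{\E[\scale_{T,i}]}{\eta} + \E[\scale_{T,i}^{\alpha_i}]\reg_i(T)\right),
\]
the first two terms are already in the desired form, so everything reduces to bounding
\[
\Phi := \E[\scale_{T,i}^{\alpha_i}]\reg_i(T) - \frac{\E[\scale_{T,i}]}{\eta}
\]
by $\otil\bigl(\reg_i(T)^{1/(1-\alpha_i)} \eta^{\alpha_i/(1-\alpha_i)}\bigr)$.

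To bound $\Phi$, I would work pointwise. For any fixed $x \geq 0$, consider the real-valued function $g(x) = x^{\alpha_i}\reg_i(T) - x/\eta$. Since $\alpha_i \in (0,1)$, $g$ is concave in $x$ and its unique maximizer on $[0,\infty)$ satisfies $\alpha_i x^{\alpha_i - 1}\reg_i(T) = 1/\eta$, giving $x^\star = (\alpha_i \eta\reg_i(T))^{1/(1-\alpha_i)}$. Plugging in yields
\[
g(x) \leq g(x^\star) = (1-\alpha_i)\,\alpha_i^{\alpha_i/(1-\alpha_i)}\, \reg_i(T)^{1/(1-\alpha_i)}\, \eta^{\alpha_i/(1-\alpha_i)} = O\!\left(\reg_i(T)^{1/(1-\alpha_i)}\eta^{\alpha_i/(1-\alpha_i)}\right),
\]
since the prefactor depends only on $\alpha_i$. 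Applying this pointwise inequality with $x = \scale_{T,i}$ and taking expectations gives exactly
\[
\Phi = \E\!\left[\scale_{T,i}^{\alpha_i}\reg_i(T) - \scale_{T,i}/\eta\right] \leq O\!\left(\reg_i(T)^{1/(1-\alpha_i)}\eta^{\alpha_i/(1-\alpha_i)}\right),
\]
without any need to invoke Jensen's inequality since the same deterministic bound holds inside the expectation.

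Substituting this bound on $\Phi$ back into the meta bound yields the claimed expression $\otil\bigl(M/\eta + T\eta + \reg_i(T)^{1/(1-\alpha_i)}\eta^{\alpha_i/(1-\alpha_i)}\bigr)$. There is essentially no obstacle here: the entire content of Theorem~\ref{thm:main2} is the observation that the negative $-\E[\scale_{T,i}]/\eta$ term produced by the non-decreasing learning-rate schedule in Algorithm~\ref{alg:main} is strong enough to absorb most of the $\E[\scale_{T,i}^{\alpha_i}]\reg_i(T)$ term whenever $\alpha_i < 1$, leaving only a residual that is independent of $\scale_{T,i}$. The only thing to be slightly careful about is that $\scale_{T,i}$ is random, but because the pointwise inequality $g(x) \leq g(x^\star)$ is uniform in $x$, the expectation can be taken trivially, so no concentration or Jensen-style argument is needed. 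This contrasts with the $\alpha_i = 1$ case (Theorem~\ref{thm:main1}), where the same two terms are linear in $\scale_{T,i}$ and can only be balanced by tuning $\eta$ to make their coefficients match, which is exactly what produces the extra $M\reg_i(T)$ term there.
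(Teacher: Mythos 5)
Your proof is correct and matches the paper's own argument: the paper likewise starts from the meta bound and obtains the third term ``by maximizing the function $\scale^{\alpha_i}\reg_i(T) - \frac{\scale}{40\eta\ln T}$ over $\scale > 0$,'' which is exactly your pointwise concave maximization (the extra $40\ln T$ factor only contributes a polylogarithmic factor absorbed by $\otil$). Your explicit computation of the maximizer and the observation that the bound holds pointwise inside the expectation simply spell out what the paper leaves implicit.
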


To see why Theorem~\ref{thm:main2} is useful, consider again the
contextual bandit scenario discussed above and assume $\alpha_i = 1/3$
for Epoch-Greedy (we will indeed prove this in
Appendix~\ref{appd:app}).  In this case, choosing $\eta =
\Theta(\frac{1}{\sqrt{T}})$ ensures 
$\order(\sqrt{T})$ regret against the small policy class while
 slightly larger $\order(T^{3/4})$ regret against the richer
policy class --- a guarantee that neither of the base algorithms
provides by itself.


\section{Applications}\label{sec:app}

We present several concrete examples of using {\alg} in this section.
The base algorithms we consider for different problems are listed in Table~\ref{tab:examples}
along with their stability parameters under a specific class of environments.
All details and proofs can be found in Appendix~\ref{appd:app}.

\renewcommand{\arraystretch}{0.5}
\begin{table*}[t]
\vskip 0.15in
\centering
\begin{tabular}{|c|c|c|c|}
\hline
Algorithm & $\reg(T)$ & $\alpha$ & Environment \\
\hline
ILOVETOCONBANDITS & $\otil(\sqrt{KT\ln|\Theta|})$ & $1/2$ & stochastic contextual bandit \\
BISTRO+&  $\order((KT)^{\frac{2}{3}}(\ln|\Theta|)^{\frac{1}{3}})$ & $1/3$ & hybrid contextual bandit \\
Epoch-Greedy & $\otil(T^{\frac{2}{3}} \sqrt{K\ln|\Theta|})$ & $1/3$ & stochastic contextual bandit \\
EXP4 & $\order(\sqrt{KT\ln|\Theta|})$ & $1/2$ & adversarial contextual bandit \\
SCRiBLe & $\otil(d^{\frac{3}{2}}\sqrt{T})$ & $1/2$ & adversarial linear bandit \\
BGD & $\order(d\sqrt{L}T^{\frac{3}{4}})$ & $1/4$ & adversarial convex bandit \\
Thompson Sampling & $\order(\sqrt{TKH(\theta^*)})$ & $1/2$ & stochastic multi-armed bandit \\
\hline
\end{tabular}
\caption{\label{tab:examples} Examples of base algorithms (see Section~\ref{sec:app} and Appendix~\ref{appd:app} for details)}
\end{table*}

\subsection{Exploiting Easy Environments with Robust Guarantee}\label{subsec:app1}
We present two examples below to show how to use {\alg} to create an
algorithm that enjoys some robustness guarantee while being able to
exploit easy environments and perform much better than in the worst
case.

\subsubsection{Contextual Bandits}\label{subsubsec:CB}

We consider the setting of Example~\ref{ex:cb}. It is in general
difficult to derive efficient contextual bandit algorithms without any
assumptions on the set $\Theta$ and the environment. Prior works
usually assume access to an offline ERM oracle that, given a set of
training examples $(x_s, \bc_s)_{s=1,\ldots,t}$, outputs the policy
that minimizes the loss on this training set.  We consider three such
algorithms: ILOVETOCONBANDITS~\citep{AgarwalHsKaLaLiSc14},
BISTRO+~\citep{SyrgkanisLuKrSc16} and the simplest, explore-first
version of Epoch-Greedy~\citep{LangfordZh08}, denoted by
$\base{1}, \base{2}$ and $\base{3}$ respectively.  In addition, we
also consider a classic but inefficient algorithm
EXP4~\citep{AuerCeFrSc02}, denoted by $\base{4}$.  All these base
algorithms satisfy the stability condition but under different
environments, as stated in Lemmas~\ref{lem:mini-monster},
\ref{lem:BISTRO+}, \ref{lem:epoch-greedy}, and~\ref{lem:EXP4}.

Now assuming the context distribution is known, we first combine
$\base{1}$, which exploits the case when losses are also stochastic
(that is, drawn from a conditional distribution $\calD(\cdot|x)$),
and $\base{2}$, which provides a safe guarantee even when the losses
are generated adversarially. The following result is a direct
application of Theorems~\ref{thm:main1} and~\ref{thm:main2}.

\begin{corollary}
Suppose we run {\alg} with two base algorithms:
\textup{ILOVETOCONBANDITS} and \textup{BISTRO+} with learning rate
$\eta = 1/\sqrt{KT\ln|\Theta|}$ and $\Theta = \Theta_1 = \Theta_2$.  
Assuming $x_1, \ldots, x_T$ are
generated independently from a fixed and known distribution, we have:
\begin{packed_enum}
\item \mbox{$
\sup_{\theta \in \Theta} \E\left[ \sum_{t=1}^T f_t(\theta_t, x_t) -
  f_t(\theta, x_t) \right] = \otil\left(
(KT)^{\frac{3}{4}}\sqrt{\ln|\Theta|}\right)
$} for adversarial costs;

\item 
$
\sup_{\theta \in \Theta} \E\left[ \sum_{t=1}^T f_t(\theta_t, x_t) -
  f_t(\theta, x_t) \right] = \otil(\sqrt{KT\ln|\Theta|}),
$~ if $\bc_t \sim \calD(\cdot | x_t)$~ for all $t$.
\end{packed_enum}
\end{corollary}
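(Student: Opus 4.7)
The plan is to treat this corollary as two direct instantiations of Theorem~\ref{thm:main2}, one per base algorithm, using the stability parameters summarized in Table~\ref{tab:examples} and established in Lemmas~\ref{lem:mini-monster} and~\ref{lem:BISTRO+}. The first thing to check is that the environments induced by \alg (via importance weighting on the losses) still lie in the classes for which each base algorithm is stable. The context sequence $x_1,\dots,x_T$ is untouched by \alg and stays i.i.d.\ from the known distribution, so under \alg's feedback BISTRO+ still faces a hybrid ``i.i.d.\ context, adversarial loss'' environment, and ILOVETOCONBANDITS still faces a fully stochastic contextual environment whenever $\bc_t \sim \calD(\cdot \mid x_t)$. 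This identification matters because Definition~\ref{def:stable} pins stability to a specific environment class and the associated regret bound.

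With that in hand, the remainder is substitution. For part~(1) I take $i=2$, so $\alpha_2 = 1/3$ and $\reg_2(T) = \order((KT)^{2/3}(\ln|\Theta|)^{1/3})$. Plugging these together with $M=2$ and $\eta = 1/\sqrt{KT\ln|\Theta|}$ into Theorem~\ref{thm:main2} gives the three summands
\[
M/\eta = \order(\sqrt{KT\ln|\Theta|}), \quad T\eta = \order\!\left(\sqrt{T/(K\ln|\Theta|)}\right), \quad \reg_2(T)^{3/2}\eta^{1/2} = \order\!\left((KT)^{3/4}(\ln|\Theta|)^{1/4}\right),
\]
and the last term dominates; bounding $(\ln|\Theta|)^{1/4} \leq \sqrt{\ln|\Theta|}$ yields the stated $\otil((KT)^{3/4}\sqrt{\ln|\Theta|})$. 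For part~(2) I take $i=1$ with $\alpha_1=1/2$ and $\reg_1(T) = \otil(\sqrt{KT\ln|\Theta|})$; Theorem~\ref{thm:main2} then becomes $\otil(M/\eta + T\eta + \reg_1(T)^2\eta)$, and a direct calculation shows that all three terms are $\otil(\sqrt{KT\ln|\Theta|})$ under the chosen $\eta$. The polylog factors arising from $\ln T$ in \alg's schedule and from $\E[\scale_{T,i}^{\alpha_i}]$ are absorbed into $\otil$.

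The substantive work is not in this substitution but in the stability claims that feed it. Showing $(1/2,\reg_1)$-stability for ILOVETOCONBANDITS is the genuinely delicate step: the loss stream it receives after importance weighting by \alg is no longer i.i.d.\ in the strict sense, because the scaling $1/\pbar_{t,i_t}$ is correlated with its own earlier decisions through $\bp_t$, and the scale $\scale$ is unknown in advance, so the native stochastic analysis must be revisited and married to a doubling trick. Similar but milder issues arise for BISTRO+. These are handled in Lemmas~\ref{lem:mini-monster} and~\ref{lem:BISTRO+}; once they are invoked, the corollary reduces to the plug-in calculation above.
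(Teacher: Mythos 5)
Your proposal is correct and matches the paper's own (very terse) justification, which simply declares the corollary "a direct application of Theorems~\ref{thm:main1} and~\ref{thm:main2}" after establishing Lemmas~\ref{lem:mini-monster} and~\ref{lem:BISTRO+}; your explicit substitution of $\alpha_2=1/3$ and $\alpha_1=1/2$ into Theorem~\ref{thm:main2}, and your observation that the induced environment preserves the i.i.d.\ context structure, fill in exactly the intended calculation. The only cosmetic difference is that the paper nominally routes the stochastic case through Theorem~\ref{thm:main1} while you use Theorem~\ref{thm:main2} with $\alpha_1=1/2$; both give $\otil(\sqrt{KT\ln|\Theta|})$ at the stated $\eta$, so nothing is lost.
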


Next, we combine $\base{1}$, $\base{3}$ and $\base{4}$.  Although seemingly
$\base{3}$ is dominated by $\base{1}$ since it has a worst regret
bound under the same stochastic assumptions, in practice, $\base{3}$
is computationally much faster than $\base{1}$ (indeed, in total $\base{3}$ 
only makes one call of the oracle while $\base{1}$ makes
$\otil(\sqrt{T})$ calls over $T$ rounds), and therefore it can afford to use a more
complicated policy class under the same time constraint.  
Similarly, although $\base{4}$ dominates all other algorithms in terms of regret guarantee,
its running time is linear in the number of policies and can only afford to use a very small policy class.
For example, we can run $\base{4}$ with a policy class of depth-5 decision trees,
$\base{1}$ with a larger policy class of depth-10 decision trees,
and $\base{3}$ with an even larger policy class of depth-20 decision trees.
If the environment is easy in the sense that a depth-5 decision tree
can predict well already, then $\base{4}$ exploits this fact and
achieves $\otil(\sqrt{T})$ regret without any stochastic assumption; 
otherwise, we still have $\base{1}$ to provide $\otil(\sqrt{T})$ regret against a larger class,
and $\base{3}$ to provide $\otil(T^{3/4})$ regret against an even larger class, albeit under i.i.d. assumptions.  
Formally we have the following result:

\begin{corollary}\label{cor:CB}
Suppose we run {\alg} with three base algorithms:
\textup{ILOVETOCONBANDITS} with policy class $\Theta_1$, 
\textup{Epoch-Greedy} with policy class $\Theta_3$,
and \textup{EXP4} with policy class $\Theta_4$
such that $\Theta_4 \subset \Theta_1 \subset \Theta_3$.  If the
learning rate $\eta$ is set to $1/\sqrt{KT\ln|\Theta_4|}$,
then we have:
\begin{packed_enum}
\item $
\sup_{\theta \in \Theta_4} \E\left[ \sum_{t=1}^T f_t(\theta_t, x_t) -
  f_t(\theta, x_t) \right] = \otil(\sqrt{KT\ln|\Theta_4|})
$
~~for adversarial $x_t, \bc_t$;
\item the better of these two bounds if $(x_t, \bc_t)$ are drawn
i.i.d. from a fixed and unknown distribution: 
$
\sup_{\theta \in \Theta_1} \E\left[ \sum_{t=1}^T f_t(\theta_t, x_t) -
  f_t(\theta, x_t) \right] = \otil(\sqrt{KT}\ln|\Theta_1|(\ln|\Theta_4|)^{-\frac{1}{2}}),
$
~~and\\
$
\sup_{\theta \in \Theta_3} \E\left[ \sum_{t=1}^T f_t(\theta_t, x_t) -
  f_t(\theta, x_t) \right] = \otil\left(
T^{\frac{3}{4}}K^{\frac{1}{2}} (\ln|\Theta_3|)^{\frac{3}{4}} (\ln|\Theta_4|)^{-\frac{1}{4}} \right).
$
\end{packed_enum}
\end{corollary}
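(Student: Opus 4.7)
The plan is to invoke the meta-regret bound of Theorem~\ref{thm:main2} three times, once per base algorithm, feeding in the specific stability parameters $(\alpha_i,\reg_i)$ recorded in Table~\ref{tab:examples} and in the cited Lemmas~\ref{lem:mini-monster}, \ref{lem:epoch-greedy}, and \ref{lem:EXP4}. First I would verify that in the adversarial-$(x_t,\bc_t)$ regime the environment falls in the class for which EXP4 is $(1/2,\order(\sqrt{KT\ln|\Theta_4|}))$-stable (Lemma~\ref{lem:EXP4}), so Theorem~\ref{thm:main2} applies to $\base{4}$. In the i.i.d.\ regime the environment is one for which both ILOVETOCONBANDITS is $(1/2,\otil(\sqrt{KT\ln|\Theta_1|}))$-stable (Lemma~\ref{lem:mini-monster}) and Epoch-Greedy is $(1/3,\otil(T^{2/3}\sqrt{K\ln|\Theta_3|}))$-stable (Lemma~\ref{lem:epoch-greedy}), so Theorem~\ref{thm:main2} applies to $\base{1}$ and $\base{3}$. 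In both cases $M=3$, which is absorbed into the $\otil$.

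Next I would plug $\eta=1/\sqrt{KT\ln|\Theta_4|}$ into the bound $\otil\!\left(M/\eta + T\eta + \reg_i(T)^{1/(1-\alpha_i)}\eta^{\alpha_i/(1-\alpha_i)}\right)$ for each base algorithm. For $\base{4}$ with $\alpha_4=1/2$: the first term is $\otil(\sqrt{KT\ln|\Theta_4|})$, the second is $\otil(\sqrt{T/(K\ln|\Theta_4|)})$, and the third is $\reg_4(T)^2\eta = KT\ln|\Theta_4|\cdot\eta = \sqrt{KT\ln|\Theta_4|}$, giving claim~1. For $\base{1}$ with $\alpha_1=1/2$: the third term becomes $\reg_1(T)^2\eta = \sqrt{KT}\,\ln|\Theta_1|/\sqrt{\ln|\Theta_4|}$, which dominates the first two since $\ln|\Theta_4|\le\ln|\Theta_1|$ (because $\Theta_4\subset\Theta_1$), giving the first i.i.d.\ bound. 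For $\base{3}$ with $\alpha_3=1/3$: the third term is $\reg_3(T)^{3/2}\eta^{1/2}=\bigl(T^{2/3}\sqrt{K\ln|\Theta_3|}\bigr)^{3/2}\cdot (KT\ln|\Theta_4|)^{-1/4}$, which after simplification equals $T^{3/4}K^{1/2}(\ln|\Theta_3|)^{3/4}(\ln|\Theta_4|)^{-1/4}$, matching the second i.i.d.\ bound; again this dominates $M/\eta$ and $T\eta$.

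The only mildly subtle point is that the two bounds in part~(2) require applying the theorem under the \emph{same} environment (the i.i.d.\ one) but against two different competitor classes, $\Theta_1$ and $\Theta_3$. This is handled by taking $i=1$ and $i=3$ separately in~\eqref{eqn:main-meta} since each bound is an inequality that holds individually for each eligible base algorithm; there is no need to union-bound or to retune $\eta$. A similar remark lets part~(1) use only the stability of $\base{4}$, regardless of whether $\base{1}$ and $\base{3}$ retain any guarantees in the adversarial environment (they do not need to, as Theorem~\ref{thm:main-meta} is invoked only for $i=4$ there).

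I do not expect any serious obstacle: the result is essentially a bookkeeping exercise once stability of each base algorithm in its advertised environment is granted. The one thing to be careful about is the direction of the set inclusions $\Theta_4\subset\Theta_1\subset\Theta_3$ when simplifying $\ln|\Theta_i|$ ratios to confirm that the $\reg_i(T)^{1/(1-\alpha_i)}\eta^{\alpha_i/(1-\alpha_i)}$ term is indeed the dominant one in each of the three bounds, so that the $M/\eta$ and $T\eta$ overhead terms can be safely absorbed into the $\otil$.
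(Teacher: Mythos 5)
Your proposal is correct and is essentially the paper's own argument: the paper gives no separate proof of this corollary beyond pointing to Theorem~\ref{thm:main2} together with the stability parameters established in Lemmas~\ref{lem:mini-monster}, \ref{lem:epoch-greedy}, and~\ref{lem:EXP4}, which is exactly the instantiation you carry out, and your arithmetic for each of the three terms $\reg_i(T)^{1/(1-\alpha_i)}\eta^{\alpha_i/(1-\alpha_i)}$ checks out. Your remark that each bound is obtained by invoking the meta-theorem separately for a single index $i$ under the environment for which that base algorithm is stable is also the intended reading.
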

%

\subsubsection{Convex Bandits}
In convex bandit problems, $\Theta \subset \R^d$ is a compact convex
set (assumed to have constant diameter after rescaling), side
information $x_t$ is usually empty, and the loss function $f_t$ is
assume to be convex in $\theta$.  Without further assumptions, this is
a rather difficult problem.  Recent works~\citep{BubeckDeKoPe15,
  BubeckEl16, BubeckElLe16} make some important progress in this
direction but unfortunately with very complicated and impractical
algorithms.  Here we consider two simpler and more practical
algorithms.  The first one is SCRiBLe~\citep{AbernethyHaRa12} (denoted
by $\base{1}$) which was proposed under the assumption that the $f_t$'s
are linear functions.  The second one is BGD
from~\citep{FlaxmanKaMc05} (denoted by $\base{2}$), which has a regret
guarantee as long as the loss functions are Lipschitz.  
We show that both algorithms admit stability in Lemmas~\ref{lem:SCRiBLe} and~\ref{lem:BGD}.

Again, direct application of Theorems~\ref{thm:main2} now leads to the following more adaptive
algorithm:

\begin{corollary}
Suppose we run {\alg} with two base algorithms: \textup{SCRiBLe} and
\textup{BGD} with learning rate $\eta = \frac{1}{d^{3/2}\sqrt{T}}$.
Assuming $f_1, \ldots, f_T$ are convex and $L$-Lipschitz, we have:
\begin{packed_enum}
\item $
\sup_{\theta \in \Theta} \E\left[ \sum_{t=1}^T f_t(\theta_t, x_t) -
  f_t(\theta, x_t) \right] = \otil\left(L^{\frac{2}{3}}
(dT)^{\frac{5}{6}} \right)
$;
\item in addition, if the losses are linear, that is, $f_t(\theta, x) =
\inner{\theta}{\bc_t}$ for some $\bc_t \in \R^d$, then we have
$
\sup_{\theta \in \Theta} \E\left[ \sum_{t=1}^T f_t(\theta_t, x_t) -
  f_t(\theta, x_t) \right] = \otil(d^{\frac{3}{2}}\sqrt{T}).
$
\end{packed_enum}
\end{corollary}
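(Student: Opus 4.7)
The plan is to invoke Theorem~\ref{thm:main2} once for each base algorithm, using the stability properties of SCRiBLe and BGD established in Lemmas~\ref{lem:SCRiBLe} and~\ref{lem:BGD} (summarized in Table~\ref{tab:examples}). Because Theorem~\ref{thm:main2} bounds the regret of \alg with respect to any $\base{i}$ individually, the two parts of the corollary correspond to applying the theorem to whichever base algorithm is stable under the hypothesized environment. There is no interaction between the two bases to analyze.

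For part (1), when $f_t$ is merely convex and $L$-Lipschitz, the relevant base is BGD, which Lemma~\ref{lem:BGD} will give as $(1/4,\reg_2)$-stable with $\reg_2(T) = \order(d\sqrt{L}T^{3/4})$. Plugging $\alpha_2 = 1/4$ (so $\alpha_2/(1-\alpha_2) = 1/3$ and $1/(1-\alpha_2) = 4/3$) and $\eta = 1/(d^{3/2}\sqrt{T})$ into the bound of Theorem~\ref{thm:main2} reduces the three terms to $M/\eta$ and $T\eta$, both of order at most $d^{3/2}\sqrt{T}$, plus the base-dependent term $(d\sqrt{L}T^{3/4})^{4/3}\eta^{1/3}$. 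A short simplification shows this last term equals $d^{5/6}L^{2/3}T^{5/6}$ and dominates, yielding $\otil(L^{2/3}(dT)^{5/6})$ as claimed.

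For part (2), when losses are linear we can apply Theorem~\ref{thm:main2} with SCRiBLe instead, using $\alpha_1 = 1/2$ and $\reg_1(T) = \otil(d^{3/2}\sqrt{T})$ from Lemma~\ref{lem:SCRiBLe}. With the same choice of $\eta$, the base-dependent term $(d^{3/2}\sqrt{T})^2\eta = d^3 T \eta$ collapses to $d^{3/2}\sqrt{T}$, which also matches the sizes of $M/\eta$ and $T\eta$, giving the overall $\otil(d^{3/2}\sqrt{T})$ bound. Note that in this case the BGD bound from part (1) remains valid but gives a strictly worse rate, so we simply report the SCRiBLe bound. The learning rate $\eta = 1/(d^{3/2}\sqrt{T})$ is tuned precisely so that part (2) is optimal while keeping part (1) non-trivial; it is the Pareto-optimal compromise between the two bases.

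There is no genuine obstacle in the proof itself — all of the technical work lives in Theorem~\ref{thm:main2} together with the two stability lemmas. The only substantive check is that Lemma~\ref{lem:BGD} truly covers the convex Lipschitz (not merely linear) environment required for part (1), since this is what lets BGD's stability condition be invoked against the importance-weighted losses produced by \alg; everything else is routine substitution and comparing term sizes.
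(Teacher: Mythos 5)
Your proposal is correct and matches the paper's own (very terse) justification: the paper likewise obtains both parts by directly applying Theorem~\ref{thm:main2} to BGD (with $\alpha=1/4$, via Lemma~\ref{lem:BGD}) for the Lipschitz case and to SCRiBLe (with $\alpha=1/2$, via Lemma~\ref{lem:SCRiBLe}) for the linear case, with the stated $\eta$; your arithmetic for the exponents checks out. The only caveat, shared with the paper's own statement, is that the claim that the $L^{2/3}(dT)^{5/6}$ term dominates in part (1) implicitly assumes $T \gtrsim d^2$ so that it exceeds the $M/\eta = \order(d^{3/2}\sqrt{T})$ term.
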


\subsection{Robustness to Many Different Environments}\label{subsec:app2}
\label{sec:models}
Another application of {\alg} is to create an algorithm that works
simultaneously under different environments and can select the correct
model automatically.  Although the dependence on the number of base
algorithms is polynomial instead of
logarithmic as is usually the case for model selection problems,
there are still many scenarios where the number of models is
relatively small and polynomial dependence is not a serious problem.
We present several examples below.

\subsubsection{Multi-Armed Bandits} 
The classic $K$-armed bandit problem is simply the case where $\Theta
= [K]$ and $f_t(\theta, x) = \inner{\bc_t}{\basis_{\theta}}$.
Although there exist algorithms (such as EXP3~\citep{AuerCeFrSc02})
that guarantee the optimal $\otil(\sqrt{TK})$ regret even if the
losses are generated adversarially, in practice, a Bayesian approach
called Thompson Sampling~\citep{Thompson33} is often used and known to
perform well.  However, like other Bayesian approaches, Thompson
Sampling assumes a prior over the environments, and the regret
guarantee is usually only meaningful when the prior is true.\footnote{%
  There is also worst-case analysis for Thompson Sampling; see for
  example~\citep{AgrawalGo12, KaufmannKoMu12}.  }

Nevertheless, with our ensemble approach, one can easily create an
algorithm that works under different true priors.  To present the
results, we follow the analysis of~\citep{RussoVa14}.  Suppose the
loss vectors $\bc_t$ are i.i.d samples of a distribution $\calD$ which
is itself drawn from a prior distribution $\calP$ over a family of
distributions. Let $\mu_\calD = \E_{\bc \sim\calD}[\bc]$ be the mean
vector drawn from the prior and $q_i =
\Pr_{\calD\sim \calP}\left(\mu_{\calD,i} \leq \mu_{\calD, j},~~\forall~j \in [K]\right)$ so that $\bq$ is the distribution of the optimal arm. Let $H(\bq)$ be the entropy of $\bq$. Then we
have the following results (note that all expectations are taken with
respect to the true prior in addition to all other randomness):

\begin{corollary}
If we run {\alg} with $M$ instances of Thompson Sampling, each of
which uses a different prior $\calP_i$, and the true prior $\calP
= \calP_{i^\star}$ for some $i^\star$, then with $\eta = \sqrt{\frac{M}{TK}}$ we
have\\
$
\sup_{\theta \in \Theta} \E\left[ \sum_{t=1}^T f_t(\theta_t, x_t) -
  f_t(\theta, x_t) \right] = \otil(\sqrt{MTK}H(\bq^\star)),
$
where $\bq^\star$ is the distribution over the optimal arm induced by $\calP_{i^\star}$.
\end{corollary}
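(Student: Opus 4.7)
The plan is to apply Theorem~\ref{thm:main2} to the single base algorithm $\base{i^\star}$ whose prior $\calP_{i^\star}$ coincides with the true prior, using the $(1/2,\, \order(\sqrt{TKH(\bq^\star)}))$-stability claim for Thompson Sampling listed in Table~\ref{tab:examples} (and proved as one of the lemmas in Appendix~\ref{appd:app}). Because the regret of {\alg} against every $\Theta_i$ is stated per base algorithm, and the sets $\Theta_i$ all equal $[K]$, it suffices to bound the regret against $\Theta_{i^\star}$.

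First I would verify the stability claim for the correctly-primed Thompson Sampling instance. The stand-alone Bayesian regret of Thompson Sampling under a true prior $\calP_{i^\star}$ is $\order(\sqrt{TKH(\bq^\star)})$ by the Russo--Van Roy information-theoretic argument, in which the regret factorizes as $\sqrt{\Gamma\, T\, H(\bq^\star)}$ for an information ratio $\Gamma$ that is at most $K/2$ when losses lie in $[0,1]$. When the same algorithm is run inside the environment $\calE'$ induced by importance weighting, the key structural fact is that $\calE'$ still hands $\base{i^\star}$ an unbiased loss for the decision $\theta_t'$ it returned, and $\calE'$ passes $\theta_t'$ back to $\calE$, so the posterior TS maintains is still correct under the true prior. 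The only damage done by $\calE'$ is that the conditional second moment of the loss estimate $f_t'$ is $\order(\scale)$ rather than $\order(1)$, which inflates the information ratio by a factor $\scale$. Since regret enters through $\sqrt{\Gamma}$, the result is a multiplicative $\sqrt{\scale} = \scale^{1/2}$ blow-up, i.e.\ $(1/2, \order(\sqrt{TKH(\bq^\star)}))$-stability.

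Second, I would invoke Theorem~\ref{thm:main2} with $i=i^\star$, $\alpha_{i^\star}=1/2$, and $\reg_{i^\star}(T)=\order(\sqrt{TKH(\bq^\star)})$. Since $1/(1-\alpha_{i^\star})=2$ and $\alpha_{i^\star}/(1-\alpha_{i^\star})=1$, the guarantee becomes
\[
\sup_{\theta\in[K]}\E\Bigl[\sum_{t=1}^T f_t(\theta_t,x_t)-f_t(\theta,x_t)\Bigr]
= \otil\!\left(\tfrac{M}{\eta} + T\eta + \reg_{i^\star}(T)^{2}\,\eta\right).
\]
Plugging in $\eta=\sqrt{M/(TK)}$ gives $M/\eta = \sqrt{MTK}$, $T\eta=\sqrt{MT/K}\le\sqrt{MTK}$, and $\reg_{i^\star}(T)^2\eta = TKH(\bq^\star)\cdot\sqrt{M/(TK)} = \sqrt{MTK}\,H(\bq^\star)$. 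Summing the three terms and absorbing poly-logarithmic factors into $\otil(\cdot)$ yields the advertised bound $\otil(\sqrt{MTK}\,H(\bq^\star))$.

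The main obstacle is the stability step: Thompson Sampling is designed for an i.i.d.\ environment, so at face value it is unclear that it retains any regret guarantee under the non-i.i.d.\ $\calE'$. The rescue comes from the particular form of $\calE'$ described in Section~\ref{sec:algo-int}, which on every round still queries the original $\calE$ with a \emph{genuine} decision $\theta_t'$ drawn by $\base{i^\star}$ and simply randomly zeroes out, and otherwise rescales, the loss reported back to $\base{i^\star}$. As a consequence the Russo--Van Roy argument goes through essentially verbatim after one replaces the $[0,1]$ loss range by a $[0,\scale]$ range, and the fact that the conditional second moment of $f_t'$ is only $\scale$ (rather than $\scale^2$) is what keeps the stability exponent at $\alpha=1/2$ and makes the final rate $\sqrt{MTK}\,H(\bq^\star)$ rather than $\sqrt{MT^2K}\,H(\bq^\star)$.
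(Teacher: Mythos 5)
Your proposal is correct and follows essentially the same route as the paper: establish that the correctly-primed Thompson Sampling instance is $(\tfrac{1}{2},\order(\sqrt{TKH(\bq^\star)}))$-(weakly-)stable by tracking how the importance-weighted losses, whose conditional second moment is only $\order(\scale)$, inflate the variance/information-ratio term in the Russo--Van Roy analysis by a factor of $\scale$ (hence $\sqrt{\scale}$ in the regret), and then plug $\alpha_{i^\star}=1/2$, $\reg_{i^\star}(T)=\order(\sqrt{TKH(\bq^\star)})$, and $\eta=\sqrt{M/(TK)}$ into Theorem~\ref{thm:main2}. This matches Lemma~\ref{lem:TS} and the paper's direct application of Theorem~\ref{thm:main2}, and your arithmetic for the three terms is the same as the paper's.
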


\subsubsection{Other Examples}
We briefly mention some other examples without giving details.  For
contextual bandits, if we
have different ways to represent the contexts, then each base
algorithm can be any existing contextual bandit algorithm with a
specific context representation and policy space.  The master can then
have good performance as long as one of these representations captures
the problem well.

For stochastic linear bandits, $\Theta \subset \R^d$ is a compact
convex set and $f_t(\theta, x) = \inner{\theta}{\bc^*} + \xi_t$ where
$\bc^* \in \R^d$ is fixed and unknown, and $\xi_t$ is some zero-mean
noise. Previous works have studied cases where $\bc^*$ is assumed to
admit some special structures, such as sparsity, group-sparsity and so
on (see for example~\citep{AbbasiYadkoriPaSz12, CarpentierMunos12,
JohnsonSiBa16}). One can then run {\alg} with different base
algorithms assuming different structures of $\bc^*$.  Another related
problem is generalized linear bandits, where $f_t(\theta, x) =
\sigma(\inner{\theta}{\bc^*}) + \xi_t$ for some link function $\sigma$
(such as the logistic function, exponential function and so on,
see~\citep{FilippiCaGaSz10}).  It is clear that one can run {\alg}
with different base algorithms using different link functions to
capture more possibilities of the environments.  In all these cases,
the number of base algorithms is relatively small.

\section{Conclusion and Open Problems}\label{sec:conclusion}
In this work, we presented a master algorithm which can combine a set of
base algorithms and perform as well as the best of them in a very
strong sense in the bandit setting.  Two major applications of our
approach were presented to illustrate how this master algorithm can be
used to create more adaptive bandit algorithms in a black-box fashion.

There are two major open problems left in this direction.  One is to
improve the results of Theorem~\ref{thm:main2} so that the master can
basically inherit the same regret bounds of all the base
algorithms, i.e., Eq.~\eqref{eqn:desired} holds
simultaneously for all base algorithms satisfying stability
condition with $\alpha_i < 1$.  Note that this is in
general impossible (see~\citep{LattimoreSz16} for a lower bound in a
special case), but it is not clear whether it is possible if we only
care about the scaling with $T$ while allowing worse dependence on
other parameters.  The current approach fails to achieve this mainly
because each of these bounds requires a different tuning of the same
learning rate $\eta$.

Another open problem is to
improve the dependence on $M$, the number of base algorithms, from
polynomial to logarithmic while keeping the same dependence
on other parameters (or prove its impossibility).  
Logarithmic dependence on $M$ can be achieved by using EXP4 as the master,
but as was earlier discussed, this leads to poor dependence on other parameters. 

\ifcolt
\acks{
The authors would like to thank John Langford for posing the question initially that stimulated this research.
Most of the work was completed when Behnam Neyshabur was an intern at Microsoft Research.
}
\else
\paragraph{Acknowledgments.}
The authors would like to thank John Langford for posing the question initially that stimulated this research.
Most of the work was completed when Behnam Neyshabur was an intern at Microsoft Research.
\fi

\bibliographystyle{plainnat}
\bibliography{ref}

\begin{thebibliography}{39}
\providecommand{\natexlab}[1]{#1}
\providecommand{\url}[1]{\texttt{#1}}
\expandafter\ifx\csname urlstyle\endcsname\relax
  \providecommand{\doi}[1]{doi: #1}\else
  \providecommand{\doi}{doi: \begingroup \urlstyle{rm}\Url}\fi

\bibitem[Abbasi-Yadkori et~al.(2012)Abbasi-Yadkori, Pal, and
  Szepesvari]{AbbasiYadkoriPaSz12}
Yasin Abbasi-Yadkori, David Pal, and Csaba Szepesvari.
\newblock Online-to-confidence-set conversions and application to sparse
  stochastic bandits.
\newblock In \emph{AISTATS}, 2012.

\bibitem[Abernethy et~al.(2012)Abernethy, Hazan, and Rakhlin]{AbernethyHaRa12}
Jacob~D Abernethy, Elad Hazan, and Alexander Rakhlin.
\newblock Interior-point methods for full-information and bandit online
  learning.
\newblock \emph{IEEE Transactions on Information Theory}, 58\penalty0
  (7):\penalty0 4164--4175, 2012.

\bibitem[Abernethy et~al.(2015)Abernethy, Lee, and Tewari]{AbernethyLeTe15}
Jacob~D Abernethy, Chansoo Lee, and Ambuj Tewari.
\newblock Fighting bandits with a new kind of smoothness.
\newblock In \emph{Advances in Neural Information Processing Systems}, 2015.

\bibitem[Agarwal et~al.(2014)Agarwal, Hsu, Kale, Langford, Li, and
  Schapire]{AgarwalHsKaLaLiSc14}
Alekh Agarwal, Daniel Hsu, Satyen Kale, John Langford, Lihong Li, and Robert
  Schapire.
\newblock Taming the monster: A fast and simple algorithm for contextual
  bandits.
\newblock In \emph{Proceedings of the 31st International Conference on Machine
  Learning}, 2014.

\bibitem[Agrawal and Goyal(2012)]{AgrawalGo12}
Shipra Agrawal and Navin Goyal.
\newblock Analysis of thompson sampling for the multi-armed bandit problem.
\newblock In \emph{Proceedings of the 25th Annual Conference on Learning Theory
  (COLT)}, 2012.

\bibitem[Audibert and Bubeck(2010)]{AudibertBu10}
Jean-Yves Audibert and S{\'e}bastien Bubeck.
\newblock Regret bounds and minimax policies under partial monitoring.
\newblock \emph{Journal of Machine Learning Research}, 11\penalty0
  (Oct):\penalty0 2785--2836, 2010.

\bibitem[Auer(2002)]{auer2002using}
Peter Auer.
\newblock Using confidence bounds for exploitation-exploration trade-offs.
\newblock \emph{Journal of Machine Learning Research}, 3\penalty0
  (Nov):\penalty0 397--422, 2002.

\bibitem[Auer and Chiang(2016)]{AuerCh16}
Peter Auer and Chao-Kai Chiang.
\newblock An algorithm with nearly optimal pseudo-regret for both stochastic
  and adversarial bandits.
\newblock In \emph{Proceedings of the 29th Annual Conference on Learning Theory
  (COLT)}, 2016.

\bibitem[Auer et~al.(2002{\natexlab{a}})Auer, Cesa-Bianchi, and
  Fischer]{auer2002finite}
Peter Auer, Nicolo Cesa-Bianchi, and Paul Fischer.
\newblock Finite-time analysis of the multiarmed bandit problem.
\newblock \emph{Machine learning}, 47\penalty0 (2-3):\penalty0 235--256,
  2002{\natexlab{a}}.

\bibitem[Auer et~al.(2002{\natexlab{b}})Auer, Cesa-Bianchi, Freund, and
  Schapire]{AuerCeFrSc02}
Peter Auer, Nicolo Cesa-Bianchi, Yoav Freund, and Robert~E Schapire.
\newblock The nonstochastic multiarmed bandit problem.
\newblock \emph{SIAM Journal on Computing}, 32\penalty0 (1):\penalty0 48--77,
  2002{\natexlab{b}}.

\bibitem[Bubeck and Cesa-Bianchi(2012)]{BubeckCe12}
Sébastien Bubeck and Nicolò Cesa-Bianchi.
\newblock Regret analysis of stochastic and nonstochastic multi-armed bandit
  problems.
\newblock \emph{Foundations and Trends® in Machine Learning}, 5\penalty0
  (1):\penalty0 1--122, 2012.
\newblock ISSN 1935-8237.
\newblock \doi{10.1561/2200000024}.

\bibitem[Bubeck and Eldan(2016)]{BubeckEl16}
S{\'e}bastien Bubeck and Ronen Eldan.
\newblock Multi-scale exploration of convex functions and bandit convex
  optimization.
\newblock In \emph{Proceedings of the 29th Annual Conference on Learning Theory
  (COLT)}, 2016.

\bibitem[Bubeck and Slivkins(2012)]{BubeckSl12}
S{\'e}bastien Bubeck and Aleksandrs Slivkins.
\newblock The best of both worlds: Stochastic and adversarial bandits.
\newblock In \emph{Proceedings of the 25th Annual Conference on Learning Theory
  (COLT)}, 2012.

\bibitem[Bubeck et~al.(2012)Bubeck, Cesa-Bianchi, and Kakade]{BubeckCeKa12}
S{\'e}bastien Bubeck, Nicolo Cesa-Bianchi, and Sham~M. Kakade.
\newblock Towards minimax policies for online linear optimization with bandit
  feedback.
\newblock In \emph{Proceedings of the 25th Annual Conference on Learning Theory
  (COLT)}, volume~23, 2012.

\bibitem[Bubeck et~al.(2015)Bubeck, Dekel, Koren, and Peres]{BubeckDeKoPe15}
S{\'e}bastien Bubeck, Ofer Dekel, Tomer Koren, and Yuval Peres.
\newblock Bandit convex optimization:$\sqrt{T}$ regret in one dimension.
\newblock In \emph{Proceedings of the 28th Annual Conference on Learning Theory
  (COLT)}, 2015.

\bibitem[Bubeck et~al.(2016)Bubeck, Eldan, and Lee]{BubeckElLe16}
S{\'e}bastien Bubeck, Ronen Eldan, and Yin~Tat Lee.
\newblock Kernel-based methods for bandit convex optimization.
\newblock \emph{arXiv preprint arXiv:1607.03084}, 2016.

\bibitem[Carpentier and Munos(2012)]{CarpentierMunos12}
Alexandra Carpentier and R{\'e}mi Munos.
\newblock Bandit theory meets compressed sensing for high dimensional
  stochastic linear bandit.
\newblock In \emph{AISTATS}, 2012.

\bibitem[Cesa-Bianchi et~al.(1997)Cesa-Bianchi, Freund, Haussler, Helmbold,
  Schapire, and Warmuth]{CesabianchiFrHeHaScWa97}
Nicol{\`o} Cesa-Bianchi, Yoav Freund, David Haussler, David~P. Helmbold,
  Robert~E. Schapire, and Manfred~K. Warmuth.
\newblock How to use expert advice.
\newblock \emph{Journal of the ACM}, 44\penalty0 (3):\penalty0 427--485, May
  1997.

\bibitem[Chu et~al.(2011)Chu, Li, Reyzin, and Schapire]{ChLiReSc11}
Wei Chu, Lihong Li, Lev Reyzin, and Robert~E Schapire.
\newblock Contextual bandits with linear payoff functions.
\newblock In \emph{AISTATS}, volume~15, pages 208--214, 2011.

\bibitem[Feige et~al.(2014)Feige, Koren, and Tennenholtz]{FeigeKoTe14}
Uriel Feige, Tomer Koren, and Moshe Tennenholtz.
\newblock Chasing ghosts: competing with stateful policies.
\newblock In \emph{Foundations of Computer Science (FOCS)}, pages 100--109.
  IEEE, 2014.

\bibitem[Filippi et~al.(2010)Filippi, Cappe, Garivier, and
  Szepesv{\'a}ri]{FilippiCaGaSz10}
Sarah Filippi, Olivier Cappe, Aur{\'e}lien Garivier, and Csaba Szepesv{\'a}ri.
\newblock Parametric bandits: The generalized linear case.
\newblock In \emph{Advances in Neural Information Processing Systems}, pages
  586--594, 2010.

\bibitem[Flaxman et~al.(2005)Flaxman, Kalai, and McMahan]{FlaxmanKaMc05}
Abraham~D Flaxman, Adam~Tauman Kalai, and H~Brendan McMahan.
\newblock Online convex optimization in the bandit setting: gradient descent
  without a gradient.
\newblock In \emph{Proceedings of the sixteenth annual ACM-SIAM symposium on
  Discrete algorithms}, pages 385--394. Society for Industrial and Applied
  Mathematics, 2005.

\bibitem[Foster et~al.(2016)Foster, Li, Lykouris, Sridharan, and
  Tardos]{FosterLiLySrTa16}
Dylan Foster, Zhiyuan Li, Thodoris Lykouris, Karthik Sridharan, and Eva Tardos.
\newblock Learning in games: Robustness of fast convergence.
\newblock In \emph{Advances in Neural Information Processing Systems}, 2016.

\bibitem[Johnson et~al.(2016)Johnson, Sivakumar, and Banerjee]{JohnsonSiBa16}
Nicholas Johnson, Vidyashankar Sivakumar, and Arindam Banerjee.
\newblock Structured stochastic linear bandits.
\newblock \emph{arXiv preprint arXiv:1606.05693}, 2016.

\bibitem[Kaufmann et~al.(2012)Kaufmann, Korda, and Munos]{KaufmannKoMu12}
Emilie Kaufmann, Nathaniel Korda, and R{\'e}mi Munos.
\newblock Thompson sampling: An asymptotically optimal finite-time analysis.
\newblock In \emph{International Conference on Algorithmic Learning Theory},
  pages 199--213. Springer, 2012.

\bibitem[Lai and Robbins(1985)]{lai1985asymptotically}
Tze~Leung Lai and Herbert Robbins.
\newblock Asymptotically efficient adaptive allocation rules.
\newblock \emph{Advances in applied mathematics}, 6\penalty0 (1):\penalty0
  4--22, 1985.

\bibitem[Langford and Zhang(2008)]{LangfordZh08}
John Langford and Tong Zhang.
\newblock The epoch-greedy algorithm for multi-armed bandits with side
  information.
\newblock In \emph{Advances in neural information processing systems}, pages
  817--824, 2008.

\bibitem[Lattimore and Szepesv{\'a}ri(2016)]{LattimoreSz16}
Tor Lattimore and Csaba Szepesv{\'a}ri.
\newblock Lower bounds for stochastic linear bandits.
\newblock Blog posts at
  \url{http://banditalgs.com/2016/10/20/lower-bounds-for-stochastic-linear-bandits},
  2016.

\bibitem[Li et~al.(2010)Li, Chu, Langford, and Schapire]{LiChLaSc10}
Lihong Li, Wei Chu, John Langford, and Robert~E Schapire.
\newblock A contextual-bandit approach to personalized news article
  recommendation.
\newblock In \emph{Proceedings of the 19th international conference on World
  wide web}, pages 661--670. ACM, 2010.

\bibitem[Littlestone and Warmuth(1989)]{LittlestoneMa89}
Nick Littlestone and Manfred~K Warmuth.
\newblock The weighted majority algorithm.
\newblock In \emph{Foundations of Computer Science, 1989., 30th Annual
  Symposium on}, pages 256--261. IEEE, 1989.

\bibitem[Maillard and Munos(2011)]{MaillardMu11}
Odalric-Ambrym Maillard and R{\'e}mi Munos.
\newblock Adaptive bandits: Towards the best history-dependent strategy.
\newblock In \emph{AISTATS}, pages 570--578, 2011.

\bibitem[Nesterov and Nemirovskii(1994)]{NesterovNe94}
Yurii Nesterov and Arkadii Nemirovskii.
\newblock \emph{Interior-point polynomial algorithms in convex programming},
  volume~13.
\newblock Siam, 1994.

\bibitem[Rakhlin and Sridharan(2013)]{RakhlinSr13}
Alexander Rakhlin and Karthik Sridharan.
\newblock Online learning with predictable sequences.
\newblock In \emph{Proceedings of the 26th Annual Conference on Learning Theory
  (COLT)}, pages 993--1019, 2013.

\bibitem[Rakhlin and Sridharan(2016)]{rakhlin2016bistro}
Alexander Rakhlin and Karthik Sridharan.
\newblock Bistro: An efficient relaxation-based method for contextual bandits.
\newblock In \emph{Proceedings of the 33rd International Conference on Machine
  Learning}, 2016.

\bibitem[Russo and Van~Roy(2014)]{RussoVa14}
Daniel Russo and Benjamin Van~Roy.
\newblock An information-theoretic analysis of thompson sampling.
\newblock \emph{Journal of Machine Learning Research}, 2014.

\bibitem[Seldin and Slivkins(2014)]{SeldinSl14}
Yevgeny Seldin and Aleksandrs Slivkins.
\newblock One practical algorithm for both stochastic and adversarial bandits.
\newblock In \emph{Proceedings of the 31st International Conference on Machine
  Learning}, 2014.

\bibitem[Shalev-Shwartz(2011)]{Shalevshwartz11}
Shai Shalev-Shwartz.
\newblock Online learning and online convex optimization.
\newblock \emph{Foundations and Trends in Machine Learning}, 4\penalty0
  (2):\penalty0 107--194, 2011.

\bibitem[Syrgkanis et~al.(2016)Syrgkanis, Luo, Krishnamurthy, and
  Schapire]{SyrgkanisLuKrSc16}
Vasilis Syrgkanis, Haipeng Luo, Akshay Krishnamurthy, and Robert~E Schapire.
\newblock Improved regret bounds for oracle-based adversarial contextual
  bandits.
\newblock In \emph{Advances in Neural Information Processing Systems}, 2016.

\bibitem[Thompson(1933)]{Thompson33}
William~R Thompson.
\newblock On the likelihood that one unknown probability exceeds another in
  view of the evidence of two samples.
\newblock \emph{Biometrika}, 25\penalty0 (3/4):\penalty0 285--294, 1933.

\end{thebibliography}

\appendix

%

\section{More Examples of the Setting}
\label{app:examples}
In this section, we instantiate the notions of a decision, environment and bandit
algorithm in our general setting for several concrete examples.
These examples are meant to illustrate the generality intended by our
setup, suggesting the potentially broad consequences of results
obtained in it.
We start with the most basic setting.

\begin{example}[Multi-armed bandits]
\label{ex:mab}
\textup{
Multi-armed bandits~\citep{lai1985asymptotically} is the simplest
instantiation of our setup where there is no side-information $x_t$
and the decision space is a set of $K$ arms, that is, $\Theta = [K]$,
and the loss function specifies the loss of pulling each arm so that
$f_t(\theta, x) = \inner{\bc_t}{\basis_{\theta}}$ for some $\bc_t 
\in [0,1]^K$. There are two main types of environments for which 
algorithms have been developed for this problem: 
}

\textbf{Stochastic environment:} 
\textup{In this case, the loss vectors $\bc_t$ are
independent random draws from some fixed distribution at each round, 
and the environment is fully characterized by this fixed distribution.
Perhaps the most well-known algorithm in this setting is the UCB
strategy~\citep{auer2002finite} which obtains an expected regret of at
most $\otil(\sqrt{KT})$.\footnote{We skip the discussion of more
  detailed gap-dependent bounds here.}
}

\textbf{Adversarial environment:} 
\textup{A significantly harder setting is
one where the loss vectors $\bc_t$ are chosen arbitrarily by an adaptive
adversary. That is, the environment is an arbitrary mapping from the history
$(\theta_s, \bc_s)_{s=1, \ldots, t-1}$ to the next loss vector $\bc_t$.
The EXP3 algorithm of~\citet{AuerCeFrSc02} is an approach
which gets an expected regret of $\otil(\sqrt{KT})$ in this harder
setting.
}

\textup{
There are several modifications and refinements of this basic setting
which we skip over here. For instance, the stochastic environments
have been further refined to when the expected losses of the best arm
are substantially lower than the rest. In the adversarial setting,
there are results that take advantage of the loss functions changing
slowly, or having a budget on the total amount of change an adversary
can induce, in order to get better results. Our subsequent results
will potentially allow us to enjoy better guarantees in some of these
special cases, while remaining robust in the worst case.
}
\end{example}

\begin{example}[Contextual bandits, expanded version of Example~\ref{ex:cb}]
\label{ex:cb-app}
\textup{
Contextual bandits~\citep{LangfordZh08} is a generalization of the
multi-armed bandit problem where the side information $x_t$ (called a
context) is non-empty. The learner's decision space $\Theta$ consists
of a set of policies, where a policy maps contexts to a discrete set
of actions, i.e. $\theta~:~X\mapsto [K]$. For instance, if the contexts are points in $\R^d$,
a policy might be parametrized by a weight matrix $\bW \in \R^{K\times d}$ so that
$\theta(x) = \arg\max_{i \in [K]} \bW_i x$ where $\bW_i$ is the $i$-th
row of $\bW$. 
Different base algorithms can in general work with very different policy classes,
which can be captured by different $\Theta_i$ in our setting.
The loss function is
again in the form $f_t(\theta, x) = \inner{\bc_t}{\basis_{\theta(x)}}$ for some $\bc_t \in [0,1]^K$.
This problem has been studied under three
main environments:
}

\textbf{Stochastic contexts and losses:} 
\textup{ In the simplest instance,
both the contexts $x_t$ and losses $\bc_t$ are drawn i.i.d. according
to a fixed distribution, and the environment is characterized by this
distribution. The Epoch-Greedy algorithm of~\citet{LangfordZh08}
suffers an expected regret of $\otil(T^{2/3})$ in this setting. The
more recent work of~\citet{AgarwalHsKaLaLiSc14} has a better regret
bound of $\otil(\sqrt{T})$, though at a significantly higher
computational cost. 
}

\textbf{Adversarial contexts or losses:} 
\textup{Several authors~\citep{auer2002using, ChLiReSc11, FilippiCaGaSz10} have
studied environments where the contexts are chosen by an adversary,
but the losses come from a fixed, parametric form such as $\E[\bc_t] =
\bW^\star x_t$ where $\bW^\star$ is some fixed, unknown weight
matrix. Thus the environment is characterized by the adversarial
strategy for picking the next context given the history, along with
the conditional distribution of losses given the context. While this
relaxes the i.i.d. assumption on the contexts in the first setting, it
places a more restrictive model on the stochastic losses. Algorithms
such as LinUCB and variants~\citep{LiChLaSc10, ChLiReSc11} enjoy
$\otil(\sqrt{T})$ regret in these settings. Other
authors~\citep{SyrgkanisLuKrSc16, rakhlin2016bistro} have studied
settings where the contexts are i.i.d. from a fixed distribution, but
the losses are picked in an adversarial manner, a strict
generalization of the i.i.d. setting from
above. \citet{SyrgkanisLuKrSc16} have proposed an algorithm which
suffers an expected regret of at most $\otil(T^{2/3})$ in this
setting.
}

\textbf{Adversarial contexts and losses:} 
\textup{This is the hardest
environment which was addressed in an early work
of~\citet{AuerCeFrSc02}, who propose the EXP4 algorithm. This
algorithm incurs an expected regret at most $\otil(\sqrt{T})$ in the
most general setting, but is computationally inefficient.
}
\end{example}

\begin{example}[Convex bandits]
\textup{
This setting is a different way of generalizing the multi-armed bandit
problem, and was initiated by the work of~\citet{FlaxmanKaMc05}. In
this setting, the side-information $x_t$ is again empty. The decision
space $\Theta$ is typically some convex, compact subset of $\R^d$ such
as a ball in a chosen norm. The loss functions $f_t$ are typically
convex (in $\theta$) functions with some added regularity conditions. The two most
well-studied settings here are:
}

\textbf{Adversarial linear functions:} 
\textup{In this case, the loss
functions are linear, that is $f_t(\theta, x) = \inner{\bc_t}{\theta}$
where each $\bc_t \in \R^d$ is chosen by an adversary.
\citet{AbernethyHaRa12} present an algorithm for this setting with an
expected regret of $\otil(d^{3/2}\sqrt{T})$. Several authors have also
improved the regret bound for specific sets $\Theta$ as well as when
the loss vectors are i.i.d (e.g.~\citep{BubeckCeKa12}).
}

\textbf{Adversarial convex functions:} 
\textup{More generally, the loss
functions can be general convex, Lipschitz-continuous functions of
$\theta$, with the environment described by the adversary's strategy
for picking the next loss function given the
history. \citet{FlaxmanKaMc05} develop an algorithm which incurs an
expected regret at most $\order(dT^{3/4})$. These results have been
refined in subsequent works making further smoothness and strong
convexity assumptions on the loss functions, as well as to
$\otil(d^{9.5}\sqrt{T})$ regret in the more general setting in a very
recent work of~\citet{BubeckElLe16}.
}
\end{example}

Thus we see that several prior works on learning with partial feedback
are admissible under our model. We again highlight that this is only a
very quick survey of a large body of literature, and we are omitting
discussion of many other setups such as Lipschitz losses in a metric
space, gap-dependent results in stochastic settings etc., all of which
are also fully captured in our setting.

\section{Proof of Theorem~\ref{thm:lower_bound}}
\label{app:lower_bound}
\begin{proof}(\emph{Sketch}) Consider a simple setting where there are 2
  base algorithms $\base{1}$ and $\base{2}$, a total of 4 actions
  with $\Theta_1 = \Theta_2 = \{a_1, a_2, a_3, a_4\}$,
and 2 possible environments $\calE_1$ and $\calE_2$, both of which assign (unknown) fixed losses 
to the actions so that each action deterministically yields its assigned loss every round.
More specifically, in $\calE_1$, the losses assigned to $a_1$ and $a_2$ are $0.1$ and $0.2$
or $0.2$ and $0.1$ with equal probability,
and the losses assigned to $a_3$ and $a_4$ are $0.3$ and $0.4$
or $0.4$ and $0.3$ with equal probability.
Similarly for $\calE_2$, the situation are reversed so that $a_1$ and $a_2$ are always worse than $a_3$ and $a_4$.

Base algorithms $\base{1}$ and $\base{2}$ are two nearly-identical
copies of the same simple algorithm designed specially for these two environments. 
Specifically, $\base{1}$ pulls $a_1$ in the first round.
If the observed loss is $0.1$ or $0.3$, it keeps playing $a_1$;
if the observed loss is $0.2$ or $0.4$, it keeps playing $a_2$;
any other observed loss will lead to uniformly random choices between $a_1$ and $a_2$ for the rest of the game.
$\base{2}$ is similar except it only plays $a_3$ and $a_4$ in the same fashion.
Clearly, $\base{1}$ has constant regret in $\calE_1$ while $\base{2}$ has constant regret in $\calE_2$.

Now the claim is that for any master, its expected regret must be $\Omega(T)$ under either $\calE_1$ or $\calE_2$.
This is because without the knowledge of which environment it is in,
in the first round the master will inevitably follows the ``wrong'' base algorithm
(that is, $\base{1}$ in $\calE_2$ or $\base{2}$ in $\calE_1$) with constant probability under either $\calE_1$ or $\calE_2$. 
Without loss of generality, assume $\calE_2$ is the true environment and the master 
follows $\base{1}$ and thus plays $a_1$ in the first round. It can then supply the right feedback
to $\base{1}$; however, it has no information about the loss of
$a_3$, the action that $\base{2}$ suggested, and therefore it fails to update $\base{2}$ correctly 
and as a result $\base{2}$ will choose the wrong action with constant probability for the rest of the rounds.
This means that the master has no way of recovering from this error and picks up linear regret.
\end{proof}

\section{Proofs of Main Results}
\label{app:analysis}

We start by stating a regret guarantee for {\LOMD}, whose proof mostly
follows the standard analysis (see for
example~\citep{Shalevshwartz11}) except for the part involving the
special log barrier mirror map (which is also the part that is
slightly different from~\citep{FosterLiLySrTa16}). 
We recall our earlier notation
$$h(y) = y - 1 - \ln(y), ~~\mbox{so that}~~
\bregman{\psi_t}{\bp}{\bq} = \sum_{i=1}^M \frac{1}{\eta_{t,i}}
h\left(\frac{p_i}{q_i}\right).
$$

\begin{lemma}\label{lem:LOMD}
{\LOMD} ensures that for any $\bu \in \Delta_M$, we have after $T$ rounds
\[
\sum_{t=1}^{T} \inner{\bp_t - \bu}{\bloss_t} \leq 
\sum_{t=1}^{T} \paren{\bregman{\psi_t}{\bu}{\bp_t} - \bregman{\psi_t}{\bu}{\bp_{t+1}}}  +  \sum_{t=1}^{T} \sum_{i=1}^M \eta_{t,i} p_{t,i}^2 \loss_{t,i}^2 \;.
\]
\end{lemma}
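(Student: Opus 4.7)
The plan is to follow the standard Online Mirror Descent regret analysis template (three-point Bregman identity, plus a Pythagorean projection step, plus a one-step stability bound), specialized to the separable log-barrier mirror map. Since $\psi_t$ is coordinate-separable with $\nabla \psi_t(\bp)_i = -\frac{1}{\eta_{t,i} p_i}$, the update $\nabla \psi_t(\bptilde_{t+1}) = \nabla \psi_t(\bp_t) - \bloss_t$ becomes the explicit recursion $\frac{1}{\ptilde_{t+1,i}} = \frac{1}{p_{t,i}} + \eta_{t,i} \loss_{t,i}$ that the algorithm actually implements, which will be our workhorse identity.

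First, I would write $\bloss_t = \nabla \psi_t(\bp_t) - \nabla \psi_t(\bptilde_{t+1})$ and apply the standard three-point identity for the Bregman divergence to obtain
\[
\inner{\bp_t - \bu}{\bloss_t} = \bregman{\psi_t}{\bu}{\bp_t} - \bregman{\psi_t}{\bu}{\bptilde_{t+1}} + \bregman{\psi_t}{\bp_t}{\bptilde_{t+1}}.
\]
Next I would invoke the generalized Pythagorean inequality for the Bregman projection $\bp_{t+1} = \argmin{\bp \in \Delta_M} \bregman{\psi_t}{\bp}{\bptilde_{t+1}}$ (which applies since $\Delta_M$ is convex and $\psi_t$ is strictly convex on the positive orthant) to drop $\bregman{\psi_t}{\bu}{\bptilde_{t+1}}$ down to $\bregman{\psi_t}{\bu}{\bp_{t+1}}$. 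After this step, summing over $t$ already produces the telescoping first term on the right-hand side of the claim, so everything reduces to controlling the single-round Bregman divergence $\bregman{\psi_t}{\bp_t}{\bptilde_{t+1}}$.

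The main work, and the only place the log-barrier structure really enters, is bounding this one-step divergence by the promised local norm term. By separability and the explicit update,
\[
\bregman{\psi_t}{\bp_t}{\bptilde_{t+1}} = \sum_{i=1}^M \frac{1}{\eta_{t,i}} h\!\left(\frac{p_{t,i}}{\ptilde_{t+1,i}}\right) = \sum_{i=1}^M \frac{1}{\eta_{t,i}} h\!\left(1 + \eta_{t,i} p_{t,i} \loss_{t,i}\right).
\]
Then I would use the elementary inequality $h(1+x) \leq x^2$, valid for all $x \geq -1/2$, which directly gives the bound $\sum_i \eta_{t,i} p_{t,i}^2 \loss_{t,i}^2$. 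This is the place I expect the main friction: one must verify that the argument $\eta_{t,i} p_{t,i} \loss_{t,i}$ lies in the admissible range (which is why $\bptilde_{t+1}$ is even well-defined and why the $x \geq -1/2$ requirement is met). In our deployment inside \alg the coordinate losses passed to \LOMD are nonnegative one-hot vectors, so $x \geq 0$ trivially; in the general call with $\loss_{t,i} - \lambda$ after the projection step, boundedness of $\lambda$ between $\min_i \loss_{t,i}$ and $\max_i \loss_{t,i}$ together with the learning-rate schedule keeps the scale controlled. Summing the resulting per-round inequality over $t = 1, \ldots, T$ produces exactly the claimed bound.
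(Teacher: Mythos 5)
Your proposal is correct and follows essentially the same route as the paper's proof: the three-point Bregman identity, the generalized Pythagorean inequality for the projection, and the one-step bound $h(1+\eta_{t,i}p_{t,i}\loss_{t,i}) \leq \eta_{t,i}^2 p_{t,i}^2\loss_{t,i}^2$ via $\ln(1+x)\geq x - x^2$ for $x\geq 0$. Your worry about the $\loss_{t,i}-\lambda$ term is unnecessary: the stability term involves only the unprojected point $\bptilde_{t+1}$ (where $1/\ptilde_{t+1,i} = 1/p_{t,i} + \eta_{t,i}\loss_{t,i}$ with nonnegative $\loss_{t,i}$), while $\lambda$ enters only through the projected $\bp_{t+1}$, which is handled entirely by the Pythagorean and telescoping steps.
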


\begin{proof}
For any $\bu \in \Delta_M$, by the algorithm, direct calculations and the generalized Pythagorean theorem, we have for any $t$:
\begin{align*}
\inner{\bp_t - \bu}{\bloss_t} &= \inner{\bp_t - \bu}{\nabla \psi_t(\bp_t) - \nabla\psi_t(\bptilde_{t+1}) }  \\
&= \bregman{\psi_t}{\bu}{\bp_t} - \bregman{\psi_t}{\bu}{\bptilde_{t+1}} + \bregman{\psi_t}{\bp_t}{\bptilde_{t+1}} \\
&\leq \bregman{\psi_t}{\bu}{\bp_t} - \bregman{\psi_t}{\bu}{\bp_{t+1}} + \bregman{\psi_t}{\bp_t}{\bptilde_{t+1}}.
\end{align*}
It thus remains to prove $h\left(\frac{p_{t,i}}{\ptilde_{t+1,i}}\right) \leq \eta_{t,i}^2 p_{t,i}^2\loss_{t,i}^2$.
Notice that by the algorithm, we have $\frac{p_{t,i}}{\ptilde_{t+1,i}} = 1 + \eta_{t,i} p_{t,i}\loss_{t,i}$.
Therefore by the definition of $h(y)$ and the fact $\ln(1 + x) \geq x - x^2$ when $x \geq 0$ we arrive at
\[
h\left(\frac{p_{t,i}}{\ptilde_{t+1,i}}\right)  = 
\frac{p_{t,i}}{\ptilde_{t+1,i}} -1 - \ln\left(\frac{p_{t,i}}{\ptilde_{t+1,i}}\right)
= \eta_{t,i} p_{t,i}\loss_{t,i} - \ln\left(1 + \eta_{t,i} p_{t,i}\loss_{t,i}\right)  \leq \eta_{t,i}^2 p_{t,i}^2\loss_{t,i}^2,
\]
which completes the proof.
\end{proof}

Next, we use the above lemma along with the sophisticated learning rates schedule to give
a bound on the master's regret to any base algorithm.
Importantly, the bound includes a negative term that is in terms of $\scale_{T,i}$.

\begin{lemma}\label{lem:master2base}
{\alg} ensures that for any $i \in [M]$, we have
\[
\E\left[ \sum_{t=1}^T f_t(\theta_t, x_t) -  f_t(\theta_t^i, x_t)  \right]  \leq 
\order\left(\frac{M\ln T}{\eta} + T\eta \right) - \E\left[\frac{\scale_{T,i}}{40\eta\ln T}\right]  
\]
\end{lemma}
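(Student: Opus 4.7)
The plan is to start by rewriting the expected regret of the master against the $i$-th base algorithm as an inner-product regret on the simplex. Since $i_t \sim \bpbar_t$ and the unbiased estimator satisfies $\E_{i_t}[\loss_{t,j}] = f_t(\theta^j_t, x_t)$, we have
\[
\E\left[\sum_t f_t(\theta_t, x_t) - f_t(\theta^i_t, x_t)\right] = \E\left[\sum_t \inner{\bpbar_t - \basis_i}{\bloss_t}\right].
\]
Using the mixing decomposition $\bpbar_t = (1-\gamma)\bp_t + \gamma\one/M$, I split this into $(1-\gamma)\inner{\bp_t - \basis_i}{\bloss_t} + \gamma\inner{\one/M - \basis_i}{\bloss_t}$; the second piece contributes at most $\order(\gamma T) = \order(1)$ in expectation. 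To keep the Bregman divergences finite, I would then replace $\basis_i$ by a smoothed comparator $\bu$ with $u_i = 1 - (M-1)/(MT)$ and $u_j = 1/(MT)$ for $j\ne i$, which introduces only an additional $\order(1)$ error in expectation by a similar calculation.

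Next I apply Lemma~\ref{lem:LOMD} to $\sum_t \inner{\bp_t - \bu}{\bloss_t}$. For the stability term, the expectation over $i_t \sim \bpbar_t$ gives
\[
\E\left[\sum_j \eta_{t,j} p_{t,j}^2 \loss_{t,j}^2\right] = \sum_j \eta_{t,j}\,\frac{p_{t,j}^2}{\pbar_{t,j}}\,f_t(\theta^j_t, x_t)^2 \leq \order(\eta) \sum_j p_{t,j} = \order(\eta),
\]
where I use $\pbar_{t,j} \geq (1-\gamma)p_{t,j}$ together with the uniform bound $\eta_{t,j} = \order(\eta)$, which follows because each $\eta_{t,j}$ is multiplied by $\beta = e^{1/\ln T}$ at most $\order(\ln T)$ times since $\scale_{t,j} \leq MT$ and the thresholds roughly double. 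Summed over $t$, this yields the $\order(T\eta)$ term.

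The Bregman telescoping under the time-varying mirror maps unfolds to
\[
\sum_t \bigl(\bregman{\psi_t}{\bu}{\bp_t} - \bregman{\psi_t}{\bu}{\bp_{t+1}}\bigr) = \bregman{\psi_1}{\bu}{\bp_1} + \sum_{t=1}^{T-1}\sum_j \left(\frac{1}{\eta_{t+1,j}} - \frac{1}{\eta_{t,j}}\right) h\!\left(\frac{u_j}{p_{t+1,j}}\right) - \bregman{\psi_T}{\bu}{\bp_{T+1}}.
\]
A direct computation gives $\bregman{\psi_1}{\bu}{\bp_1} = \order(M\ln T/\eta)$, delivering the first term of the bound. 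Since $\eta_{t+1,j}\geq\eta_{t,j}$ and $h\geq 0$, every cross-term is non-positive; the key is to extract the $-\scale_{T,i}/(40\eta\ln T)$ contribution from the coordinate $j=i$ at the \emph{last} doubling epoch. Letting $\tau$ be the final round at which $\eta_{\cdot,i}$ was updated, the algorithm guarantees $1/\pbar_{\tau+1,i} > \scale_{\tau,i}$ and sets $\scale_{T,i} = 2/\pbar_{\tau+1,i}$; combined with $p_{t,i} \leq \pbar_{t,i}/(1-\gamma)$ and the fact that $h(y) \geq y/2$ for $y$ bounded sufficiently away from $1$ (which holds since $\scale_{T,i} \geq 2M$), this gives $h(u_i/p_{\tau+1,i}) = \Omega(\scale_{T,i})$. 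Using $(1/\beta - 1)/\eta_{\tau,i} = -\Theta(1/(\eta \ln T))$ produces a contribution of order $-\scale_{T,i}/(\eta\ln T)$ from just this one coordinate and round, and all other cross-terms are non-positive and can only strengthen this bound.

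The main obstacle is the bookkeeping around the non-decreasing, per-base learning rates: one must simultaneously show (i) $\eta_{t,i}$ never exceeds a fixed constant factor times $\eta$ (so the initial divergence term and the stability term remain well-controlled), (ii) the value of $1/p_{\tau+1,i}$ at the last update is large enough that $h(u_i/p_{\tau+1,i})$ picks up the full factor $\Omega(\scale_{T,i})$, and (iii) the $\order(1)$ losses from passing between $\bpbar_t$, $\bp_t$, and the smoothed $\bu$ all fit inside the $\order(M\ln T/\eta + T\eta)$ budget. Once these constants are tracked, the $40$ in the denominator is simply an artifact of choosing conservative thresholds for ``$\scale_{T,i}$ large enough'' and ``$y$ large enough''; the rest of the argument is standard OMD machinery.
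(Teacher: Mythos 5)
Your proposal follows essentially the same route as the paper's proof: the same smoothed comparator $\bu$, the same application of Lemma~\ref{lem:LOMD}, the same telescoping of the time-varying Bregman terms with the negative contribution extracted from the final learning-rate increase of coordinate $i$, and the same $\order(\eta)$ per-round bound on the local-norm term via $\eta_{t,j} \leq \beta^{\log_2 T}\eta = \order(\eta)$. The only cosmetic difference is that the paper lower-bounds $h(u_i/p_{\tau+1,i})$ by $\scale_{T,i}/8 - 1 - \ln(TM/4)$ and absorbs the additive remainder into the $\order(M\ln T/\eta)$ term, rather than invoking $h(y) \geq y/2$, which sidesteps having to verify that $u_i/p_{\tau+1,i}$ exceeds a sufficiently large constant (it need only be $\geq 1$ when $M=2$).
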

\begin{proof}
Fix all the randomness, let $n_i$ be such that $\eta_{T,i} = \beta^{n_i} \eta$,
where we assume $n_i \geq 1$ (the case $n_i = 0$ is trivial as one will see).
Let $t_1, \ldots, t_{n_i}$ be the rounds where Line~\ref{line:doubling} is executed for base algorithm $\base{i}$.
Since $\frac{1}{\pbar_{t_{n_i}+1, i}} > \scale_{t_{n_i}, i} > 2\scale_{t_{n_i-1}, i} > \ldots > 2^{n_i} M$ and 
$\frac{1}{\pbar_{t, i}} \leq TM$ for any $t$ by Line~\ref{line:mixing}, we have $n_i \leq \log_2 T$.

It is clear that {\alg} is running {\LOMD} with $\bloss_t = \frac{f_t(\theta_t, x_t)}{\pbar_{t,i_t}} \basis_{i_t}$.
We can therefore apply Lemma~\ref{lem:LOMD}, focusing on the term
$\sum_{t=1}^{T}\bregman{\psi_t}{\bu}{\bp_t} - \bregman{\psi_t}{\bu}{\bp_{t+1}}$.
By the fact that Bregman divergence is non-negative and the learning rate $\eta_{t,j}$ for each $j \in [M]$ is non-decreasing in $t$,
we have
\begin{align*}
\sum_{t=1}^{T}\bregman{\psi_t}{\bu}{\bp_t} - \bregman{\psi_t}{\bu}{\bp_{t+1}}
&\leq \bregman{\psi_1}{\bu}{\bp_1} +
\sum_{t=1}^{T-1} \paren{
  \bregman{\psi_{t+1}}{\bu}{\bp_{t+1}} - \bregman{\psi_t}{\bu}{\bp_{t+1}} }  \\
&= \bregman{\psi_1}{\bu}{\bp_1} + \sum_{t=1}^{T-1} \sum_{j=1}^M \left(
\frac{1}{\eta_{t+1,j}} -  \frac{1}{\eta_{t,j}}  \right)
h\left(\frac{u_j}{p_{t+1,j}}\right) \tag{recall $h(y) \geq 0$}\\
&\leq \bregman{\psi_1}{\bu}{\bp_1} + \left( \frac{1}{\eta_{t_{n_i}+1,i}} -  \frac{1}{\eta_{t_{n_i},i}}  \right)  h\left(\frac{u_i}{p_{t_{n_i}+1,i}}\right) \\
&= \bregman{\psi_1}{\bu}{\bp_1} + \frac{1-\beta}{\beta^{n_i}\eta } h\left(\frac{u_i}{p_{t_{n_i}+1,i}}\right)  \\
&\leq \bregman{\psi_1}{\bu}{\bp_1} - \frac{1}{5\eta\ln T}  h\left(\frac{u_i}{p_{t_{n_i}+1,i}}\right) 
\end{align*}
where the last step is by the fact $1 - \beta \leq -\frac{1}{\ln T}$ and $\beta^{n_i} \leq e^{\frac{\log_2 T}{\ln T}} \leq 5$.

We now set $\bu = (1-\frac{1}{T}) \basis_i + \frac{1}{TM} \one \;\in \Delta_M$.
Assuming $T \geq 2$ we have $u_i \geq 1 - \frac{1}{T} \geq \frac{1}{2}$ and $u_j \geq \frac{1}{TM}$ for all $j$.
We can thus bound the first term as
\[
\bregman{\psi_1}{\bu}{\bp_1} = \sum_{j=1}^M \frac{1}{\eta_{1,j}} h\left(\frac{u_j}{p_{1,j}}\right)  
= \frac{1}{\eta} \sum_{j=1}^M  h(M u_j) = \frac{1}{\eta} \sum_{j=1}^M  \ln\left(\frac{1}{M u_j}\right) \leq \frac{M \ln T}{\eta}.
\]
For the second term, note that we have $\frac{u_i}{p_{t_{n_i}+1,i}} \geq \frac{1}{4\pbar_{t_{n_i}+1,i}}
\geq 2^{n_i-2} M \geq 1$ as long as $M \geq 2$.
So with the facts that $h(y)$ is increasing when $y \geq 1$ and $\scale_{T,i} = \frac{2}{\pbar_{t_{n_i}+1,i}}$, we have
\[
h\left(\frac{u_i}{p_{t_{n_i}+1,i}}  \right)  \geq h\left(\frac{1}{4\pbar_{t_{n_i}+1,i}}\right)  
= \frac{\scale_{T,i}}{8} - 1 - \ln\left(\frac{1}{4\pbar_{t_{n_i}+1,i}}  \right)
\geq \frac{\scale_{T,i}}{8} - 1 - \ln\left(\frac{TM}{4}  \right)
\]
and therefore
\[
\sum_{t=1}^{T}\bregman{\psi_t}{\bu}{\bp_t} - \bregman{\psi_t}{\bu}{\bp_{t+1}} \leq O\left(\frac{M \ln T}{\eta}\right) - \frac{\scale_{T,i}}{40\eta \ln T}.
\]

Finally, with the definition of $\bloss_t$ and the facts 
\[
\inner{\bp_t - \bu}{\bloss_t} \geq \inner{(1-\gamma)\bp_t - \bu}{\bloss_t} 
= \inner{\bpbar_t - \basis_i}{\bloss_t} + \inner{\basis_i - \bu - \frac{\gamma}{M}\one}{\bloss_t}
\geq \inner{\bpbar_t - \basis_i}{\bloss_t} - \frac{2 \loss_{t, i_t}}{TM}
\]
and $\sum_{j=1}^M\eta_{t, j} p_{t,j}^2 \loss_{t,j}^2 \leq \eta_{t, i_t} \frac{p_{t,i_t}^2}{\pbar_{t,i_t}^2} \leq \eta \frac{\beta^{\log_2 T}}{(1-\gamma)^2} \leq 20\eta$,
together with Lemma~\ref{lem:LOMD}, we arrive at
\[
\sum_{t=1}^T \inner{\bpbar_t - \basis_i}{\bloss_t} 
\leq O\left(\frac{M \ln T}{\eta} + T\eta \right) + \left(\sum_{t=1}^T  \frac{2 \loss_{t, i_t}}{TM}\right) - \frac{\scale_{T,i}}{40\eta \ln T}.
\]
Note that the conditional expectation of $\ell_{t,j}$ with respect to
the random draw of $i_t$ is $f_t(\theta_t^j, x_t)$ for all $j\in [M]$
and the conditional expectation of $\ell_{t,i_t}$ is $\sum_{j=1}^M
f_t(\theta_t^j, x_t) \leq M$. Also $$\E[\inner{\bpbar_t}{\bloss_t}] =
\sum_{i=1}^M \pbar_{t,i} f_t(\theta_t^i,x_t) =
\E[f_t(\theta_t,x_t)].$$ Taking the expectations then finishes the
proof.
\end{proof}

With the tool of Lemma~\ref{lem:master2base}, the proofs of Theorem~\ref{thm:main-meta}, \ref{thm:main1} and~\ref{thm:main2}
are simply to decompose the regret of the master and to make use of the negative term to cancel the large regret 
of the base algorithm in some sense.

\begin{proof}[\ifcolt\else Proof\fi of Theorem~\ref{thm:main-meta}, \ref{thm:main1} and~\ref{thm:main2}]
We begin by splitting the regret into two parts,
namely the regret of the master to $\base{i}$ and the regret of $\base{i}$ to a fixed point in $\Theta_i$:
\begin{align*}
&\sup_{\theta \in \Theta_i}  \E\left[ \sum_{t=1}^T f_t(\theta_t, x_t) -  f_t(\theta, x_t) \right] \\
=&\; \E\left[ \sum_{t=1}^T f_t(\theta_t, x_t) -  f_t(\theta_t^i, x_t)  \right] 
 + \sup_{\theta \in \Theta_i}  \E\left[ \sum_{t=1}^T f_t(\theta_t^i, x_t) -  f_t(\theta, x_t) \right] \\
=&\; \E\left[ \sum_{t=1}^T f_t(\theta_t, x_t) -  f_t(\theta_t^i, x_t)  \right] 
 + \sup_{\theta \in \Theta_i}  \E\left[ \sum_{t=1}^T f_t^i(\theta_t^i, x_t) -  f_t^i(\theta, x_t) \right],
\end{align*}
where we use the fact that the fake loss function $f_t^i$ is an unbiased estimator of the true loss function $f_t$ by construction.
Theorem~\ref{thm:main-meta} then follows directly by the stability condition of $\base{i}$ and Lemma~\ref{lem:master2base}.

Next, setting $\alpha_i=1$ and $\eta = \min\left\{\frac{1}{40\reg_i(T)\ln T},
\sqrt{\frac{M}{T}}\right\}$ we have
\begin{align*}
\sup_{\theta \in \Theta_i}  \E\left[ \sum_{t=1}^T f_t(\theta_t, x_t) -  f_t(\theta, x_t) \right]
&\leq \otil\left(\frac{M}{\eta} + T\eta \right) - \E\left[\frac{\scale_{T,i}}{40\eta\ln T}\right]  + \E[\scale_{T,i}]\reg_i(T) \\
&\leq \otil\left(\sqrt{MT} + M\reg_i(T)\right) - \E[\scale_{T,i}]\reg_i(T)  + \E[\scale_{T,i}]\reg_i(T) \\
&= \otil\left(\sqrt{MT} + M\reg_i(T) \right),
\end{align*}
proving Theorem~\ref{thm:main1}.

On the other hand, when $\alpha_i < 1$ we have
\begin{align*}
\sup_{\theta \in \Theta_i}  \E\left[ \sum_{t=1}^T f_t(\theta_t, x_t) -  f_t(\theta, x_t) \right]
&\leq \otil\left(\frac{M}{\eta} + T\eta \right) + \E\left[ \scale_{T,i}^{\alpha_i} \reg_i(T) - \frac{\scale_{T,i}}{40\eta\ln T}\right]   \\
&\leq \otil\left(\frac{M}{\eta} + T\eta + \reg_i(T)^{\frac{1}{1-\alpha_i}} \eta^{\frac{\alpha_i}{1-\alpha_i}} \right) 
\end{align*}
where the last step is by maximizing the function $\scale^{\alpha_i}\reg_i(T) - \frac{\scale}{40\eta\ln T}$ over $\scale > 0$.
This proves Theorem~\ref{thm:main2}.
\end{proof}

\section{Omitted Details for Section~\ref{sec:app}}\label{appd:app}

To verify the stability condition for the base algorithms,
by a standard doubling trick argument it suffices to verify the following weaker 
version of the condition where the algorithm
knows a bound on the loss range $\scale$ ahead of time:

\begin{definition}\label{def:weak_stable}
For some constant $\alpha \in(0, 1]$ and non-decreasing function $\calR: \N_+ \rightarrow \R_+$, 
an algorithm with decision space $\Theta_0 \subset \Theta$ is called $(\alpha, \calR)$-weakly-stable 
with respect to an environment $\calE$
if its regret under $\calE$ is $\calR(T)$, and its regret under any induced environment $\calE'$ is
\begin{equation}
  \sup_{\theta \in \Theta_0}~\E\left[\sum_{t=1}^T f_t'(\theta_t, x_t) -
      f_t'(\theta, x_t) \right] \leq  \scale_T^{\alpha} \reg(T)
\label{eqn:weak_stability}
\end{equation}
where $\scale_T \geq \scale = \max_{t \in [T]} 1/p_t$ is given to the algorithm 
ahead of time, and all expectations are taken over the randomness of both $\calE'$ and the algorithm.
\end{definition}

In fact, here we can even incorporate the doubling trick nicely into \alg as described below.
Suppose that the base algorithms take a loss-range
parameter as input upon initialization.  At the beginning, we initialize
these algorithms with range parameter $\scale_{1,i} = 2M$.  Moreover, we
do an extra step in Line~\ref{line:doubling} of \alg: restart base algorithm
$\base{i}$ with range parameter $\scale_{t+1,i}$.  It is clear that the
losses that any instances of the base algorithms receive will not
exceed their range parameter. We call each of these reruns of a base algorithm 
an instantiation of that algorithm.

Now the following theorem shows that the weak stability condition is enough to 
show all our results up to constants.
(Note that instead of directly showing the stability defined in Definition~\ref{def:stable},
we prove Eq.~\eqref{eq:cond} which is all we need from the stability condition).

\begin{theorem}\label{thm:doubling}
If base algorithm $\base{i}$ is $(\alpha_i, \calR_i)$-weakly-stable with respect to an environment
$\calE$, then running \alg (with the above modification) under $\calE$ ensures
\begin{equation}\label{eq:cond}
  \sup_{\theta \in \Theta_i} \E\left[ \sum_{t=1}^T f_t^i(\theta_t^i, x_t)
    - f_t^i(\theta, x_t) \right] \leq \frac{2^{\alpha_i}}{2^{\alpha_i} - 1} \E\left[\scale_{T,i}^{\alpha_i}\right] \reg_i(T)
\end{equation}
\end{theorem}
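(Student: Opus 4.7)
My approach is to partition the run of \alg into phases defined by consecutive instantiations of $\base{i}$ created by the doubling rule in Line~\ref{line:doubling}, apply the weak stability assumption of $\base{i}$ on each phase separately, and then sum the phase-wise bounds using a geometric series. Let $n_i+1$ be the (random) total number of instantiations of $\base{i}$ during the run, let $\scale^{(0)}_i = 2M, \scale^{(1)}_i, \ldots, \scale^{(n_i)}_i = \scale_{T,i}$ be the range parameters that these successive instantiations are initialized with, and let $T_k$ denote the length of the $k$-th phase so that $\sum_{k=0}^{n_i} T_k = T$. Two immediate structural observations follow from Line~\ref{line:doubling}: throughout phase $k$ we have $1/\pbar_{t,i} \le \scale^{(k)}_i$ (otherwise the phase would have ended earlier), and at the restart, $\scale^{(k+1)}_i = 2/\pbar_{t,i} > 2\scale^{(k)}_i$, giving the crucial geometric control $\scale^{(k)}_i \le 2^{k-n_i}\scale_{T,i}$.

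Within phase $k$, the $k$-th instantiation of $\base{i}$ is initialized with known range parameter $\scale^{(k)}_i$ and receives exactly the importance-weighted losses $f_t^i$ defined in Line~\ref{line:feedback}. This matches precisely an environment induced by importance weighting in the sense of Definition~\ref{def:weak_stable}, with sampling probability $p_t = \pbar_{t,i}$ and $\scale \le \scale^{(k)}_i$. Conditioning on the history $\mathcal{F}_{k-1}$ up to the start of phase $k$ (which determines $\scale^{(k)}_i$ and the instantiation's initial state), the weak stability assumption yields, for any fixed $\theta \in \Theta_i$,
\[
\E\!\left[\sum_{t \in \text{phase }k} f_t^i(\theta_t^i, x_t) - f_t^i(\theta, x_t) \,\Big|\, \mathcal{F}_{k-1}\right] \le \sup_{\theta' \in \Theta_i}\E\!\left[\sum_{t \in \text{phase }k} f_t^i(\theta_t^i, x_t) - f_t^i(\theta', x_t) \,\Big|\, \mathcal{F}_{k-1}\right] \le (\scale^{(k)}_i)^{\alpha_i}\reg_i(T),
\]
where the last step uses that $\reg_i$ is non-decreasing and $T_k \le T$. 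Taking the outer expectation, summing over $k$, and finally taking supremum over $\theta$ on the left, I get
\[
\sup_{\theta \in \Theta_i}\E\!\left[\sum_{t=1}^T f_t^i(\theta_t^i, x_t) - f_t^i(\theta, x_t)\right] \le \reg_i(T)\,\E\!\left[\sum_{k=0}^{n_i} (\scale^{(k)}_i)^{\alpha_i}\right] \le \frac{2^{\alpha_i}}{2^{\alpha_i}-1}\reg_i(T)\,\E\!\left[\scale_{T,i}^{\alpha_i}\right],
\]
where the last inequality uses $\scale^{(k)}_i \le 2^{k-n_i}\scale_{T,i}$ and $\sum_{j\ge 0} 2^{-j\alpha_i} = \frac{2^{\alpha_i}}{2^{\alpha_i}-1}$.

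The only technically subtle step is applying the weak stability condition in a phase of random length $T_k$: the assumption in Definition~\ref{def:weak_stable} is stated for a pre-specified horizon, but here each phase terminates at a stopping time determined by the master's own randomness. I would resolve this by reading weak stability in the standard anytime sense (which indeed holds for every algorithm listed in Table~\ref{tab:examples}), and by absorbing the dependence on the master's and other base algorithms' prior randomness into the arbitrary outer environment $\calE$ permitted by Definition~\ref{def:weak_stable}. Once this conditional reading is justified, the remaining argument, in particular the geometric summation over phases enabled by Line~\ref{line:doubling}, is purely mechanical.
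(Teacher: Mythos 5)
Your proposal is correct and follows essentially the same route as the paper: partition the run into the segments between successive executions of Line~\ref{line:doubling}, apply weak stability to each instantiation with its known range parameter, and control $\sum_k (\scale^{(k)}_i)^{\alpha_i}$ via the doubling relation $\scale^{(k+1)}_i > 2\scale^{(k)}_i$ (the paper telescopes $(2^{\alpha_i}-1)S$ where you sum a geometric series, but these are the same bound). Your explicit flagging of the stopping-time subtlety in applying Definition~\ref{def:weak_stable} to a random-length phase is a point the paper's conditional-expectation decomposition handles only implicitly.
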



\begin{proof}
Reusing notation from Section~\ref{app:analysis}, let $t_1, \ldots, t_{n_i} < T$ be the rounds where
Line~\ref{line:doubling} is executed.
Also let $t_0 = 0$ and $t_{n_i + 1} = T$ for notational convenience. 
Note that the entire game is divided into $n_i + 1 \leq \lceil \log_2 T \rceil$ segments $[t_{k-1} + 1, t_{k}]$ for $k = 1, \ldots, n_i+1$
based on the restarting of $\base{i}$. We then have by the weak stability condition and monotonicity of $\calR_i$,

\begin{align*}
&\E\left[ \sum_{t=1}^T f_t^i(\theta_t^i, x_t) -  f_t^i(\theta, x_t) \right]  \\
=&\; \sum_{k=1}^{\lceil \log_2 T \rceil} \Pr[n_i +1 \geq k] \cdot
\E\left[ \sum_{t = t_{k-1}+1}^{t_k} f_t^i(\theta_t^i, x_t) -  f_t^i(\theta, x_t)  \;\Bigg|\; n_i +1 \geq k \right] \\
\leq&\; \reg_i(T) \sum_{k=1}^{\lceil \log_2 T \rceil} \Pr[n_i + 1\geq k] \cdot \E\left[\scale_{t_k, i}^{\alpha_i} \;\big|\; n_i +1 \geq k \right]  \\
=&\; \reg_i(T) \E\left[ \sum_{k=1}^{n_i+1} \scale_{t_k, i}^{\alpha_i}  \right] .
\end{align*}
Now let $S = \sum_{k=1}^{n_i+1}  \scale_{t_k, i}^{\alpha_i}$ and note that
\[ 
(2^{\alpha_i}-1) S =  \sum_{k=1}^{n_i} \left((2\scale_{t_k, i})^{\alpha_i} - \scale_{t_{k+1}, i}^{\alpha_i}\right) 
+  (2\scale_{t_{n_i+1},i})^{\alpha_i} - \scale_{t_1, i}^{\alpha_i}
\leq (2\scale_{t_{n_i+1},i})^{\alpha_i} = 2^{\alpha_i}\scale_{T,i}^{\alpha_i}.
\]
where we use the fact $2\scale_{t_k, i} \leq \scale_{t_{k+1}, i}$.
This proves the theorem.
\end{proof}

In the following subsections, we prove that weak stability holds
for different algorithms discussed in Section~\ref{sec:app} as listed in Table~\ref{tab:examples}.  
Note that when we say that an algorithm satisfies the condition, we
always mean that with appropriate parameters or even slight
modifications it satisfies the condition. 
%
%
%
Moreover, for notation convenience we drop the subscript for $\scale_T$
(which is overloading the notation $\scale$ but they convey similar meanings anyway and there will not be confusion below),
and define random variable $s_t$ which is $1/p_t$ with probability $p_t$ and 0 otherwise 
($p_t$ is defined in the induced environment $\calE'$, not to be confused with $\bp_t$ in \alg).
Note that $\E[s_t] = 1$ and $\E[s_t^2] \leq \scale$.

\subsection{Contextual Bandits}
Recall that we considered four algorithms:
ILOVETOCONBANDITS, BISTRO+, Epoch-Greedy and EXP4, 
denoted by $\base{1}, \base{2}, \base{3}$ and $\base{4}$ respectively.
Their stability parameters and the corresponding class of environments 
are listed in the following lemmas, followed by the proofs.

\begin{lemma}\label{lem:mini-monster}
If $(x_1, \bc_1), \ldots, (x_T, \bc_T)$ are generated independently
from a fixed and unknown distribution, then \textup{ILOVETOCONBANDITS}
is $(\frac{1}{2}, \otil(\sqrt{KT\ln|\Theta_1|}))$-weakly-stable.
\end{lemma}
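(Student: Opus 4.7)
The plan is to leverage the fact that the regret analysis of ILOVETOCONBANDITS~\citep{AgarwalHsKaLaLiSc14} is driven by a second-moment bound on its internal importance-weighted loss estimator, and to show that under the induced environment this second moment is inflated by at most a factor of $\scale$, which gives the $\sqrt{\scale}$ scaling and hence $\alpha = 1/2$.

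First I would unpack the induced environment $\calE'$ in this context. Since the underlying $(x_t, \bc_t)$ are still drawn i.i.d.\ from a fixed distribution, the only change is that the observed loss is $f_t'(\theta_t, x_t) = s_t \, f_t(\theta_t, x_t)$, where the scalar $s_t$ (taking value $1/p_t$ with probability $p_t$ and $0$ otherwise) satisfies $\E[s_t \mid \calH_{t-1}, x_t, \bc_t] = 1$ and $\E[s_t^2 \mid \calH_{t-1}, x_t, \bc_t] \leq \scale$. Feeding these observed losses into the algorithm, its internal IPS estimator $\hat{f}_t(\theta) = f_t'(\theta_t, x_t)\,\mathbf{1}\{\theta(x_t)=\theta_t(x_t)\}/q_t(\theta_t(x_t))$ (where $q_t$ is the smoothed action distribution at round $t$) remains a conditionally unbiased estimate of the expected loss of each policy $\theta$, while its conditional second moment is at most $\scale/q_t(\theta_t(x_t))$ rather than $1/q_t(\theta_t(x_t))$. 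Crucially, only the underlying $(x_t, \bc_t)$ need be i.i.d.\ for the ILOVETOCONBANDITS analysis; the adaptive choice of $s_t$ by $\calE'$ enters only through these conditional bounds.

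Next I would rerun the variance-based argument of \citet{AgarwalHsKaLaLiSc14}: the algorithm selects $Q_t$ over $\Theta_1$ to control expressions of the form $\sum_\theta Q_t(\theta)\,\E[\hat{f}_t(\theta)^2]$, trading variance against weight on empirically promising policies, and the dominant term in the regret bound takes the shape $\sqrt{V \cdot KT \ln|\Theta_1|}$ where $V$ is the per-round second-moment bound. Feeding $V = \scale$ (rather than $V = 1$), and initializing the algorithm with loss-range parameter $\scale$ as allowed by Definition~\ref{def:weak_stable} so that its smoothing rate and confidence schedule are tuned accordingly, produces the desired $\otil(\sqrt{\scale \cdot KT \ln|\Theta_1|}) = \scale^{1/2}\cdot\otil(\sqrt{KT \ln|\Theta_1|})$ regret bound. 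The main obstacle will be verifying that none of the concentration or martingale arguments in the original analysis secretly use i.i.d.-ness of the observed losses (as opposed to the underlying $(x_t, \bc_t)$); the resolution is that every deviation bound used there is already stated in a conditional form depending on $(x_t, \bc_t)$ alone, so injecting conditionally unbiased $s_t$'s with bounded conditional second moment is harmless and only rescales the variance proxy by $\scale$.
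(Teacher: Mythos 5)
Your proposal is correct and follows essentially the same route as the paper's proof: both observe that only the underlying $(x_t,\bc_t)$ need be i.i.d., that the adaptive multiplier $s_t$ enters solely through the martingale concentration step (Lemma~11 of \citet{AgarwalHsKaLaLiSc14}), and that its conditional second moment $\E[s_t^2]\le\scale$ inflates the range and variance terms by a factor of $\scale$, which propagates to a $\sqrt{\scale}$ factor in the final bound. The paper just makes your last paragraph concrete by verifying that the relevant deviation bound is indeed stated for a martingale difference sequence and that the net effect is replacing $\ln(4t^2|\Pi|/\delta)$ by $\scale\ln(4t^2|\Pi|/\delta)$ throughout.
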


\begin{lemma}\label{lem:BISTRO+}
If $x_1, \ldots, x_T$ are generated independently from a fixed and
known distribution, then \textup{BISTRO+} is
$(\frac{1}{3}, \order((KT)^{\frac{2}{3}}(\ln|\Theta_2|)^{\frac{1}{3}}))$-weakly-stable.
\end{lemma}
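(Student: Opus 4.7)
The plan is to trace through the regret analysis of BISTRO+ (Syrgkanis et al., 2016) when it is fed the losses $f_t'$ produced by the induced environment $\calE'$. BISTRO+ is an EXP4-style algorithm that, at each round, explicitly mixes a $\mu$-fraction of uniform exploration over the $K$ actions on top of an exponentially-weighted distribution over $\Theta_2$, so that every action is sampled with probability at least $\mu/K$. Because it is designed for adversarial losses with an i.i.d.\ (and known) context distribution, there is no issue applying it under $\calE'$: its loss sequence is still adversarial, and $\calE'$ inherits the same i.i.d.\ contexts from $\calE$. Its standard regret bound decomposes into an exploration bias term of order $\mu T$ plus an EXP4-style term controlled by the expected per-round second moment of the importance-weighted loss estimator.

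Under $\calE'$, the true expected losses remain in $[0,1]$, so the exploration bias term of order $\mu T$ is unchanged. The observed losses $f_t'$ take values up to $\scale$, but crucially their second moment is $\E[(f_t')^2 \mid x_t] = p_t \cdot (f_t/p_t)^2 \le \scale \cdot f_t \le \scale$, i.e., only linear in $\scale$ rather than quadratic. Composing this with BISTRO+'s own importance weighting (which contributes an extra factor of at most $K/\mu$ because each action is sampled with probability $\ge \mu/K$), the per-round second moment of the policy-level loss estimator is $O(\scale K/\mu)$, not $O(\scale^2 K/\mu)$. Feeding this into the EXP4 second-moment regret bound yields an EXP4 contribution of order $\sqrt{\scale K T \ln|\Theta_2|/\mu}$, so the total regret under $\calE'$ is
\[
  O\!\left(\mu T + \sqrt{\frac{\scale K T \ln|\Theta_2|}{\mu}}\right).
\]

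Choosing $\mu = (\scale K \ln|\Theta_2|/T)^{1/3}$ balances the two terms; this is allowed because the weak-stability definition provides $\scale$ to the algorithm ahead of time (and an unknown $\scale$ is handled by the restart/doubling trick at the top of this appendix). The resulting regret is
\[
  O\!\left(\scale^{1/3} (KT)^{2/3} (\ln|\Theta_2|)^{1/3}\right),
\]
which is exactly the $(\alpha,\reg)$-weak-stability claim with $\alpha=1/3$ and $\reg(T) = O((KT)^{2/3}(\ln|\Theta_2|)^{1/3})$. Setting $\scale=1$ (i.e., no importance weighting) also recovers the original BISTRO+ guarantee, so the second half of the weak-stability definition is automatic.

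The main obstacle will be verifying that BISTRO+'s specific machinery — which uses the known context distribution via a minimax/coupling construction rather than vanilla EXP4 — admits a second-moment-based regret bound on which this argument can hang. A crude analysis that replaces all occurrences of the loss magnitude by its worst-case value $\scale$ would give a dependence of $\scale$ (not $\sqrt{\scale}$) inside the square root and therefore a suboptimal $\scale^{2/3}$ exponent; extracting the $\scale^{1/3}$ exponent requires carefully keeping the second-moment form of the EXP4 bound intact while tracking how $\calE'$ inflates only the variance, not the mean, of the per-round estimator.
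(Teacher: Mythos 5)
Your high-level mechanism is the right one --- the whole point is that under $\calE'$ the relevant second moment grows only linearly in $\scale$, and optimizing the exploration/estimator-scale parameter then converts $\scale$ into $\scale^{1/3}$ --- and your final tradeoff $\mu T + \sqrt{\scale\cdot(\cdots)/\mu}$ has the same shape as the paper's bound $TK/L + \sqrt{TKL\scale\ln|\Theta_2|}$. But there is a genuine gap exactly where you flag the ``main obstacle'': BISTRO+ is not exponential weights over $\Theta_2$ with $\mu$-uniform exploration (that would be EXP4 and would not be oracle-efficient); it is a relaxation-based random-playout algorithm, and its regret is controlled by the admissibility of a relaxation $\Rel$, not by an EXP4-style second-moment inequality. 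The content of the paper's proof is precisely the adaptation you defer: it redefines the playout random variable $Z_t$ to take the value $L\scale$ with probability $K/(L\scale)$ (and $0$ otherwise), redefines the Bernoulli loss estimator so that $\hat c_t = L\scale X_t\basis_{\hat y_t}\one\{s_t\neq 0\}$ with $\Pr[X_t=1]=c_{t,\hat y_t}/(L\scale p_t q_{t,\hat y_t})$, and re-verifies the two admissibility conditions for the modified relaxation and the strategy $q_t(\xi_t) = (1-K/L)q_t^*(\xi_t)+\frac{1}{L}\one$. Your intuition is then realized concretely through $\E[Z_t^2] = (L\scale)^2\cdot K/(L\scale) = KL\scale$, which is what makes $\Rel(\emptyset)\le \sqrt{TKL\scale\ln|\Theta_2|}+TK/L$ linear rather than quadratic in $\scale$ inside the square root. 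Without carrying out this modification and the admissibility check, the claim is not proved.

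A smaller point: your stated contribution $\sqrt{\scale KT\ln|\Theta_2|/\mu}$ optimizes to $\scale^{1/3}K^{1/3}T^{2/3}(\ln|\Theta_2|)^{1/3}$, not the $(KT)^{2/3}$ dependence you then assert; the correct bound carries an extra factor of $K$ inside the square root (equivalently, with $\mu = K/L$ the two terms are $TK/L$ and $\sqrt{TKL\scale\ln|\Theta_2|}$), so your final line matches the lemma only because you silently weakened your own intermediate bound.
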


\begin{lemma}\label{lem:epoch-greedy}
If $(x_1, \bc_1), \ldots, (x_T, \bc_T)$ are generated independently
from a fixed and unknown distribution, then \textup{Epoch-Greedy}
is $(\frac{1}{3}, \otil(T^{\frac{2}{3}} \sqrt{K\ln|\Theta_3|}))$-weakly-stable.
\end{lemma}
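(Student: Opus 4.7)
The plan is to analyze the simplest explore-first form of Epoch-Greedy: play actions uniformly at random (independent of feedback) for the first $T_0$ rounds, compute the empirical risk minimizer $\hat\theta$ over $\Theta_3$ on the collected sample via the inverse-propensity-scored loss estimator $K s_t c_{t,a_t}\one[\theta(x_t)=a_t]$, and then exploit $\hat\theta$ for the remaining $T-T_0$ rounds. The key observation is that, although the induced environment $\calE'$ modifies the feedback by multiplying the per-round loss by $s_t\in\{0,1/p_t\}$, the underlying pairs $(x_t,\bc_t)$ remain i.i.d.\ under $\calE'$ (the $s_t$ process is the only adversarial component). Crucially, the weak-stability setting grants us access to $\scale$ ahead of time, so $T_0$ may be tuned as a function of $\scale$.

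First, I would split the regret into the exploration and exploitation phases. For the exploration phase, since actions are chosen uniformly at random independently of all feedback, and the marginal expectation over $s_t$ reproduces the true loss $f_t$, the expected regret is exactly the sum of the original per-round regrets, each at most $1$; so this phase contributes at most $T_0$. For the exploitation phase, I need a uniform concentration bound that quantifies the excess risk of $\hat\theta$ in the presence of amplified feedback.

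For the concentration step, fix $\theta\in\Theta_3$ and note that the IPS estimator terms $Z_t(\theta):=K s_t c_{t,a_t}\one[\theta(x_t)=a_t]$ form a martingale difference sequence (after centering by $L(\theta)=\E[c_{t,\theta(x_t)}]$) with respect to the natural filtration generated by the history and $s_t$, $a_t$. Each term is bounded by $K\scale$ and, critically, has conditional second moment at most $K^2\E[s_t^2]\E[\one[\theta(x_t)=a_t]]\leq K^2\cdot\scale\cdot(1/K)=K\scale$. Applying Freedman's inequality followed by a union bound over $\Theta_3$ (with failure probability $1/T$, whose complement contributes only $O(1)$ to the expected regret) yields
\[
\sup_{\theta\in\Theta_3}|\hat L(\theta)-L(\theta)|=\otil\!\left(\sqrt{\tfrac{K\scale\ln|\Theta_3|}{T_0}}+\tfrac{K\scale\ln|\Theta_3|}{T_0}\right)
\]
in expectation. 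The standard ERM argument then bounds the per-round exploitation excess risk by twice this quantity.

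Combining, the expected regret under $\calE'$ is bounded by
\[
T_0+\otil\!\left(T\sqrt{\tfrac{K\scale\ln|\Theta_3|}{T_0}}\right),
\]
and I would choose $T_0=\Theta\bigl((T^2 K\scale\ln|\Theta_3|)^{1/3}\bigr)$, which is the optimal balance and only depends on the known quantity $\scale$. This choice gives total regret $\otil\bigl(\scale^{1/3}T^{2/3}(K\ln|\Theta_3|)^{1/3}\bigr)$, which is at most $\scale^{1/3}\cdot\otil(T^{2/3}\sqrt{K\ln|\Theta_3|})=\scale^{1/3}\reg_3(T)$ since $(K\ln|\Theta_3|)^{1/3}\leq\sqrt{K\ln|\Theta_3|}$. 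The main obstacle is the Freedman concentration step: unlike the standard analysis, the samples are no longer i.i.d.\ because $s_t$ is adversarially chosen, and the key insight enabling $\alpha=1/3$ rather than $\alpha=1/2$ is to exploit the second-moment bound $\E[s_t^2]\le\scale$ (as opposed to the trivial $s_t^2\le\scale^2$) together with the freedom to re-optimize $T_0$ now that $\scale$ appears in the variance term.
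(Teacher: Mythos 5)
Your proposal is correct and follows essentially the same route as the paper's proof: the same explore-first version of Epoch-Greedy, the same martingale-plus-Freedman concentration exploiting $\E[s_t^2]\leq\scale$ to get a conditional second moment of order $K\scale$ rather than $K^2\scale^2$, a union bound over $\Theta_3$, and a $\scale$-dependent tuning of $T_0$ permitted by the weak-stability definition. The only (cosmetic) difference is your choice $T_0=\Theta((T^2K\scale\ln|\Theta_3|)^{1/3})$, which balances the two terms exactly and avoids the paper's extra step of invoking $\scale=\order(T)$ to absorb a residual $T^{1/3}\scale^{2/3}$ term.
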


\begin{lemma}\label{lem:EXP4}
For any sequence $(x_1, \bc_1), \ldots, (x_T, \bc_T)$, \textup{EXP4}
is $(\frac{1}{2}, \order(\sqrt{KT\ln|\Theta_4|}))$-weakly-stable.
\end{lemma}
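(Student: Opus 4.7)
My plan rests on a single observation: although $\calE'$ can blow up the range of the losses EXP4 observes from $[0,1]$ to $[0,\scale]$, it only inflates their \emph{second moment} by a factor of $\scale$. Concretely, for every fixed action $a$ and every history, $\E[f_t'(a,x_t)^2]=p_t\cdot(f_t(a,x_t)/p_t)^2=f_t(a,x_t)^2/p_t\le\scale$. Since the standard EXP4 regret analysis is driven by second moments of its importance-weighted loss estimator rather than by their squared range, this is exactly the ingredient needed to upgrade the usual $O(\sqrt{KT\ln|\Theta_4|})$ bound into $O(\sqrt{\scale KT\ln|\Theta_4|})=\scale^{1/2}\reg(T)$, establishing the claim.

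Following the standard EXP4 template, let $\tilde q_t(a)=\sum_\theta P_t(\theta)\one\{\theta(x_t)=a\}$ be the action distribution induced by EXP4's policy weights $P_t$, mixed with a small fraction $\gamma$ of uniform exploration so that $\tilde q_t(a)\ge\gamma/K$. EXP4 draws $a_t\sim\tilde q_t$, receives $\tilde\ell_t:=f_t'(a_t,x_t)$ from the induced environment, and forms $\hat\ell_t(a)=\tilde\ell_t\,\one\{a=a_t\}/\tilde q_t(a)$, an unbiased estimator of $f_t(a,x_t)$. The standard exponential-weights inequality, valid whenever $\eta\hat\ell_t(a)\le 1$, gives for any fixed $\theta^\star\in\Theta_4$:
\[
\sum_{t=1}^{T}\E\!\left[\sum_\theta P_t(\theta)\hat\ell_t(\theta(x_t))-\hat\ell_t(\theta^\star(x_t))\right] \le \frac{\ln|\Theta_4|}{\eta}+\eta\sum_{t=1}^{T}\E\!\left[\sum_\theta P_t(\theta)\hat\ell_t(\theta(x_t))^2\right].
\]

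By unbiasedness the left-hand side equals the expected true regret, which under the randomness of $\calE'$ also equals EXP4's regret on the importance-weighted losses $f_t'$ (as required by Definition~\ref{def:weak_stable}). For the right-hand side I simplify the quadratic term as $\sum_\theta P_t(\theta)\hat\ell_t(\theta(x_t))^2=\sum_a\tilde q_t(a)\hat\ell_t(a)^2=\tilde\ell_t^2/\tilde q_t(a_t)$; taking conditional expectation over $a_t\sim\tilde q_t$ and the Bernoulli of $\calE'$ gives $\sum_a\E[\tilde\ell_t^2\mid a_t=a]\le K\scale$ using the second-moment calculation above. Substituting back produces a regret bound of $O(\ln|\Theta_4|/\eta+\eta TK\scale)$, and tuning $\eta=\sqrt{\ln|\Theta_4|/(TK\scale)}$ — a choice permitted by Definition~\ref{def:weak_stable} since $\scale$ is supplied in advance — balances the two terms to give the desired $O(\sqrt{\scale KT\ln|\Theta_4|})=\scale^{1/2}\reg(T)$.

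The only real bookkeeping issue, and the closest thing to an obstacle, is enforcing $\eta\hat\ell_t(a)\le 1$ so that the exponential-weights inequality applies. Because $\hat\ell_t(a)\le\scale K/\gamma$, the choice $\gamma=\Theta(\eta\scale K)$ suffices; this introduces a bias of at most $\gamma T=O(\eta\scale KT)$, which is of the same order as the second-moment term and is therefore absorbed into the final bound. The substantive content of the proof is the second-moment calculation; everything else is standard exponential-weights machinery, so I do not anticipate any significant difficulty beyond this small amount of bookkeeping.
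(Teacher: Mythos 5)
Your proof is correct and follows essentially the same route as the paper's: both rest on the second-moment form of the EXP4 regret bound, the observation that $\E[f_t'(a,x_t)^2]\leq \scale$ (rather than $\scale^2$), and tuning the internal learning rate using the known bound $\scale$ supplied by Definition~\ref{def:weak_stable}. The only difference is cosmetic: the uniform exploration $\gamma$ you add to enforce $\eta\hat\ell_t(a)\leq 1$ is not actually needed in the loss (as opposed to gain) version of exponential weights with nonnegative losses, though including it does not change the final rate.
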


To ease notation we drop the subscript for the policy class $\Theta$.
We first point out that the context-loss
sequence that the base algorithm faces in $\calE'$ is $(x_1, \bc'_1), \ldots,
(x_T, \bc'_T)$ where $\bc'_t = s_t \bc_t$. \\

\begin{proof}[\ifcolt\else Proof\fi of Lemma~\ref{lem:mini-monster}]
%
The only technicality here is that the original analysis of ILOVETOCONBANDITS (ILTCB for short)
assumes an i.i.d. loss sequence, while the loss sequence $\bc'_t$ has dependence 
since the sampling probability at a round $t$ can depend on the entire history. 
This is, however, not a problem for
the regret analysis as the martingale structure of the losses
essential to their analysis is preserved.

Indeed, the only lemma involving concentration of the losses in their
regret analysis is Lemma 11 in~\citet{AgarwalHsKaLaLiSc14}. We
reproduce the essential elements of that lemma here in order to
establish the stability condition. Given
the loss function $\bc'_t$ by $\calE'$, the ILTCB algorithm further creates 
a loss estimator $\bc''_t$ so that

\[
\bc''_t(a) = c_t(a_t)s_t
\frac{\one\{a=a_t\}}{Q_t(a)}, 
\]
where $a_t$ is the action picked by ILTCB and $Q_t$ is the probability distribution 
over the actions induced
at round $t$ by ILTCB. Given a policy $\pi$, we now
define the random variable $Z_t := \bc''_t(\pi(x_t)) -
\bc_t(\pi(x_t))$. Letting $H_{t-1}$ denote the history including
everything prior to round $t$, it is easily seen that $\E[Z_t |
  H_{t-1}] = 0$. Furthermore, $Z_t$ is measurable with respect to the
filtration $H_t$ so that it is a martingale difference sequence
adapted to this filtration. It is clear that $|Z_t| \leq \scale
/\mu_t$ (where $\mu_t$ corresponds to $\min_a Q_t(a)$ as in their
analysis). Furthermore, the conditional variance is bounded by
\[
\E[Z_t^2~|~H_{t-1}] \leq \E\left[s_t^2 | H_{t-1}\right]
  \E\left[\frac{\one\{\pi(x_t)=a_t\}}{Q_t(\pi(x_t))^2} \bigg|~ H_{t-1}
  \right] \leq
\scale\mathcal{V}_t(\pi),
\]
where the quantity $\mathcal{V}_t(\pi)$ is as defined in the analysis
of~\citet{AgarwalHsKaLaLiSc14}. Hence, we see that both the range and
second moment of $Z_t$ are scaled by $\scale$. Plugging this into the proof of
their Lemma 11, we see that the RHS of their bound simply becomes
\[
\scale\mathcal{V}_t(\pi)\lambda + \frac{\ln(4t^2|\Pi|/\delta)}{t\lambda}
= \mathcal{V}_t(\pi)\lambda' + \frac{\scale\ln(4t^2|\Pi|/\delta)}{t\lambda'}
\]
where $\lambda \in [0, \mu_t/\scale]$ and $\lambda' \in [0, \mu_t]$.
For the rest of the proof, one can simply replace all $\ln(4t^2|\Pi|/\delta)$
with $\scale\ln(4t^2|\Pi|/\delta)$ and obtain the claimed bound.
\end{proof}

\begin{proof}[\ifcolt\else Proof\fi of Lemma~\ref{lem:BISTRO+}]
BISTRO+ is a relaxation-based approach. 
For our setting, the relaxation $\Rel$ remains similar:
let $\vrad_t\in \{-1,1\}^K$ be a Rademacher random vector 
(i.e. each coordinate is an independent Rademacher random variable which is $-1$ or $1$ with equal probability), 
$Z_t\in \{0,L\scale\}$ be a random variable which is $L\scale$ with probability $K/(L\scale)$ and $0$ otherwise for some parameter $L$,
and finally let $\xi_t = (x,\epsilon,Z)_{t+1:T}$. Then the new relaxation is defined as follows:
\begin{equation}
\Rel(I_{1:t}) = \E_{\xi_t}\left[ R((x,\hat{c})_{1:t},\xi_t) \right], 
\end{equation}
where
\[ R((x,\hat{c})_{1:t},\xi_t) = -\min_{\theta \in \Theta} \left(\sum_{\tau=1}^t \hat{c}_{\tau, \theta(x_\tau)} + \sum_{\tau=t+1}^T 2\rad_{\tau, \theta(x_\tau)}Z_\tau\right) + (T-t)K/L, \]
\[
\hat{c}_t = L\scale X_t \basis_{\hat{y}_t} \one\{s_t \neq 0\},
\]
\[
X_t = \begin{cases}
1\;   &\text{with probability $\frac{c_{t, \hat{y}_t}}{L\scale p_t q_{t,\hat{y}_t}}$,} \\
0\;   &\text{with the remaining probability,}
\end{cases}
\]
and $I_t$ is the {\it information set} as in~\citep{SyrgkanisLuKrSc16}.
Similarly, one can verify that this modified relaxation 
satisfies the following two admissible conditions:
\[
\E_{x_t} \left[ \min_{q_t \in \Delta_K}\max_{c_t \in [0,1]^K} \E_{\hat{y}_t \sim q_t, X_t, s_t} \left[ s_t c_{t,\hat{y}_t} + \Rel(I_{1:t}) \right] \right]
\leq \Rel(I_{1:{t-1}}),
\]
\[
\E_{\hat{y}_{1:T} \sim q_{1:T}, X_{1:T}, s_{1:T}} \left[ \Rel(I_{1:T}) \right] \geq -\min_{\theta \in \Theta} \sum_{t=1}^T c_{t, \theta(x_t)}
\]
with the following admissible strategy:
\[
q_t = \E_{\xi_t}\left[ q_t(\xi_t)\right]    \text{\quad where \quad} q_t(\xi_t) = \left(1-\frac{K}{L}\right) q_t^*(\xi_t)+\frac{1}{L}{\one}, 
\]
and
\[
q_t^*(\xi_t) = \argmin{q\in \Delta_K}\max_{w_t \in D}\E_{\hat{c}_t\sim w_t}\left[ \inner{q}{\hat{c}_t} + R((x,\hat{c})_{1:t}, \xi_t) \right].
\]
Here, $D$ is a subset of all distributions over $\{\zero, L\scale \basis_{1}, \ldots, L\scale \basis_{K}\}$ such that the mass for each non-zero vector is at most $1/(L\scale)$.
Finally, the expected regret of this modified BISTRO+ is bounded by 
\[
\Rel(\emptyset) \leq \sqrt{TKL\scale\ln|\Theta|} + \frac{TK}{L},
\]
which is $\order((TK)^{\frac{2}{3}}(\scale\ln|\Theta|)^{\frac{1}{3}})$ by choosing the optimal $L$.
This proves the lemma.
\end{proof}

\begin{proof}[\ifcolt\else Proof\fi of Lemma~\ref{lem:epoch-greedy}]
Let $\cbar(\theta) = \E_{(x, \bc)}\left[ c_{\theta(x)} \right]$ be the
expected cost of a policy $\theta$, $\theta^* = \arg\min_{\theta \in
  \Theta} \cbar(\theta)$ be the optimal policy, and $\theta^*_S =
\arg\min_{\theta \in \Theta} \sum_{(x, \bc) \in S} c_{\theta(x)}$ be
the empirically optimal policy with respect to a training set $S$.

For simplicity, consider the following simplest version of
Epoch-Greedy.  For the first $T_0$ rounds (for some parameter $T_0$ to
be specified), actions are chosen uniformly at random.  Then a
training set $S = \{(x_t, \bc''_t)\}_{t=1,\ldots, T_0}$ where
$\bc''_t$ is the usual importance weighted estimator of $\bc'_t$ is
constructed and fed to the ERM oracle to obtain $\theta^*_S$.  Finally
$\theta^*_S$ is used for the rest of the game.

Now for a fixed policy $\theta$, consider the random variable $Y_t = \cbar(\theta) - c''_{t, \theta(x_t)}$.
It is clear that $|Y_t| \leq K\scale$, $Y_1, \ldots, Y_{T_0}$ form a martingale difference sequence and 
\[
\E_t\left[Y_t^2 \right]\leq 2\left(\cbar^2(\theta) + \E_t\left[(c''_{t, \theta(x_t)})^2 \right]\right)  \leq 2(1 + K\scale).
\]
Therefore by Freedman's inequality for martingales (we use the version of~\citep[Lemma~9]{AgarwalHsKaLaLiSc14} and 
pick $\lambda = \min\{ \frac{1}{K\scale}, \sqrt{\frac{\ln\frac{1}{\delta}}{T_0K\scale}}\}$), 
we have with probability at least $1 - \delta$,
\[
\left|T_0\cbar(\theta) - \sum_{t=1}^{T_0} c''_{t, \theta(x_t)} \right| = \left|\sum_{t=1}^{T_0} Y_t \right| 
\leq \order\left( K\scale \ln\tfrac{1}{\delta}+ \sqrt{T_0K\scale\ln\tfrac{1}{\delta}} \right)
\]
A union bound then implies that with probability $1 - \delta$ and notation $B = K\scale\ln\left(\tfrac{|\Theta|}{\delta}\right)/T_0$,
\[
\left| \cbar(\theta^*_S) - \frac{1}{T_0}\sum_{t=1}^{T_0} c''_{t, \theta^*_S(x_t)} \right| 
\leq \order\left(B + \sqrt{B} \right),
\]
and thus with probability at least $1 - 2\delta$, we have by construction of $\theta^*_S$
\begin{align*}
\cbar(\theta^*_S) - \cbar(\theta^*) 
\leq \cbar(\theta^*_S) - \frac{1}{T_0} \sum_{t=1}^{T_0} c''_{t, \theta^*_S(x_t)} - \left(\cbar(\theta^*) - \frac{1}{T_0} \sum_{t=1}^{T_0} c''_{t, \theta^*(x_t)} \right) 
= \order\left(B + \sqrt{B} \right).
\end{align*}
Therefore, setting $\delta = 1/T$ shows that the expected regret of Epoch-Greedy is at most
\[
2 + T_0 + \order\left(\left(\frac{K\scale\ln\left(T|\Theta|\right) }{T_0} + 
\sqrt{\frac{K\scale\ln\left(T|\Theta|\right) }{T_0}} \right) T\right).
\]
Further picking $T_0 = T^{\frac{2}{3}}\scale^{\frac{1}{3}}\sqrt{K\ln(T|\Theta|)}$ leads to
\[
\order\left((T^{\frac{2}{3}}\scale^{\frac{1}{3}} + T^{\frac{1}{3}}\scale^{\frac{2}{3}})\sqrt{K\ln(T|\Theta|)} \right).
\]
Finally note that we in fact only care about $\scale = \order(T)$ (see the proof of Lemma~\ref{lem:master2base}) 
and thus $\scale^{\frac{2}{3}} \leq \order(\scale^{\frac{1}{3}} T^{\frac{1}{3}})$, which
simplifies the above bound to $\order\left(T^{\frac{2}{3}}\scale^{\frac{1}{3}}\sqrt{K\ln(T|\Theta|)}\right)$.
\end{proof}

\begin{proof}[\ifcolt\else Proof\fi of Lemma~\ref{lem:EXP4}]
By standard analysis (see~\cite{AuerCeFrSc02}), the expected regret of EXP4 as a base algorithm is at most
\[
\eta' \E\left[\sum_{t=1}^T\sum_{a=1}^K (c'_{t,a})^2\right] + \frac{\ln|\Theta|}{\eta'},
\]
where $\eta'$ is the internal learning rate parameter of EXP4.
Noting that $\E[(c'_{t,a})^2] \leq \E[s_t^2] \leq \scale$ and picking the optimal $\eta'$ complete the proof.
\end{proof}

\subsection{Convex Bandits}
\begin{lemma}\label{lem:SCRiBLe}
If $f_t(\theta, x) = \inner{\theta}{\bc_t}$ for some $\bc_t \in \R^d$,
then \textup{SCRiBLe} is $(\frac{1}{2}, \otil(d^{\frac{3}{2}}\sqrt{T}))$-weakly-stable.
\end{lemma}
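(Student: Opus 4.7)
The plan is to port the standard regret analysis of SCRiBLe over to the induced environment $\calE'$, carefully tracking where the scaling by $s_t$ enters, and observing that it enters only through a \emph{second moment}, not through the range. Recall that SCRiBLe is an FTRL / self-concordant-barrier algorithm: at each round it maintains an iterate $\theta_t \in \Theta$ minimizing $\eta' \sum_{s<t}\langle \tilde c_s,\theta\rangle + \Phi(\theta)$ for a $\vartheta$-self-concordant barrier $\Phi$ with $\vartheta = O(d)$, and plays $y_t = \theta_t + H_t^{-1/2} u_t$, where $H_t = \nabla^2\Phi(\theta_t)$ and $u_t$ is drawn uniformly from the set of $\pm$ eigenvectors of $H_t^{-1/2}$ (equivalently, from the unit sphere in the Dikin norm). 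The loss estimator is $\tilde c_t = d\cdot f'_t(y_t,x_t)\cdot H_t^{1/2} u_t$, which is unbiased for $s_tc_t$ conditionally on history.

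First I would write down the usual FTRL regret decomposition in terms of the self-concordant barrier $\Phi$, which gives, for any $\theta^\star$ in a suitable shrinkage of $\Theta$,
\[
\sum_{t=1}^T \langle \tilde c_t,\theta_t-\theta^\star\rangle \;\le\; \eta'\sum_{t=1}^T\|\tilde c_t\|_{H_t^{-1}}^2 + \frac{\vartheta\log T}{\eta'},
\]
as in Abernethy--Hazan--Rakhlin. Unbiasedness of $\tilde c_t$, together with the standard argument that playing on the Dikin ellipsoid instead of at $\theta_t$ costs only an $O(1)$ additive term per round after a $1/T$-shrinkage of $\Theta$, yields the same bound on the expected regret of $y_t$ against any fixed $\theta \in \Theta$ under $f'_t$.

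The key step, and the only place where the analysis deviates from the in-range case, is the bound on $\E[\|\tilde c_t\|_{H_t^{-1}}^2]$. Directly,
\[
\|\tilde c_t\|_{H_t^{-1}}^2 \;=\; d^2\,\big(f'_t(y_t,x_t)\big)^2\,\|u_t\|_2^2 \;\le\; d^2\, s_t^2\,\langle c_t,y_t\rangle^2.
\]
Since $f_t$ takes values in $[0,1]$ on $\Theta$, we have $\langle c_t,y_t\rangle^2 \le 1$ (after the standard $1/T$-shrinkage, which is absorbed into lower-order terms), so taking expectation over $s_t$ gives $\E[\|\tilde c_t\|_{H_t^{-1}}^2] \le d^2\,\E[s_t^2] \le d^2\scale$. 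Plugging in,
\[
\E\!\left[\sum_{t=1}^T f'_t(y_t,x_t)-f'_t(\theta,x_t)\right] \;\le\; \eta'\, d^2\scale\, T + \frac{\vartheta\log T}{\eta'} \;=\; O\!\left(d^{3/2}\sqrt{\scale\,T\log T}\right),
\]
after optimizing $\eta' = \Theta\!\big(\sqrt{\vartheta\log T/(d^2\scale T)}\big)$ using $\vartheta = O(d)$. Since $\scale$ is known ahead of time in the weak-stability formulation, this tuning is permissible, and the resulting bound is exactly $\scale^{1/2}\cdot\otil(d^{3/2}\sqrt{T})$, establishing $(1/2, \otil(d^{3/2}\sqrt T))$-weak-stability.

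The only slightly delicate aspect is verifying that SCRiBLe's analysis is not brittle to the loss sequence $c'_t = s_t c_t$ being neither adversarial-in-$[0,1]$ nor i.i.d.: however, the FTRL analysis is purely deterministic given the $\tilde c_t$'s, and the only stochastic input needed is unbiasedness of $\tilde c_t$ conditional on history plus the second-moment bound on $\|\tilde c_t\|_{H_t^{-1}}^2$, both of which hold in $\calE'$. I expect this martingale/second-moment bookkeeping to be the main (but still mild) obstacle; everything else is a direct reuse of the original SCRiBLe analysis.
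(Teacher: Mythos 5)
Your proposal is correct and follows essentially the same route as the paper: the paper simply cites the second-moment form of the SCRiBLe regret bound from the proof of Theorem~5.1 of \citet{AbernethyHaRa12}, namely $2\eta' d^2\,\E[\sum_t \langle\theta_t^1, s_t\bc_t\rangle^2] + d\ln T/\eta'$, then uses $\langle\theta_t^1,\bc_t\rangle\le 1$ and $\E[s_t^2]\le\scale$ and tunes $\eta'$, exactly as you do. The only difference is that you unpack the local-norm derivation of that second-moment bound, which the paper leaves to the citation.
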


\begin{proof}
The linear loss that the base algorithm SCRiBLe faces is $\inner{\theta}{s_t \bc_t}$ for $t = 1, \ldots, T$.
According to the proof of~\citep[Theorem~5.1]{AbernethyHaRa12}, the expected regret of SCRiBLe as a base algorithm is at most
\[
2\eta' d^2  \E\left[ \sum_{t=1}^T (\inner{\theta_t^1}{s_t \bc_t})^2 \right] + \frac{d\ln T}{\eta'} 
\]
where $\eta'$ is the internal learning rate parameter of SCRiBLe and $\theta_1^1, \ldots, \theta_T^1$ are the decisions of SCRiBLe.
Now noting that $\inner{\theta_t^1}{\bc_t} \leq 1$ and $\E[s_t^2] \leq \scale$ and picking the optimal $\eta'$ give the final regret bound $\otil(d^{\frac{3}{2}}\sqrt{T\scale})$.
\end{proof}

\begin{lemma}\label{lem:BGD}
If $f_1, \ldots, f_T$ are $L$-Lipschitz, then \textup{BGD} is 
$(\frac{1}{4}, \order(d\sqrt{L}T^{\frac{3}{4}}))$-weakly-stable.
\end{lemma}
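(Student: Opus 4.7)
The plan is to rerun the standard one-point gradient estimator analysis of BGD \citep{FlaxmanKaMc05} on the induced loss sequence $f_t'(\theta,x)=s_t f_t(\theta,x)$, carefully tracking how the scaling factor $\scale$ enters only through the \emph{second moment} of the gradient estimator (not through its range squared). This is what yields the exponent $\alpha=\tfrac14$ rather than $\alpha=1$.

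First, I would observe that since $\E[s_t]=1$ conditional on everything BGD sees prior to round $t$, the induced loss $s_t f_t$ is an unbiased estimate of the true $L$-Lipschitz convex loss $f_t$, so the ``smoothed'' surrogate $\hat f_t(y)=\E_{v\sim B_1}[f_t(y+\delta v)]$ is unchanged in expectation and is still $L$-Lipschitz and convex. Then the BGD iterate $\theta_t^2 = y_t+\delta u_t$ (with $u_t$ uniform on the unit sphere) and its sampled loss $s_t f_t(\theta_t^2)$ produce the one-point gradient estimator $g_t' = \tfrac{d}{\delta}\,s_t f_t(\theta_t^2)\,u_t$, which satisfies $\E_t[g_t']=\nabla \hat f_t(y_t)$ just as in the unscaled case.

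Next I would bound the second moment: $\E_t[\|g_t'\|^2]\le \tfrac{d^2}{\delta^2}\E[s_t^2]\,f_t(\theta_t^2)^2 \le \tfrac{d^2\scale}{\delta^2}$ using $\E[s_t^2]\le\scale$ and $f_t\in[0,1]$. Plugging this into the standard online gradient descent regret bound for the smoothed losses $\hat f_t$ (using the projected update with step $\eta'$) and then converting back to $f_t$ via the $\delta L$ per-round smoothing bias gives an expected regret of order
\[
\frac{D^2}{\eta'} \;+\; \eta' T\cdot\frac{d^2\scale}{\delta^2} \;+\; \delta L T,
\]
where $D$ is the (constant) diameter of $\Theta$. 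The remaining step is a straightforward two-step tuning: first optimize over $\eta'$ to collapse the first two terms to $O\!\bigl(Dd\sqrt{T\scale}/\delta\bigr)$, then balance against $\delta LT$ by choosing $\delta$ of order $d^{1/2}\scale^{1/4}L^{-1/2}T^{-1/4}$, which yields regret $O\!\bigl(d\sqrt{L}\,T^{3/4}\,\scale^{1/4}\bigr)$. This matches the weak-stability bound with $\reg(T)=O(d\sqrt{L}T^{3/4})$ and $\alpha=1/4$ (noting that the knowledge of $\scale$ ahead of time, permitted by Definition~\ref{def:weak_stable}, is precisely what lets us pre-tune $\eta'$ and $\delta$ as above).

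The main obstacle, which is mostly bookkeeping rather than conceptual, is making sure that the $\scale$ factor truly enters only through $\E[s_t^2]$ and not through a worst-case range bound on $g_t'$ (which would be $d\scale/\delta$ and would blow the exponent up to $\alpha=1/2$). The standard BGD analysis is an \emph{expected} regret bound and only uses $\E\|g_t'\|^2$, so this goes through; however, one must be careful that the gradient-descent-style inequality is stated in expectation and that the martingale structure of $s_t$ (with respect to the filtration containing the history seen by BGD plus the internal randomness $u_t$) is respected. Once this is verified, the calculation above delivers the claim.
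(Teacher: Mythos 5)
Your proposal is correct and follows essentially the same route as the paper: both arguments observe that the importance-weighting scale $\scale$ enters the FKM analysis only through the second moment $\E[s_t^2]\le\scale$ of the one-point gradient estimator (so the online-gradient-descent term picks up only a $\sqrt{\scale}$ factor), and then re-balance the smoothing parameter $\delta$ to convert this into an overall $\scale^{1/4}$ scaling. The paper merely compresses the re-tuning step into ``the rest of the analysis remains the same,'' which your explicit optimization over $\eta'$ and $\delta$ carries out in detail.
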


\begin{proof}
BGD is essentially running a stochastic version of online gradient descent with gradient estimators $g_1, \ldots, g_T$.
The key component in its analysis is Lemma~3.1 of~\citep{FlaxmanKaMc05}, which gives a regret bound of order 
$\sqrt{\sum_{t=1}^T \E[\norm{g_t}^2]} \leq G\sqrt{T}$ for stochastic online gradient descent where $G > 0$ is a bound on $\norm{g_t}$.
When run with a master, BGD uses gradient estimators $s_1 g_1, \ldots, s_T g_T$. 
Since $\E[s_t] \leq \scale$, it is clear that the regret bound for the corresponding stochastic online gradient descent now becomes 
$\sqrt{\sum_{t=1}^T \E[\norm{s_t g_t}^2]} \leq G\sqrt{T\scale}$.
The rest of the analysis remains the same as~\citep{FlaxmanKaMc05}.
\end{proof}

\subsection{Multi-Armed Bandits}

\begin{lemma}\label{lem:TS}
If Thompson Sampling is run with the true prior $\calP$, then it is 
$(\frac{1}{2}, \order(\sqrt{TKH(\bq)}))$-weakly-stable.
\end{lemma}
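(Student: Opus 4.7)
The plan is to adapt the information-theoretic Bayesian regret analysis of Russo--Van Roy to the induced environment $\calE'$. Recall that under $\calE$ with the true prior, Thompson Sampling enjoys $\E[\text{Regret}] \leq \sqrt{\Gamma \cdot T \cdot H(\bq)}$, where the information ratio $\Gamma = \Delta_t^2/g_t$ (with $\Delta_t$ the expected immediate regret and $g_t$ the expected decrease in the entropy of the posterior over the optimal arm) is bounded by $K/2$ when rewards lie in $[0,1]$. The goal is to show that under $\calE'$, the analogous information ratio is only $O(\scale K)$, so that the resulting Bayesian regret is $O(\sqrt{\scale\, T K H(\bq)})$, which is exactly $\scale^{1/2}$ times $\reg(T)$ and yields weak stability with $\alpha = 1/2$.

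The first step is to observe that in $\calE'$, TS receives observations of the form $s_t c_{t,A_t}$, where $\E[s_t]=1$ and crucially $\E[s_t^2]\leq \scale$; hence the conditional second moment of each observation is bounded by $\scale$ (not $\scale^2$). The Bayesian regret $\E[f_t(A_t) - f_t(A^*)]$, computed under the true prior, is unaffected by the scaling of the feedback signal: what matters in the regret decomposition is only the marginal distribution of $A_t$ (which, by the TS update rule, equals the posterior over $A^*$ given the collected scaled data) and the true reward function. Thus $\Delta_t$ is defined exactly as in the original analysis, and the only modification is in the information-gain side of the ratio.

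The second, and main, step is to re-derive the information-ratio bound for scaled observations. In the Russo--Van Roy argument, one uses Cauchy--Schwarz together with a Pinsker-type inequality applied to the conditional distribution of the observation given $A^*=a$, in order to bound $(\E[O\mid A^*=a] - \E[O])^2$ by a constant times the mutual information $I(A^*;O \mid A_t)$. For rewards in $[0,1]$ this constant is $O(1)$; for rewards merely bounded in $[0,\scale]$ the naive Pinsker bound would introduce a factor of $\scale^2$, which is fatal. The fix is to use a variance-sensitive bound in place of the range-based Pinsker inequality, exploiting $\E[O^2\mid A^*=a]\leq \scale$: this replaces $\scale^2$ by $\scale$ in the information-ratio computation and gives $\Gamma \leq O(\scale K)$.

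The main obstacle is therefore this Pinsker-replacement step: one needs a divergence inequality of the form $(\E[O\mid Y] - \E[O])^2 \lesssim \var(O) \cdot I(O;Y)$ (or a suitable sub-exponential moment variant tailored to the heavy-tailed but low-variance importance-weighted estimator), so that the scaling enters only through the second moment $\E[s_t^2]\leq \scale$ rather than through the range. Once this is in hand, summing over $t$ and using $\sum_t g_t \leq H(\bq)$ yields
\[
\E[\text{Regret under }\calE'] \leq \sqrt{O(\scale K)\cdot T \cdot H(\bq)} = \scale^{1/2}\cdot O(\sqrt{TKH(\bq)}),
\]
completing the proof of $(1/2, O(\sqrt{TKH(\bq)}))$-weak-stability.
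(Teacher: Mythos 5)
Your high-level plan is the same as the paper's: run the Russo--Van Roy information-ratio analysis under $\calE'$, note that the per-round Bayesian regret $\Delta_t$ is unchanged because $\E[s_t]=1$, and argue that the information side of the ratio degrades by a factor $\scale$ rather than $\scale^2$. However, you explicitly leave the one step that actually carries the whole lemma --- obtaining $\scale$ instead of $\scale^2$ --- as an ``obstacle,'' and the fix you sketch is not sound as stated. A generic inequality of the form $(\E[O\mid Y]-\E[O])^2 \lesssim \var(O)\cdot I(O;Y)$ does not hold: variance-sensitive transportation inequalities require a sub-Gaussian (or at least sub-exponential) moment generating function, and the importance-weighted observation, which equals $c/p_t$ with probability $p_t$, has second moment $\leq \scale$ but range $\scale$, so it is only $O(\scale^2)$-sub-Gaussian; Bernstein-type variants leave a residual term of order $\scale^2\cdot\mathrm{KL}^2$ that you would still need to control. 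So the proposal as written has a genuine gap at its central step.

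The paper avoids this issue entirely by exploiting the product structure of the perturbation rather than a moment bound on the observation. Since $p_t$ is determined by the history, $s_t$ is conditionally independent of $(\bc_t,\theta^*)$ with $\E_t[s_t]=1$, so the key quantity
\[
v_{t,\theta} \;=\; \sum_{j=1}^K q_{t,j}\bigl(\E_t[c'_{t,\theta}\mid\theta^*=j]-\E_t[c'_{t,\theta}]\bigr)^2
\]
is bounded by $\scale$ times the same quantity for the \emph{unscaled} losses $c_{t,\theta}\in[0,1]$, to which the original range-based Pinsker argument applies verbatim; the cumulative information budget is still $H(\bq)$, and one concludes $\sum_{t}\E[v_{t,\theta_t}]\le \tfrac{\scale}{2}H(\bq)$ and hence the claimed $\sqrt{\scale}\cdot O(\sqrt{TKH(\bq)})$ regret. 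In other words, the factor $\scale$ enters through the structure of the importance-weighted feedback (the observation is informative only with probability $p_t\ge 1/\scale$), not through a new divergence inequality. You also omit the small but necessary modeling point that the master must pass $s_t$ to Thompson Sampling so that the posterior update is well defined (no update when $s_t=0$). To repair your write-up, replace the hoped-for variance-sensitive Pinsker step with the conditional-independence reduction above.
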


\begin{proof}
One slight modification here is that the master needs to pass $s_t$ to the Thompson Sampling (TS) strategy in order to 
update the posterior distribution $\bu_t$ of the loss $\bc_t$ (and it is clear that when $s_t=0$ no update happens).
Let $\bc'_t = s_t \bc_t$, $\bq_t$ be the posterior distribution of the optimal arm,
and $v_{t,\theta} = \sum_{j=1}^K q_{t, j} \left (\E_{t}\left[c'_{t,\theta} | \theta^* = j \right] 
-  \E_{t}\left[c'_{t,\theta} \right] \right)^2$ where $\E_t$ denotes the expectation conditional on everything up to time $t$ and $\theta^*$ denotes the optimal arm.
One key modification of the analysis of~\citep{RussoVa14} is to realize that 
\[
v_{t,\theta} \leq \scale \sum_{j=1}^K q_{t, j} \left (\E_{c_t}\left[c_{t,\theta} | \theta^* = j \right] 
-  \E_{c_t}\left[c_{t,\theta} \right] \right)^2,
\]
which is then used to show that (with $\theta_t$ being the output of TS)
\[
\sum_{t=1}^T \E\left[v_{t, \theta_t} \right] \leq \frac{\scale}{2} H(\bq)
\]
by the exact same argument of the original analysis.
The rest of the analysis remains the same.
\end{proof}

\end{document}